\DeclarePairedDelimiter{\ceil}{\lceil}{\rceil}
\DeclarePairedDelimiter{\floor}{\lfloor}{\rfloor}
\DeclareMathOperator{\Var}{Var}
\DeclareMathOperator{\rank}{rank}
\let\baraccent=\= %
\renewcommand{\=}[1]{\stackrel{#1}{=}} %
\providecommand{\BB}{\mathbb{B}}
\providecommand{\cE}{\mathcal{E}}
\providecommand{\cV}{\mathcal{V}}
\renewcommand{\P}{\mathsf{P}}
\mathchardef\mhyphen="2D %
\providecommand{\sm}{\setminus}
\providecommand{\cH}{\mathcal{H}}
\newcommand{\interior}[1]{%
  {\kern0pt#1}^{\mathrm{o}}%
}
\newtheorem{theorem}{Theorem}[section]
\newtheorem{lemma}[theorem]{Lemma}
\newtheorem{claim}[theorem]{Claim}
\newtheorem{proposition}[theorem]{Proposition}
\newtheorem{definition}[theorem]{Definition}
\newtheorem{remark}[theorem]{Remark}
\colorlet{linkequation}{blue}
\renewcommand{\P}{\mathbb{P}}
\newcommand{\E}{\mathbb{E}}
\newcommand{\cN}{\mathcal{N}}
\newcommand{\R}{\mathbb{R}}
\renewcommand{\S}{\mathbb{S}}
\newcommand{\<}{\langle}
\renewcommand{\>}{\rangle}
\newcommand{\diag}{\text{diag}}
\newcommand{\tr}{\text{tr}}
\newtheoremstyle{myremark} %
    {\topsep}                    %
    {\topsep}                    %
    {\rm}                        %
    {}                           %
    {\bf}                        %
    {.}                          %
    {.5em}                       %
    {}  %
\DeclareSymbolFont{rsfs}{U}{rsfs}{m}{n}
\DeclareSymbolFontAlphabet{\mathscrsfs}{rsfs}
\def\cV{{\mathcal V}}
\def\cE{{\mathcal E}}
\def\cV{{\mathcal V}}
\def\cH{{\mathcal H}}
\def\cE{{\mathcal E}}
\def\cE{{\mathcal E}}
\def\cV{{\mathcal V}}
\def\diag{{\rm diag}}
\crefname{claim}{claim}{claims}
\crefname{fact}{fact}{facts}
\newtheorem{informaltheorem}[theorem]{Informal Theorem}
\numberwithin{equation}{section}
\DeclareMathOperator\cov{cov}
\newenvironment{fminipage}%
  {\begin{Sbox}\begin{minipage}}%
  {\end{minipage}\end{Sbox}\fbox{\TheSbox}}
\author{%
  Enric Boix-Adsera %
    \\
  MIT Mathematics and Harvard CMSA\\
  \texttt{eboix@mit.edu} \\
  \and
  Philippe Rigollet \\
  MIT Mathematics \\
  \texttt{rigollet@mit.edu}
}
\title{The power of fine-grained experts: Granularity boosts expressivity in Mixture of Experts}
\begin{document}

\maketitle

\begin{abstract}
Mixture-of-Experts (MoE) layers are increasingly central to frontier model architectures. By selectively activating parameters, they reduce computational cost while scaling total parameter count. This paper investigates the impact of the number of active experts, termed \emph{granularity}, comparing architectures with many (e.g., 8 per layer in DeepSeek) to those with fewer (e.g., 1 per layer in Llama-4 models). We prove an exponential separation in network expressivity based on this design parameter, suggesting that models benefit from higher granularity. Experimental results corroborate our theoretical findings and illustrate this separation.
\end{abstract}

\section{Introduction}

Mixture-of-experts (MoE) layers \cite{jacobs1991adaptive,eigen2013learning,shazeer2017outrageously} are emerging as an increasingly important component in the design of frontier architectures \cite{liu2024deepseek,meta2025llama4,jiang2024mixtral,mosaic2024introducing,qwen2024qwen,snowflake2024snowflake}.
The main advantage of MoE layers is that they
enable scaling the total number of parameters in the network while keeping computational costs manageable. This is achieved by having only a small fraction of the layers' parameters activate on a particular input. Thus, models can have a large number of \textit{total parameters} (unlocking the benefits of scaling laws \cite{kaplan2020scaling}), while simultaneously having a small number of \textit{active parameters} (enabling low computational cost).

DeepSeek-V3 convincingly demonstrates how MoE layers can dramatically reduce computational costs in large language models. Despite having 671B total parameters, DeepSeek-V3 activates only 37B parameters (approximately 5.5\%) per token \cite{liu2024deepseek}, resulting in computational requirements substantially lower than for comparable dense models. Similarly, Meta's recently unveiled Llama-4 Maverick model, with 400B parameters \cite{meta2025llama4}, achieves 4.3\% activation sparsity because 96\% of its parameters lie in MoE layers. As modern architectures increasingly adopt MoE techniques, understanding optimal design choices for MoE layers becomes increasingly relevant for designing efficient models. Key decisions include determining the number of experts, the size of each expert, and the routing mechanism.

With this background in mind, we seek to address the following question:
\begin{center}
\textit{How do specific MoE design choices influence model expressivity?}
\end{center}

In this paper, we focus on a central design choice -- the \textbf{granularity} of the MoE layer, which is the number of experts that activate on a token \cite{krajewski2024scaling}. 

The granularity of an MoE should not be confused with its sparsity, which is the ratio of the number of active experts to total experts; see Table~\ref{tab:active-experts-table}. The sparsity of an MoE is another hyperparameter of significant interest, which deserves separate study. Indeed, the granularity and sparsity parameters can be decoupled (e.g. an MoE layer in which 4 out of 16 experts are active has the same sparsity as an MoE layer in which 1 out of 4 experts are active, but they have different granularities).

Despite the centrality of the granularity parameter, no consensus exists on the optimal number of active experts to employ, with open-source frontier systems adopting widely varying configurations as shown in Table~\ref{tab:active-experts-table}.

\begin{table}[h]
\begin{center}
\begin{tabular}{l@{}r|c|c|c}
\textbf{Architecture} & & \textbf{\# active out of \# total experts} & \textbf{Granularity} & \textbf{Sparsity} \\
\hline 
DeepSeek-V3 & \cite{liu2024deepseek} & 8 out of 256 & 8 & 3.1\% \\
Qwen1.5-MoE-A2.7B & \cite{qwen2024qwen}
& 4 out of 64 & 4 & 6.2\% \\
DBRX & \cite{mosaic2024introducing}
& 4 out of 16 & 4 & 25\% \\
Snowflake Arctic & \cite{snowflake2024snowflake}
& 2 out of 128 & 2 & 1.6\% \\
Google GLaM & \cite{du2022glam}
& 2 out of 64 & 2 & 3.1\% \\
Mixtral  8x7B and 8x22B & \ \cite{jiang2024mixtral} %
& 2 out of 8 & 2 & 25\% \\
Llama-4 Maverick & \cite{meta2025llama4} %
& 1 out of 128 & 1 & 0.8\% \\
Llama-4 Scout & \cite{meta2025llama4}
& 1 out of 16 & 1 & 6.2\% \\

\end{tabular}
\end{center}
\caption{MoE hyperparameter choices in various frontier architectures. 
Granularity can be decoupled from sparsity of active parameters. For instance, MoE layers in Qwen1.5-MoE and Llama-4 Scout, have sparsity of $1/16$ because $1/16$ of the parameters are active on any token for both models. However, Qwen1.5-MoE has a granularity of 4, whereas Llama-4 Scout has a granularity of~1.}\label{tab:active-experts-table}
\end{table}

The lack of consensus in Table~\ref{tab:active-experts-table} highlights the ambiguity surrounding optimal MoE layer design. Specifically, it remains unclear whether large granularity architectures, exemplified by DeepSeek-V3, or small granularity architectures, as seen in the Llama-4 suite, are preferable. While granularity, sparsity, and parameter count all influence model performance, this work aims to isolate the specific impact of granularity while controlling for other factors.

\paragraph{Our contribution} The main insight of this paper is that increasing the granularity of an MoE improves its expressivity exponentially, even while keeping the sparsity of the MoE unchanged. Thus, our result suggests that future frontier architectures should be designed with larger granularity, so as to reap the benefit of this extra expressivity with no significant change to the number of total and active parameters in the model. Our result agrees with the empirical scaling laws observed in \cite{krajewski2024scaling}, which show that higher granularity indeed leads to lower loss in trained MoE models. We discuss other considerations on increasing the granularity in Section~\ref{sec:related-works}.

In order to describe our result in more detail, let us recall the MoE architecture \cite{shazeer2017outrageously}. A $(m,k)$-MoE layer $f$ has granularity $k$ and $m$ experts. It consists of ``expert'' networks $E_1,\ldots,E_m$, and a ``gating'' network $G$ that outputs a $k$-sparse vector $G(x) \in \R^m$ on input token $x$. The MoE layer outputs
\begin{align*}
f(x) = \sum_{i=1}^m G(x)_i E_i(x)\,.
\end{align*}
We study the standard setting where the experts are two-layer fully-connected networks, and the gating network is a linear routing function activating on the top-$k$ experts; see Section~\ref{sec:notation} for precise definitions. We prove our main result for constant, linear, and ReLU activation functions, when the input distribution $\mu$ is either standard Gaussian or uniform over the unit ball.

Our main theorem identifies the number of possible configurations of experts $\binom{m}{k}$ as a key combinatorial quantity controlling the expressivity of an MoE layer. Below, we state the theorem informally and in a slightly weaker form than the full extent of our results, for the sake of exposition.

\begin{informaltheorem}[Expressivity benefits of higher granularity]\label{thm:informal-main}
There are constants $C,c > 0$ such that the following holds. Suppose that $m \geq Ck$ and that
$$\binom{m'}{k'} < c \binom{m}{k}^{0.99}\,.$$
Then there is a $(m,k)$-MoE $f$ that cannot be approximated by any $(m',k')$-MoEs with the same number of active parameters. Namely, for any such $(m',k')$-MoE $f'$, it holds that
\begin{align*}
\E_{x \sim \mu}\|f(x)-f'(x)\|^2 > c\E_{x \sim \mu}\|f(x)\|^2\,.
\end{align*}
\end{informaltheorem}

If the granularity $k$ is held constant and $m$ grows, then the quantity $\binom{m}{k}$  grows on the order of $\Theta(m^k)$, which scales exponentially with the granularity. This means that the effects of the granularity on expressivity are evident at even relatively small granularities such as those in Table~\ref{tab:active-experts-table}. For example, for DeepSeek-V3, the number of possible configurations of active experts is $\binom{256}{8} \geq 4 \times 10^{14}$. Thus, Theorem~\ref{thm:informal-main} heuristically suggests\footnote{We write ``heuristically suggests'' because the constant $c$ in the theorem is not explicitly computed.} that an MoE layer with an equal number of active parameters, but granularity 1, would need on the order of $10^{14}$ experts to approximate an MoE layer of DeepSeek-V3.

The result of this theorem appears intuitively evident in retrospect: higher granularity allows for more parameter-efficient models because the same parameters can be reused by different configurations of experts. Figure~\ref{fig:intuition} provides a stylized example to illustrate this intuition.

\paragraph{Proof ingredients} Our full results for constant, linear, and ReLU activation functions are formally stated as Theorems~\ref{thm:sep-constant}, \ref{thm:sep-linear} and \ref{thm:sep-relu}, respectively. Proving these theorems requires developing several techniques that we expect to be useful to the future study of the expressivity of MoEs.

The first main technical hurdle in the construction of $f$ is the creation of a linear routing gating network $G$ that partitions the input space into $R = \binom{m}{k}$ regions of roughly balanced probability mass, $U_1,\ldots,U_R$. Each region is the subset of input space on which one of the $\binom{m}{k}$ possible configurations of experts is active. We construct this gating network with a randomized construction analyzed via a second-moment argument.

Next, we must construct experts such that sufficiently distinct functions are computed in each of the $U_j$ regions. This is achieved using a random construction reminiscent of those arising in optimal packing and coding theory. A central technical lemma that we use to analyze the construction is that when the input distribution $\mu$ is Gaussian (or uniform over the unit ball) then the input distribution conditioned on lying in any large-probability subset $V$ must have a high-rank covariance matrix.

\begin{figure}[t]
\centering
\includegraphics[width=0.5\linewidth]{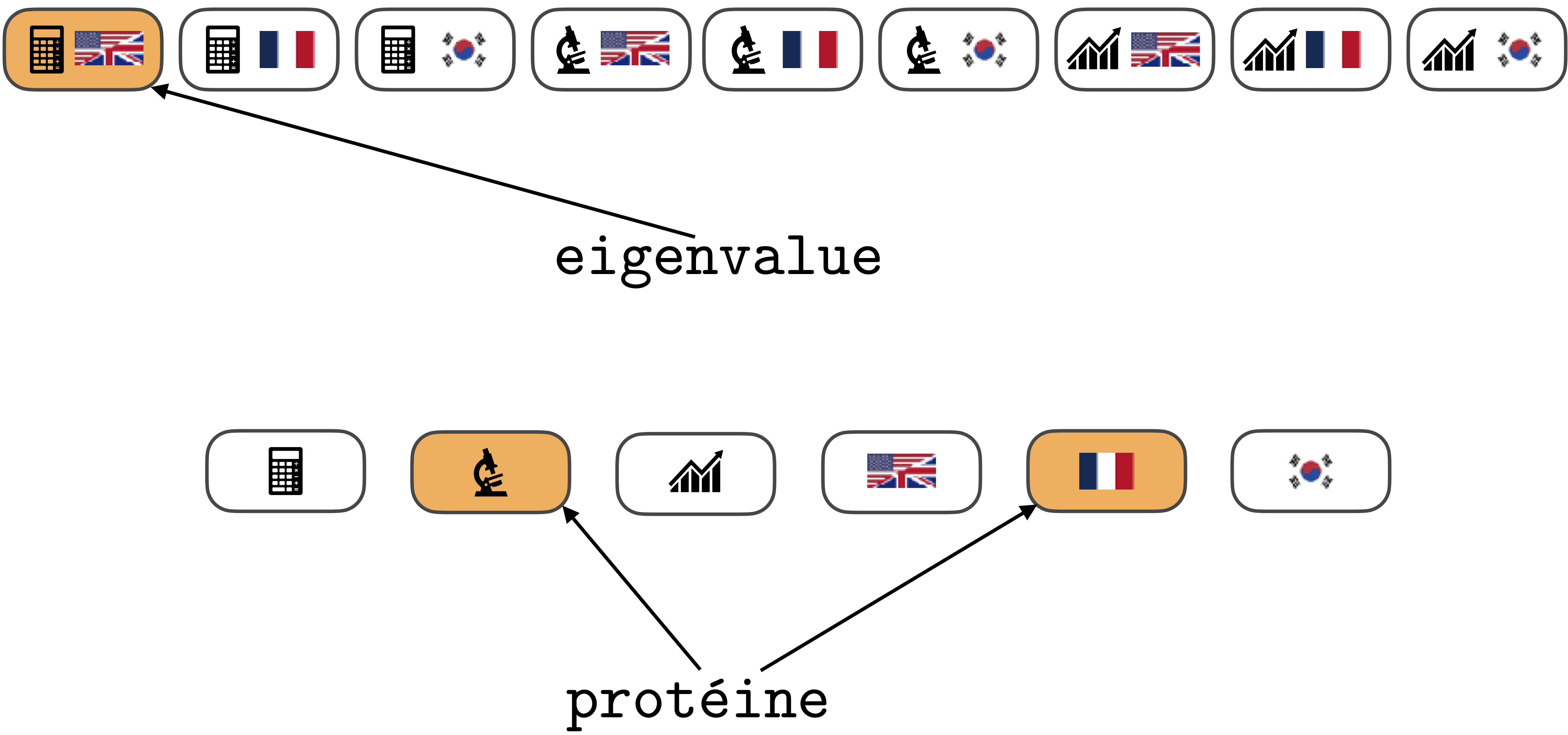}
\caption{An intuitive picture to keep in mind when interpreting Theorem~\ref{thm:informal-main}. Imagine expert models that have just enough parameters to answer questions on one of three different topics (Math, Biology, Business) in one of three different languages (English, French, Korean). In order to support all combinations of topics and languages, an MoE model with granularity 1 requires 9 experts -- one for each \{topic, language\} combination. On the other hand, an MoE model with granularity 2 can support all combinations with only 6 experts (3 for topics, and 3 for languages), since higher granularity allows for parameter reuse and thereby for more parameter-efficient models.}\label{fig:intuition}
\end{figure}

\paragraph{Experimental support} We support our theoretical results with experimental evidence in Section~\ref{sec:experiments} illustrating that the expressive benefits of granularity occur at scales relevant to practice.

\subsection{Discussion and related work}\label{sec:related-works}

Our work builds directly on empirical findings demonstrating the benefits of increased granularity in Mixture of Experts (MoE) architectures. Liu et al. \cite{liu2023towards} empirically establish that higher granularity MoEs achieve lower training loss while maintaining the same parameter count. These findings are further validated and extended by Krajewski et al. \cite{krajewski2024scaling}, who quantify this relationship through detailed scaling laws correlating granularity levels with model performance.
Most significantly, Dai et al. \cite{dai2024deepseekmoe} implemented these insights in the DeepSeek architecture, explicitly designing for increased granularity based on their empirical observations of improved performance. Their implementation showed substantial gains in practice, confirming that granularity increases translate to real-world improvements in large-scale models. Our theoretical analysis provides a potential explanation for these consistent empirical results by demonstrating that fine-grained MoE architectures have more expressive power.

Nevertheless, it is known that higher granularity can lead to higher routing costs and increase the wall time of training and inference \cite{krajewski2024scaling,fedus2022switch}. So with current routing schemes the granularity cannot in practice be taken arbitrarily large. Thus, our results suggest that new routing schemes should be developed that allow scaling the granularity (such as in \cite{he2024mixture}).

Beyond the granularity parameter, there has been work fitting scaling laws for MoEs in terms of floating point operations, parameters and sparsity \cite{clark2022unified,yun2024toward,abnar2025parameters}. Finally, it has also been shown \cite{jelassi2024mixture} that despite their advantages MoE layers can underperform dense MLP layers on certain ``reasoning'' tasks. Nevertheless, the sparsity design principle in the MoE architecture seems fundamental: indeed, it has a close analogue in the human brain since there are localized regions in the human brain that activate sparsely for specialized functionality
\cite{saxe2006divide,kanwisher2010functional,nieto2012subject,kean2025intuitive}.

\section{Notation and preliminaries}
\label{sec:notation}

\paragraph{Notation} For any integer $n \geq 0$, let $[n] = \{1,\ldots,n\}$. We write $\binom{[n]}{k} := \{S \subseteq [n] : |S| = k\}$. Denote the unit ball in $d$ dimensions by $\BB = \BB_d := \{x : \|x\|_2 \leq 1\} \subseteq \R^d$.  For any two sets $S,S'$ we denote their symmetric difference by $S\Delta S' := (S \cup S') \sm (S \cap S')$. Additionally, for a measure $\mu$ and a measurable set $U$ of nonzero measure, we denote $\mu|_U$ to be the probability measure $\mu$ conditioned on $U$. In other words, $\mu|_{U}(A) = \mu(A \cap U) / \mu(U)$.

\paragraph{Mixture of Experts architecture} We focus on linearly-routed MoEs with fully-connected experts, which is the most common architecture in practice. A $(m,k,w,d)$-MoE model $f$ is parametrized by routing vectors $r_1,\ldots,r_m \in \R^d$ and weight matrices $A_1,\ldots,A_m,B_1,\ldots,B_m \in \R^{d \times w}$.

The routing scheme activates the $k$ experts whose routing vectors have the highest inner product with the input. Namely, for each configuration $S \in \binom{[m]}{k}$ of active experts, the routing scheme defines the subset $U_S$ of tokens on which those experts are active
\begin{align}\label{eq:routing-regions}
U_S = \{x \in \R^d : \<x,r_i\> > \<x,r_j\> \mbox{ for all } i \in S, j \in [m] \sm S\}\,,
\end{align}
and the MoE model $f : \R^d \to \R^d$ computes the following
\begin{align}\label{eq:multiple-active-moe}
f(x) = \sum_{j \in S} A_j \sigma(B_j^{\top} x)\,, \mbox{ if } x \in U_S\,,
\end{align}
for an activation function $\sigma : \R \to \R$ applied elementwise. Note that \eqref{eq:multiple-active-moe} is well-defined for any $x \in \cup_{S} U_S$, since the sets $U_S$ are disjoint. Furthermore, if all routing vectors are distinct, $\cup_S U_S$ covers all of $\R^d$ except for a zero-Lebesgue-measure subset, so $f$ is defined almost everywhere.

\begin{remark}[Variations on linear routing architecture]
In another important variant of the linear routing architecture, the active experts are weighted by the softmax of the inner products $\<x,r_i\>$. In this work, we consider only the simpler linear routing variant in \eqref{eq:routing-regions} and \eqref{eq:multiple-active-moe} with equal weights.
\end{remark}
\begin{remark}[Number of active and total parameters] In a $(m,k,w,d)$-MoE model, the number of total and active MoE parameters on any input are given respectively by
\begin{align}\label{eq:active-params}
\quad n_{\rm total} = 2mwd\quad \mbox{ and } n_{\rm active} = 2kwd \,.
\end{align}
Additionally, there are $md$ parameters used to form the routing vectors that are always active, but in the typical settings that we consider with $kw \gg m$ these are generally a smaller quantity so we ignore them for simplicity and consider the sparsity of the MoE layer to be roughly $k/m$.
\end{remark}

\section{Expressivity benefits of granularity in MoE layers}
We prove a separation between MoEs with $k$ active experts (out of $m$ total experts) and MoEs with $k'$ active experts (out of $m'$ total experts). For a fair comparison between these architectures, we consider the regime where both architectures have {\bf the same number of total active parameters}. Roughly speaking, we prove that, when $\binom{m}{k} \gg \binom{m'}{k'}$, the former architecture is strictly more expressive -- namely, there are functions that the former architecture can compute that the latter architecture cannot approximate to better than a constant $L^2$ error.

To build intuition, we establish separation for three activation functions: $\sigma_{\mathrm{const}}(t) = 1$, $\sigma_{\mathrm{id}}(t) = t$ and $\sigma_{\mathrm{relu}}(t) = \max(0,t)$. The proofs for the different activation functions build off of each other sequentially, and so they are presented in order of increasing complexity in the sections below. 

We begin with the constant activation function where the source of separation is most intuitive: MoE layers with such activations compute piecewise-constant functions with a number of pieces given by the number of configurations $\binom{m}{k}$.

\subsection{Benefits of granularity for constant activation function $\sigma(t) = 1$}

We compare the expressivity of $(m,k,w,d)$-MoEs to the expressivity of $(m',k',w',d)$-MoEs. We first prove the following theorem, for constant activation function $\sigma(t) = 1$.

\begin{theorem}[Benefits of granularity; constant activation]\label{thm:sep-constant}
There are universal constants $C,c > 0$ such that the following holds for $\sigma(t) = 1$. Suppose that $\mu$ is a rotationally-invariant probability distribution, that $d \geq Ck (\log m)^2$, that $m \geq 2k$ and that
\begin{align*}
\binom{m'}{k'} < c\binom{m}{k}^{0.99}\,.
\end{align*}Then there is a $(m,k,w,d)$-MoE model $f$ such that for all $(m',k',w',d')$-MoE models $f'$ we have
$$\E_{x \sim \mu}\|f(x)-f'(x)\|^2 > c \E_{x \sim \mu}\|f(x)\|^2.$$
\end{theorem}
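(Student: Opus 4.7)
The plan is to construct $f$ by a two-stage random construction and argue, by the probabilistic method, that no $(m',k',w',d)$-MoE $f'$ approximates it to better than constant relative $L^2$ error. Write $R:=\binom{m}{k}$ and $R':=\binom{m'}{k'}$, so the hypothesis reads $R'\leq cR^{0.99}$. First, I would sample routing vectors $r_1,\ldots,r_m$ i.i.d.\ uniformly from the unit sphere in $\R^d$; the assumption $d\geq Ck(\log m)^2$ together with a second-moment argument, which uses rotational invariance of $\mu$ to reduce mass computations to the geometry of the $r_j$'s, should ensure that with high probability every region $U_S$ from \eqref{eq:routing-regions} satisfies $\mu(U_S)\geq c/R$, i.e.\ the partition is balanced. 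Since $\sigma\equiv 1$ the matrices $B_j$ are irrelevant, so I would take $A_j$'s so that $a_j:=A_j\bones$ are i.i.d.\ $\cN(0,I_d)$. Then on $U_S$ the function $f$ is the constant $v_S:=\sum_{j\in S}a_j$, and Gaussian concentration gives $\EE_x\|f\|^2=\sum_S\mu(U_S)\|v_S\|^2\geq c\,kd$ with high probability over the $a_j$.

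For any $(m',k',w',d)$-MoE $f'$ with routing regions $\{V_t\}_{t=1}^{R'}$, $f'$ is piecewise constant on the $V_t$'s. Replacing its output values by the conditional expectations $\bar f_t:=\EE[f\mid X\in V_t]=\sum_S\alpha_{S,t}v_S$ with $\alpha_{S,t}:=\mu(U_S\cap V_t)/\mu(V_t)$ only helps $f'$, so
\[\EE_x\|f-f'\|^2 \;\geq\; \EE_x\|f\|^2 \;-\; \sum_{t}\mu(V_t)\|\bar f_t\|^2,\]
reducing the theorem to upper-bounding the ``explained variance'' on the right. Expanding the square and using $\EE_a\<v_S,v_{S'}\>=|S\cap S'|\,d$ together with Gaussian concentration of the fluctuations yields (up to lower-order terms)
\[\sum_t\mu(V_t)\|\bar f_t\|^2 \;\lesssim\; d\sum_{j=1}^m\sum_t\frac{\mu(V_t\cap W_j)^2}{\mu(V_t)} \;=\; d\sum_{j=1}^m\EE\!\left[\EE[Y_j\mid\tau(X)]^2\right],\]
where $Y_j(x):=\bones[j\in\mathrm{top}\text{-}k(x)]$, $W_j:=\{Y_j=1\}$, and $\tau(X)=t$ iff $X\in V_t$. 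The trivial upper bound on the right-hand side is $dk\approx\EE\|f\|^2$, attained only when $\tau$ refines the natural partition $\{U_S\}$, which requires $R'\geq R$.

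The core combinatorial claim is therefore: for any $R'$-valued $\tau$ with $R'\leq cR^{0.99}$, $\sum_j\EE[\EE[Y_j\mid\tau]^2]\leq (1-c')\,k$. The plan is to first reduce to the aligned case (each $V_t$ is a union of $U_S$'s) by convexity of $x\mapsto x^2/q$, and then directly quantify how much of the $Y$-variance a coarsening of the uniform distribution on $\binom{[m]}{k}$ into only $R'$ groups can explain, obtaining a constant-fraction gap when $R'\ll R$. To turn the pointwise bound into one uniform over $f'$, I would union-bound over an $\epsilon$-net of routing configurations $(r'_j)\in(\R^d)^{m'}$, using Lipschitz continuity of $\bar f_t$ in the $r'_j$ and the standard hyperplane-arrangement bound on the number of combinatorially distinct linear-routing partitions. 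The polynomial factor from this net, together with the concentration slack for $\<v_S,v_{S'}\>$, is precisely what forces the $0.99$ exponent rather than a clean $R'<cR$: the $R^{0.01}$ slack absorbs the $\poly(m,k,m',k',d)$ factors. Combining everything, with positive probability over the construction every admissible $f'$ satisfies $\EE_x\|f-f'\|^2 \geq c\,\EE_x\|f\|^2$.

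The main obstacle I expect is the combinatorial claim above: the adversary's partition $\{V_t\}$ can cross-cut $\{U_S\}$ arbitrarily, so beyond the convexity reduction one has to quantitatively lower-bound the $Y$-variance loss in the aligned case, ensuring a constant-fraction gap (not something vanishing in $k$, $m$, or $R$). This is what drives the $R^{0.99}$ threshold in the theorem statement.
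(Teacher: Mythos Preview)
Your construction (random routing vectors analyzed by a second-moment argument, Gaussian expert vectors) parallels the paper's. One caveat: the paper's routing lemma only shows that a \emph{constant fraction} of the regions $U_S$ have mass $\geq 1/(2R)$, not all of them---this is what the second-moment bound actually buys, and your ``every region'' claim would require more.

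The substantive divergence, and the gap, is in the inapproximability step. The paper fixes the expert vectors deterministically so that $\|u_S-u_{S'}\|^2 \geq |S\Delta S'|/(4k)$ for \emph{all} pairs (this holds w.h.p.\ for the random construction by a union bound over $R^2$ pairs). It then runs a purely deterministic argument: for each piece $V_i$ of $f'$, it sets up a fractional matching (a linear program) on the graph whose vertices are the moderate-mass $S$'s and whose edges are the pairs with $|S\Delta S'|\geq c'k$. Every vertex has at most $\binom{m}{\lfloor c'k\rfloor}\binom{k}{\lfloor c'k\rfloor}$ non-neighbors, so at optimality all but that many vertex constraints saturate; combined with the separation property and Young's inequality this yields $\E\|f-f'\|^2\gtrsim c'$ directly. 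The $0.99$ exponent arises from the purely combinatorial inequality $\binom{m}{\lfloor c'k\rfloor}\binom{k}{\lfloor c'k\rfloor}\leq\binom{m}{k}^{0.01}$ for small $c'$---not from any concentration slack or net.

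Your $\epsilon$-net over $f'$ is in fact unnecessary: once $|\langle v_S,v_{S'}\rangle-|S\cap S'|\,d|\leq\epsilon$ holds for all pairs $(S,S')$ (a union bound over $R^2$ events, which is fine under $d\gtrsim k(\log m)^2$), the explained-variance bound is automatically uniform over all partitions because $\sum_{S,S'}\beta_{S,S'}=\sum_t\mu(V_t)(\sum_S\alpha_{S,t})^2=1$ regardless of $\{V_t\}$. So your attribution of the $0.99$ slack to the net is incorrect. The real gap is the ``core combinatorial claim'' itself. Your proposed convexity reduction to the aligned case goes the wrong direction: refining $\{V_t\}$ to align with $\{U_S\}$ can only \emph{increase} $\sum_t\mu(V_t)\|\bar f_t\|^2$ (Jensen for the jointly convex $(a,b)\mapsto a^2/b$), so you cannot pass from a general $R'$-piece partition to an aligned $R'$-piece one this way. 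Proving the claim---even in the aligned case---requires exactly the counting of $k$-sets at small pairwise symmetric difference that drives the paper's LP argument; you have correctly located the bottleneck but not supplied a mechanism for it.
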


This theorem can be strengthened to a result in $L^1$ norm rather than $L^2$ using the same techniques but we omit this statement to maintain consistency with our subsequent Theorems~\ref{thm:sep-linear} and \ref{thm:sep-relu}, where the inapproximability proved is in $L^2$.

We provide an overview of the proof below. The construction of routing vectors $r_1,\ldots,r_m$ remains consistent throughout this and subsequent sections. The essential property we must ensure is that these routing vectors partition the input space into regions of approximately equal probability, where each region corresponds to a distinct subset $S$ of active experts.

\begin{lemma}[Routing vectors]\label{lem:maintext-1}
There is a universal constant $C > 0$ such that for $d \geq Ck(\log m)^2$ and any rotationally-invariant probability measure $\mu$, the following holds. There exist routing vectors $r_1,\ldots,r_m \in \R^d$ defining regions $U_S$ by \eqref{eq:routing-regions} such that
\begin{align}\label{ineq:routing-vector-condition}
|\{S : \mu(U_S) \geq \frac{1}{2 \binom{m}{k}}\}| \geq \frac{1}{9} \binom{m}{k}\,.
\end{align}
\end{lemma}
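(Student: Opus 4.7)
The plan is a probabilistic construction: draw the routing vectors $r_1,\ldots,r_m$ i.i.d.\ uniformly on the unit sphere $S^{d-1}$, and show that the resulting partition satisfies \eqref{ineq:routing-vector-condition} with positive probability. Since the top-$k$ ranking of $\{\<x,r_i\>\}_i$ depends only on the direction $x/\|x\|$, rotational invariance of $\mu$ lets us assume without loss of generality that $\mu = \text{Unif}(S^{d-1})$. Exchangeability of the $r_i$'s then forces the distribution of $\mu(U_S)$ to be the same for every $S \in \binom{[m]}{k}$, and since the regions $U_S$ partition $\R^d$ up to a null set we have $\sum_S \mu(U_S)=1$ almost surely, so $\E_r[\mu(U_S)] = 1/\binom{m}{k}$ for every $S$.

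The heart of the argument is a second-moment estimate: I aim to show
\begin{equation*}
\E_r\bigl[\mu(U_{[k]})^2\bigr] \;=\; \E_{x,x'\sim \mu}\,\P_r\bigl[x,x' \in U_{[k]}\bigr] \;\leq\; \frac{C_1}{\binom{m}{k}^2}
\end{equation*}
for some $C_1 < 9/4$. Conditioning on $\rho := \<x,x'\>$ and using rotational invariance to place $x,x'$ in a two-dimensional subspace, the pairs $(\<x,r_i\>,\<x',r_i\>)$ become i.i.d.\ and distributed, to leading order in $d$, like $(a_i,\rho a_i + \sqrt{1-\rho^2}\,b_i)$ with $a_i,b_i$ essentially i.i.d.\ standard Gaussians; we must bound the probability that both sequences have their top-$k$ indices equal to $[k]$. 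For small $|\rho|$, which is the typical case since under $\text{Unif}(S^{d-1})^{\otimes 2}$ the correlation concentrates at scale $1/\sqrt{d}$, the two sequences are nearly independent and the probability is close to $1/\binom{m}{k}^2$; for larger $|\rho|$ the coincidence probability can grow, but the density of such $\rho$ decays as $(1-\rho^2)^{(d-3)/2}$. The hypothesis $d \geq C k (\log m)^2$ is calibrated so that the heavy-correlation tail contributes at most a constant-factor correction.

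To conclude, let $A = \{S : \mu(U_S) \geq \tfrac{1}{2\binom{m}{k}}\}$ and $N = |A|$. Since the below-threshold $S$'s contribute at most $\binom{m}{k}\cdot\tfrac{1}{2\binom{m}{k}}=1/2$ to the partition sum, we obtain $\sum_{S\in A}\mu(U_S) \geq 1/2$. Cauchy--Schwarz then gives the pointwise bound $N \geq 1/\bigl(4\sum_S \mu(U_S)^2\bigr)$; taking expectations and using Jensen's inequality (convexity of $x \mapsto 1/x$) together with the second-moment bound yields
\begin{equation*}
\E_r[N] \;\geq\; \frac{1}{4\,\E_r\!\left[\sum_S \mu(U_S)^2\right]} \;=\; \frac{1}{4\binom{m}{k}\,\E_r[\mu(U_{[k]})^2]} \;\geq\; \frac{\binom{m}{k}}{4C_1} \;\geq\; \frac{\binom{m}{k}}{9}.
\end{equation*}
Since $N$ is bounded above by $\binom{m}{k}$ and must attain its mean at some realization, the probabilistic method produces routing vectors satisfying \eqref{ineq:routing-vector-condition}.

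The main obstacle will be the second-moment estimate: one must carefully control the coincidence probability of two correlated top-$k$ order statistics as a function of $\rho$, and then integrate against the density of $\rho$. Standard Gaussian concentration together with order-statistic tail bounds should control the typical regime, while the scaling $d \gtrsim k(\log m)^2$ is precisely what prevents the heavy-correlation tail from inflating the second moment by more than the constant factor $9/4$ needed for the final Cauchy--Schwarz argument to close.
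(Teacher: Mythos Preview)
Your approach is essentially the paper's: a probabilistic construction with i.i.d.\ routing vectors, exchangeability giving the first moment, and a second-moment bound $\E_r[\mu(U_S)^2] \leq C_1/\binom{m}{k}^2$ as the core estimate. The paper draws Gaussians (equivalent to your uniform-on-sphere choice) and handles the second moment by reducing to a one-parameter shifted top-$k$ probability and bounding it via a Gr\"onwall-type differential inequality, obtaining $C_1=3$.

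The one substantive difference is your conclusion step. Your Cauchy--Schwarz plus Jensen route is valid, but it forces $C_1 \leq 9/4$, tighter than the $C_1=3$ the paper actually establishes. The paper instead extracts from the first two moments a direct Paley--Zygmund-style bound $\P_r[\mu(U_S) > \tfrac{1}{2\binom{m}{k}}] \geq 1/9$, and then sums over $S$ by linearity of expectation; this works for any $C_1 \leq 3$. Your route can still be made to close---by enlarging the constant $C$ in $d \geq Ck(\log m)^2$ one can drive the second-moment constant arbitrarily close to~$1$---but the paper's final step is more robust to slack in $C_1$ and avoids the extra Jensen step.
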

\begin{proof}[Proof sketch]
The construction of these routing vectors is by the probabilistic method. First, draw random Gaussian routing vectors $r_1,\ldots, r_m \stackrel{i.i.d.}{\sim} N(0, I_d)$. By symmetry,
\begin{align}\label{eq:equal-expectation-regions}
\E_{r_1,\ldots,r_m}[\mu(U_S)] = \frac{1}{\binom{m}{k}}\quad \forall\,, S\,.
\end{align}
This first moment condition is not sufficient to conclude the lemma, because it could be that $\mu(U_S) = 1$ with probability $1/\binom{m}{k}$, and $\mu(U_S) = 0$ otherwise. In order to prove that this bad case does not hold, we also bound the second moment of $\mu(U_S)$. This is done by using the rotational invariance of $\mu$:\footnote{
To prove \eqref{ineq:bounded-second-moment-regions}, we carefully bound $\P_{Z_1,\ldots,Z_k \sim \cN(\delta,1),Z_{k+1},\ldots,Z_m \sim \cN(0,1)}[Z_i > Z_j \forall i \in [k], j \in [m] \sm [k]]$ in Lemma~\ref{lem:helper-1}, which is a result of independent interest.}
\begin{align}\label{ineq:bounded-second-moment-regions}
\E_{r_1,\ldots,r_m}[\mu(U_S)^2]
\leq \frac{3}{\binom{m}{k}^2}\,.
\end{align}
An application of Cauchy-Schwarz using \eqref{eq:equal-expectation-regions} and \eqref{ineq:bounded-second-moment-regions} means that $\mu(U_S)$ must be within a small constant factor of its expectation with constant probability. Indeed, one can show using these two equations that $\E_{r_1,\ldots,r_m}[|\{S : \mu(U_S) > 1/(2\binom{m}{k})\}|] \geq \frac{1}{9} \binom{m}{k}$. Thus, by the probabilistic method there must be a choice of routing vectors satisfying \eqref{ineq:routing-vector-condition}, which proves the lemma.
\end{proof}

Next, observe that when the activation function is constant, any expert computing $A_j \sigma(B_j^{\top} x)$ in fact always outputs a constant vector $A_j \vec 1= u_j \in \R^d$ where $\vec{1}$ denotes the all-ones vector of $\R^w$. Thus, we can equivalently write the MoE model $f$ with constant activation as having experts parametrized by $u_1,\ldots,u_m \in \R^d$. Written in this form, the model computes
\begin{align*}
f(x) = \sum_{j \in S} u_j\,, \mbox{ if } x \in U_S\,.
\end{align*}

We construct  constant experts $u_1,\ldots,u_m \in \R^d$ that satisfy the following conditions.
\begin{lemma}[Construction of constant experts for $\sigma(t) = 1$]\label{lem:maintext-2}
There are universal constants $C, c > 0$ such that the following holds for $d \geq C k \log m$. There are vectors $u_1,\ldots,u_m$ such that the sums $u_S = \sum_{i \in S} u_i$ satisfy the following two conditions
\begin{itemize}
\item \textit{Boundedness}. For any $S \in \binom{[m]}{k}$, we have $\|u_S\|^2 \leq 1$, and
\item \textit{Separation}. For any $S,S' \in \binom{[m]}{k}$ we have $\|u_S - u_{S'}\|^2 \geq |S \Delta S'| / (4k)$.
\end{itemize}
\end{lemma}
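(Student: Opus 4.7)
I would construct the vectors by a random construction and a probabilistic-method argument. Specifically, draw $u_1,\dots,u_m \simiid N\bigl(0,\tfrac{1}{2kd}I_d\bigr)$. Under this scaling, $\E[\|u_i\|^2] = \tfrac{1}{2k}$, so $\E[\|u_S\|^2] = \tfrac{1}{2}$ for any $|S|=k$, leaving a factor-$2$ slack for the boundedness condition; and $\E[\|u_S-u_{S'}\|^2] = |S\Delta S'|/(2k)$, exactly twice the required separation lower bound. The goal is to show that both conditions hold simultaneously with positive probability whenever $d \geq C k \log m$ for a sufficiently large universal constant $C$, and then extract deterministic $u_i$ by the probabilistic method.

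For boundedness, I would fix $S \in \binom{[m]}{k}$ and note that since the $u_i$ are independent Gaussians, $u_S \sim N(0,\tfrac{1}{2d}I_d)$, so $\|u_S\|^2$ has the law of $\tfrac{1}{2d}\chi^2_d$. Laurent--Massart concentration yields $\P[\|u_S\|^2 > 1] \leq e^{-c d}$ for an absolute $c > 0$. Union-bounding over the $\binom{m}{k} \leq m^k$ subsets, the probability that boundedness fails anywhere is at most $e^{k \log m - c d}$, which is $< \tfrac{1}{3}$ once $d \geq C_1 k \log m$. For separation, I would fix an ordered pair $(S,S')$ and let $T = S \Delta S'$; writing $u_S - u_{S'} = \sum_{i \in S \sm S'} u_i - \sum_{i \in S' \sm S} u_i$ as a sum of $|T|$ independent signed Gaussians gives $u_S - u_{S'} \sim N(0,\tfrac{|T|}{2kd}I_d)$, so $\|u_S-u_{S'}\|^2$ has the law of $\tfrac{|T|}{2kd}\chi^2_d$. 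The required bound $\|u_S - u_{S'}\|^2 \geq |T|/(4k)$ rescales to $\chi^2_d \geq d/2$, which is $|T|$-independent and has failure probability $\leq e^{-c' d}$ by Laurent--Massart. Union-bounding over the $\leq m^{2k}$ ordered pairs of $k$-subsets gives overall failure probability $\leq e^{2 k \log m - c' d} < \tfrac{1}{3}$ once $d \geq C_2 k \log m$.

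Combining by a further union bound and taking $C := \max(C_1,C_2)$, both events hold with positive probability, so deterministic vectors $u_1,\dots,u_m$ as required exist. The main subtle point to verify is that the separation bound is uniform in $|T|$: naively one might worry that the bound degrades when $|T|$ is small, since $u_S - u_{S'}$ is then a sum of fewer Gaussians, but the fact that both the mean $\E[\|u_S - u_{S'}\|^2]$ and the target lower bound scale linearly in $|T|$ exactly cancels this concern, reducing every separation event to the same $\chi^2_d$ concentration inequality about half its mean. Beyond this observation I do not anticipate further obstacles; the argument is essentially a Gilbert--Varshamov-style random packing in disguise.
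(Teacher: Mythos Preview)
Your proposal is correct and essentially identical to the paper's proof: the paper also draws $u_i \simiid N(0,\tfrac{1}{2kd}I_d)$, reduces both the boundedness and separation events to $\chi^2_d$ tail bounds via Laurent--Massart, and union-bounds over the $\leq m^{2k}$ pairs. Your explicit remark that the separation event rescales to the $|T|$-independent inequality $\chi^2_d \geq d/2$ is exactly the paper's observation as well.
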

\begin{proof}[Proof sketch]
The construction of these vectors is again probabilistic. It is similar in spirit to constructions of random packings of the unit ball. Draw Gaussian vectors $u_1,\ldots,u_d \sim N(0,I_d / (2dk))$. Notice that, for any $S,S' \in \binom{[m]}{k}$, we have $v_S \sim N(0,I_d / (2d))$ and that $u_S - u_{S'} \sim N(0,|S\Delta S'| I_d / (2kd))$. Hence $\|u_S\|^2$ and $\|u_S - u_{S'}\|^2$ are distributed as rescaled $\chi^2$ random variables that satisfy the boundedness and separation conditions in expectation. We can conclude by invoking tail bounds for $\chi^2$ random variables.
\end{proof}

Combining the above constructions of routing and expert vectors allows us to prove Theorem~\ref{thm:sep-constant}.
\begin{proof}[Proof of Theorem~\ref{thm:sep-constant}]
Let $f$ be a $(m,k,w,d)$-MoE with routing vectors $r_1,\ldots,r_m$ satisfying the conditions of Lemma~\ref{lem:maintext-1}, and  expert vectors $u_1,\ldots,u_m$ satisfying the conditions of Lemma~\ref{lem:maintext-2}.

Because of the constant activation function, for any $(m',k',w',d)$-MoE $f'$, there exist vectors $v_1,\ldots,v_p \in \R^d$ with  and a partition of $\R^d$ into measurable sets $V_1,\ldots,V_p$ such that $f'(x) = v_i$ if $x \in V_i$ for $i=1, \ldots, p=\binom{m'}{k'}$.

Our argument to lower-bound the approximation error between $f$ and $f'$ analyzes a linear program that we now introduce. Define the graph with vertex set $\cV$ and edge set $\cE$ as $$\cV = \big\{S \in \binom{[m]}{k} : \mu(S) \leq 20/\binom{m}{k}\big\}, \quad\quad \cE = \big\{\{S,S'\} : |S \Delta S'| \geq c'k\big\}$$ for a positive constant $c' <1$ to be chosen later. For each $i \in [p]$, consider the linear program that seeks to maximize $\sum_{e \in \cE} \xi_{e,i}$ subject to the constraints that $$\xi_{e,i} \geq 0 \mbox{ for all } e \in \cE\mbox{, and }\sum_{S\in e \in \cE} \xi_{e,i} \leq \mu(U_S \cap V_i)\mbox{ for all }S \in \cV\,,$$
where the sum is over all edges that have one endpoint equal to $S$.

Note that in the graph $(\cV, \cE)$, every vertex is connected to all but at most $\binom{m}{{\floor{c'k}}}\binom{k}{{\floor{c'k}}}$ of the other vertices. 
Therefore, at optimality, we must have
\begin{align}\label{ineq:count-non-saturated}
|\{S \in \cV : \sum_{S\in e \in \cE} \xi_{e,i} \neq \mu(U_S \cap V_i)\}| \leq  \binom{m}{{\floor{c'k}}}\binom{k}{{\floor{c'k}}}\,,
\end{align}
since otherwise by the pigeonhole principle, the objective of the linear program can be improved.

Recall that both $f$ and $f'$ are constant on $U_S \cap V_i$ and define the constant 
\begin{align*}
\mathrm{err}(S,i) := \|f(x) - f'(x)\|^2\,, \quad x \in U_S \cap V_i\,,
\end{align*}
to be the value of the approximation error in the region $U_S \cap V_i$. Using these facts, we lower-bound the approximation error. The key steps are by using (a) the constraints in the linear program, (b) Young's inequality, (c) the definition of the edge set and the 
separation property guaranteed by Lemma~\ref{lem:maintext-2}, (d) the bound \eqref{ineq:count-non-saturated} on the number of vertex constraints that are not saturated, and (e) the guarantee in Lemma~\ref{lem:maintext-1} ensuring that the regions $U_S, S \in \binom{[m]}{k}$ are roughly balanced in size,
\allowdisplaybreaks
\begin{align}
\E_{x \sim \mu}&\|f(x) - f'(x)\|^2
= \sum_{i \in [p]} \sum_{S \in \binom{[m]}{k}} \mu(U_S \cap V_i) \cdot \mathrm{err}(S,i) \nonumber \\
&\stackrel{(a)}{\geq} \sum_{i \in [p]} \sum_{S \in \cV_i} \sum_{S\in e \in \cE} \xi_{e,i} \cdot \mathrm{err}(S,i) = \sum_{i \in [p]} \sum_{e = \{S,S'\} \in \cE} \xi_{e,i} \cdot (\mathrm{err}(S,i) + \mathrm{err}(S',i))  \label{ineq:constant-argument-start} \\
&= \sum_{i \in [p]} \sum_{e = \{S,S'\} \in \cE} \xi_{e,i} (\|u_S - v_i\|^2 + \|u_{S'} - v_i\|^2) \nonumber \stackrel{(b)}{\geq} \frac{1}{2}\sum_{i \in [p]} \sum_{e = (S,S') \in \cE} \xi_{e,i} \|u_S - u_{S'}\|^2 \nonumber \\
&\stackrel{(c)}{\geq} \frac{c'}{8}\sum_{i \in [p]} \sum_{e = (S,S') \in \cE} \xi_{e,i} \label{ineq:constant-argument-end} = \frac{c'}{16}\sum_{i \in [p]} \sum_{S \in \cV_i} \sum_{S\in e \in \cE} \xi_{e,i} \\
&\stackrel{(d)}{\geq} \frac{c'}{16}\sum_{i \in [p]} \big(\sum_{S \in \cV_i} \mu(S \cap V_i) - 20 \binom{m}{{\floor{c'k}}}\binom{k}{{\floor{c'k}}} / \binom{m}{k}\big) \nonumber \\
&= \frac{c'}{16}\big(\sum_{S : \mu(S) \leq 20/\binom{m}{k}} \mu(S) - 20p \binom{m}{{\floor{c'k}}}\binom{k}{{\floor{c'k}}} / \binom{m}{k}\big) \nonumber \\
&\stackrel{(e)}{\geq} \frac{c'}{16}\big(\frac{3}{100}- 20p \binom{m}{{\floor{c'k}}}\binom{k}{{\floor{c'k}}}/ \binom{m}{k}\big) \geq c'' > 0\,,\nonumber 
\end{align}
where the last line follows from Claim~\ref{claim:helper-23} of the Appendix, which proves that the inequality $p = \binom{m'}{k'} \leq \frac{1}{1000}\binom{m}{k} / (\binom{m}{{\floor{c'k}}}\binom{k}{{\floor{c'k}}}) $ holds for small enough $c,c' > 0$. 

The theorem follows from $\E_{x \sim \mu}\|f(x) -f'(x)\|^2 \geq c''$ because the
boundedness condition in Lemma~\ref{lem:maintext-2} also implies $\E_{x \sim \mu}\|f(x)\|^2 = \sum_{S} \mu(U_S) \|u_S\|^2 \leq 1$.
\end{proof}

\subsection{Separation for linear activation function $\sigma(t) = t$}

For linear activation functions $\sigma(t) = t$, the same theorem holds if each expert has at least $\Omega(\log m)$ neurons. Unlike Theorem~\ref{thm:sep-constant}, this result applies only to Gaussian $N(0,I_d / d)$ or  uniform $\mathrm{Unif}[\BB]$ input distribution  rather than any rotationally-invariant distribution.
\begin{theorem}[Benefits of granularity; linear activation]\label{thm:sep-linear} There is a constant $C' > 0$ such that the result of Theorem~\ref{thm:sep-constant} holds in the case that $\sigma(t) = t$ is the linear activation function, if we additionally assume that $w \geq C'\log m$ and that either $\mu = N(0,I_d / d)$ or $\mu = \mathrm{Unif}[\BB]$.
\end{theorem}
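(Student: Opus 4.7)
The plan is to follow the same three-step template as the proof of Theorem~\ref{thm:sep-constant} (routing vectors, experts, LP matching), updating each step for the linear activation. The routing vector construction of Lemma~\ref{lem:maintext-1} applies verbatim, since it depends only on rotational invariance of~$\mu$. For the experts, I would replace the vector construction of Lemma~\ref{lem:maintext-2} with a matrix-valued analog: on region $U_S$ the MoE now computes $f(x) = M_S x$ with $M_S := \sum_{j\in S} A_j B_j^\top$ of rank at most $kw$, so I would draw $A_j, B_j \in \R^{d\times w}$ with i.i.d.\ centered Gaussian entries of suitable variance and show by a second-moment/union-bound argument that with high probability, the sums $M_S$ satisfy (a) boundedness $\|M_S\|_F^2 \leq d$, so that $\E_{x\sim \mu}[\|M_S x\|^2] \leq 1$, and (b) separation $\|M_S - M_{S'}\|_F^2 \gtrsim (|S\Delta S'|/k)\,d$ for every pair. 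Concentration of these products of Gaussian matrices is what requires $w \geq C' \log m$: one needs enough exponent in the chi-square--type tails to survive a union bound over $\binom{m}{k}^2 \leq (em/k)^{2k}$ pairs of sums.

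\textbf{Conditional second-moment lemma.} The main new technical ingredient --- corresponding to the ``high-rank conditional covariance'' claim stated informally in the overview --- is that for $\mu \in \{N(0, I_d/d), \mathrm{Unif}[\BB]\}$ and any measurable $U \subseteq \R^d$ with $\mu(U) \geq \epsilon$, the second-moment matrix
\begin{align*}
E_U := \E_{x \sim \mu|_U}[xx^\top] \quad \text{satisfies} \quad E_U \succeq \frac{g(\epsilon)}{d}\,I_d
\end{align*}
for some explicit function~$g$. I would prove this by direction-wise anti-concentration: for every unit vector $v$, the density of $\langle v, x\rangle$ under $\mu$ is bounded by $O(\sqrt d)$ near the origin, so a strip $\{|\langle v, x\rangle| \leq t/\sqrt d\}$ has $\mu$-mass $O(t)$; choosing $t = \Theta(\epsilon)$ leaves at least half the conditional mass of $\mu|_U$ with $|\langle v,x\rangle| \geq \Theta(\epsilon)/\sqrt d$, giving $g(\epsilon) = \Omega(\epsilon^2)$ in the worst case. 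For the specific sets $U_S \cap V_i$ that arise --- intersections of origin-passing halfspaces, hence scale-invariant cones on which $\mu$ factorizes as an independent radius and a uniform angle --- this bound can be sharpened in the high-dimensional regime $d \gg k\log m$.

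\textbf{Combining, and the main obstacle.} Given both ingredients, the LP matching argument of Theorem~\ref{thm:sep-constant} transfers almost verbatim: define the integrated error on each piece as $\mathrm{err}(S, i) := \mu(U_S \cap V_i) \trace((M_S - N_i)^\top(M_S - N_i) E_{S,i})$, apply the conditional second-moment lemma to lower-bound it by $c g(\mu(U_S\cap V_i))\, \mu(U_S\cap V_i)\, \|M_S - N_i\|_F^2 / d$, use Young's inequality in Frobenius norm to obtain $\|M_S - N_i\|_F^2 + \|M_{S'} - N_i\|_F^2 \geq \tfrac12 \|M_S - M_{S'}\|_F^2$, and invoke separation to convert this to $\gtrsim d$ on LP edges, cancelling the $1/d$ factor from the covariance lemma. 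The main technical obstacle is that the worst-case bound $g(\epsilon) = \Omega(\epsilon^2)$ is too weak at the ``typical'' scale $\epsilon \approx 1/(\binom{m'}{k'}\binom{m}{k})$ of a generic piece, and would give a lower bound polynomially small in $\binom{m}{k}$. Closing this gap requires exploiting the cone structure of $U_S \cap V_i$ to upgrade $g$ to $\Omega(1)$ in high dimensions, or restructuring the LP to weight only pieces with non-negligible mass and absorbing the resulting loss in the polynomial slack from the hypothesis $\binom{m'}{k'} < c\binom{m}{k}^{0.99}$.
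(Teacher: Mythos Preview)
Your plan correctly flags its own weak point, but neither proposed fix works, and the paper's actual resolution is structurally different from both.

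The claim that the cone structure of $U_S\cap V_i$ lets you upgrade $g$ to $\Omega(1)$ is false. The regions $V_i$ come from the \emph{adversarially chosen} routing vectors of $f'$; nothing stops the adversary from placing two routing vectors at angle $\delta$ so that some $V_i$ is a wedge of opening $O(\delta)$, for which $\E_{x\sim\mu|_{V_i}}[\langle v,x\rangle^2]=O(\delta^2/d)$ in the transverse direction $v$. Scale-invariance buys you nothing about thickness. Your fallback of absorbing $g(\epsilon)=\Omega(\epsilon^2)$ into the $\binom{m}{k}^{0.01}$ slack also fails by orders of magnitude: the pieces $U_S\cap V_i$ that matter have mass down to $\sim 1/\binom{m}{k}^2$, so $g\sim 1/\binom{m}{k}^4$.

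The paper's key observation (Lemma~\ref{lem:maintext-3}, via Lemma~\ref{lem:helper-11}) is that while the conditional covariance $\Sigma_U$ can have \emph{some} small eigenvalues, their \emph{number} is bounded: at most $\kappa=O(1+\log(1/\mu(U)))$ eigenvalues of $\Sigma_U$ fall below $\Omega(1/d)$. This gives
\[
\E_{x\sim\mu|_U}\|(A_1-A_2)x\|^2 \;\geq\; \frac{c}{d}\min_{\rank B\leq\kappa}\|A_1-A_2-B\|_F^2,
\]
so the error is controlled by the tail singular values of $A_1-A_2$, not its full Frobenius norm. This forces a stronger separation property on the experts than the one you state: you need $\min_{\rank B\lesssim k\log m}\|M_S-M_{S'}-B\|_F^2\gtrsim d\epsilon$, i.e.\ $M_S-M_{S'}$ must have \emph{many} singular values of order $\sqrt{d/kw}$, not merely large Frobenius norm. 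The paper establishes exactly this (Lemma~\ref{lem:helper-12}) using that $M_S-M_{S'}$ is distributed as a product of tall Gaussian matrices with well-spread spectrum. Your Frobenius-only separation is insufficient once the low-rank correction enters, and the $w\geq C'\log m$ hypothesis is used precisely to make the rank of $M_S-M_{S'}$ large enough to survive subtracting a rank-$O(k\log m)$ matrix.
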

The construction of the routing vectors $r_1,\ldots,r_m$ is identical to in Lemma~\ref{lem:maintext-1} but new ideas are needed to construct the experts. With a linear activation function, a $(m,k,w,d)$-MoE $f$ can be equivalently written as
\begin{align*}
f(x) = (\sum_{j \in S} M_j)x\,, \mbox{ if } x \in U_S
\end{align*}
for matrices $M_1,\ldots,M_m \in \R^{d \times d}$ of rank at most $w$. Similarly, the $(m',k',w',d')$-MoE $f'$ partitions the space of inputs into  $p = \binom{m'}{k'}$ regions $V_1,\ldots,V_p$, and has matrices $N_1,\ldots,N_p \in \R^{d \times d}$ such that
\begin{align*}
f'(x) = N_i x, \mbox{ if } x \in V_i
\end{align*}

Intuitively, in order to ensure that $f'$ cannot approximate $f$, we would like to choose $M_1,\ldots,M_m$ to be rank-$w$ matrices satisfying ``boundedness'' and ``separation'' properties analogous to those in Lemma~\ref{lem:maintext-2} for constant experts. The analogous ``boundedness'' property is simply that we wish to construct experts so that the $L^2$ norm of $f$ is bounded. But it is a priori unclear what the ``separation'' property should be for linear experts.

Ideally, in order to execute a similar argument to the proof of Theorem~\ref{thm:sep-constant}, including the bound from \eqref{ineq:constant-argument-start} to \eqref{ineq:constant-argument-end}, we would want the ``separation'' condition to lower-bound the following term,
\begin{align*}
\mathrm{err}(S,i) + \mathrm{err}(S',i) :=  \E_{x \sim \mu|_{U_S \cap V_i}}[\|\sum_{j \in S} M_jx - N_ix\|^2] + \E_{x \sim \mu|_{U_{S'} \cap V_i}}[\|\sum_{j' \in S'} M_{j'}x - N_ix\|^2] \,.
\end{align*}
In other words, we want our construction of the experts to guarantee that there is no linear function $N_i x$ that can approximate both $(\sum_{j \in S} M_j)x$ on $U_S \cap V_i$, and also $(\sum_{j' \in S'} M_{j'})x$ on $U_{S'} \cap V_i$.

Establishing this lower bound is substantially more difficult than for the constant expert case, because a linear expert can be more expressive than a constant expert. 
Indeed, the above requirement may sometimes be impossible to guarantee: if $U_S \cap V_i$ lies in a $(d/2)$-dimensional subspace, and $U_{S'} \cap V_i$ lies in an orthogonal $(d/2)$-dimensional subspace, then one can always choose a linear expert $N_ix$ that perfectly agrees with $f$ on both $U_S \cap V_i$ and $U_{S'} \cap V_i$.

We circumvent this issue by restricting our attention to sets $U_S \cap V_i$ and $U_{S'} \cap V_i$ of sufficiently large measure. The main insight is that that the covariance of $\mu|_U$ must be a high-rank matrix when $U$ has large measure. We use this insight to prove the following key lemma.

\begin{lemma}[Approximating linear functions over large-volume sets; Lemma~\ref{lem:helper-10}]\label{lem:maintext-3} There are universal constants $C,c > 0$ such that the following holds when $\mu$ is the Gaussian distribution $\cN(0,I_d/d)$ or the uniform distribution over the unit ball $\mathrm{Unif}[\BB]$. Let $U \subseteq \R^d$ be a measurable set on which $f_1(x) = A_1x$ and $f_2(x) = A_2x$ for $x \in U$. Then, for any $\kappa \geq C(1+\log(1/\mu(U)))$, we have
\begin{align*}
\E_{x \sim \mu|_U}\|f_1(x) - f_2(x)\|^2_2 \geq  \frac{c}{d}\min_{\substack{ B, \rank(B) \leq \kappa}}\|A_1 - A_2 - B\|_F^2\,.
\end{align*}
\end{lemma}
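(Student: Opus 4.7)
} The plan is to reduce the problem to a structural fact about the conditional second-moment matrix of $\mu|_U$. Setting $A := A_1 - A_2$ and $\Sigma_U := \E_{x \sim \mu|_U}[xx^\top]$, the left-hand side equals $\tr(A \Sigma_U A^\top)$. I will establish the following key fact: whenever $\kappa \geq C(1 + \log(1/\mu(U)))$, at most $\kappa$ eigenvalues of $\Sigma_U$ are smaller than $c_0/d$, for a small universal constant $c_0$. Granted this, let $\Pi$ be the orthogonal projection onto the span of the eigenvectors of $\Sigma_U$ with eigenvalue $\geq c_0/d$, so that $\rank(I - \Pi) \leq \kappa$. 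Since $\Pi$ commutes with $\Sigma_U$ and $\Sigma_U \succeq (c_0/d)\Pi$ on the range of $\Pi$, we get $\tr(A \Sigma_U A^\top) \geq (c_0/d)\|A\Pi\|_F^2$. Taking $B := A(I - \Pi)$, which has rank at most $\kappa$, gives $\|A - B\|_F^2 = \|A\Pi\|_F^2$, yielding the lemma with constant $c = c_0$.

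To prove the structural fact, I will argue by contradiction: suppose $\Sigma_U$ has $\ell > \kappa$ eigenvalues strictly below $c_0/d$, and let $P$ be the orthogonal projection onto the corresponding $\ell$-dimensional eigenspace $W$. By construction, $\E_{\mu|_U}\|Px\|^2 = \tr(P \Sigma_U) \leq c_0\ell / d$, so Markov's inequality forces $\|Px\|^2 < c_2\ell/d$ with probability at least $1 - c_0/c_2$ under $\mu|_U$, for a constant $c_2 > c_0$ to be fixed. On the other hand, under $\mu$ itself, $\|Px\|^2$ concentrates sharply around $\ell/d$: for $\mu = N(0, I_d/d)$ this reduces to the Laurent--Massart tail bound for a rescaled $\chi^2_\ell$ variable, and for $\mu = \mathrm{Unif}[\BB]$ it follows from the decomposition $x = R \cdot u$ with $R^d \sim \mathrm{Unif}[0,1]$ independent of $u$ uniform on $S^{d-1}$, controlling $\|Pu\|^2 = \|Pg\|^2/\|g\|^2$ for $g \sim N(0, I_d)$ via standard Gaussian concentration. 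Either way, one obtains a rank-adaptive bound of the form $\P_\mu[\|Px\|^2 \leq c_2 \ell/d] \leq e^{-c_1 \ell}$. Conditioning on $U$ costs at most a factor $1/\mu(U)$ in the probability, so the two events $\{\|Px\|^2 < c_2\ell/d\}$ and $\{\|Px\|^2 \geq c_2\ell/d\}$ under $\mu|_U$ are disjoint and satisfy $(1-c_0/c_2) + (1 - e^{-c_1\ell}/\mu(U)) \leq 1$. Rearranging yields $\ell \leq (1/c_1)(O(1) + \log(1/\mu(U)))$, which contradicts $\ell > \kappa$ once $C$ is chosen large enough relative to $1/c_1$.

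The main obstacle is the rank-adaptive concentration estimate in the second step: the exponent must scale with $\ell$ rather than $d$, so that the multiplicative loss $1/\mu(U)$ from conditioning is absorbed precisely when $\ell$ exceeds $\log(1/\mu(U))$. A weaker bound of the form $e^{-\Omega(d)}$ would not be useful for small $\ell$, which is the regime we most care about when $\mu(U)$ is close to $1$. The Gaussian case is textbook, but the ball case requires combining the independent concentration of the radial and angular components and carefully tracking constants so that $c_0$ can be chosen strictly smaller than the ``typical'' normalized scale of $\|Px\|^2$. Once these tail estimates are in place, the final decomposition into high- and low-eigenvalue subspaces of $\Sigma_U$ is a clean two-line linear-algebra step and does not require any further input about $\mu$.
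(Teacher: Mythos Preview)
Your proposal is correct, and the final linear-algebra reduction --- lower-bounding $\tr(A\Sigma_U A^\top)$ by $(c_0/d)\|A\Pi\|_F^2$ via the eigendecomposition of $\Sigma_U$ and then taking $B = A(I-\Pi)$ --- is essentially the paper's concluding step, which phrases the same thing through von Neumann's trace inequality and Eckart--Young--Mirsky.

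Where you genuinely diverge is in the proof of the structural fact that $\Sigma_U$ has at most $\kappa$ small eigenvalues. The paper (Lemma~\ref{lem:helper-11}) works with the \emph{centered} covariance: it applies Chebyshev coordinate-by-coordinate in each small eigendirection, uses an averaging argument to show that a typical $x \in U$ lies within $O(1/\sqrt{d})$ of the mean $\rho$ in at least a constant fraction of the $\kappa$ directions, and then upper-bounds the $\mu$-volume of the resulting union of axis-aligned tubes centered at $\rho$. Because the center $\rho$ is arbitrary, this last step needs dedicated tube-volume lemmas for both the Gaussian and the ball (Lemmas~\ref{lem:helper-5} and~\ref{lem:helper-6}). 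Your route sidesteps all of this: by working with the uncentered second moment, a single Markov bound on $\|Px\|^2$ replaces the coordinate-wise Chebyshev plus union bound, and the relevant small-ball event $\{\|Px\|^2 \leq c_2\ell/d\}$ is anchored at the origin, so the rank-adaptive tail bound under $\mu$ follows directly from $\chi^2$ lower tails (with the radial/angular decomposition you describe handling the ball case, the $2^{-d}$ radial term being absorbed since $\ell \leq d$). The paper's route yields the marginally stronger intermediate conclusion that the \emph{covariance}, not just the second moment, has few small eigenvalues --- but that extra strength is not used in the lemma at hand, and your argument is the more streamlined of the two.
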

By the Courant-Fischer theorem, this lemma implies that the functions $f_1$ and $f_2$ are far apart as long as $A_1-A_2$ has a heavy tail of singular values.

Applying this lemma to $U\in \{U_S \cap V_i, U_{S'} \cap V_i\}$ with $\mu(U_S \cap V_i), \mu(U_{S'} \cap V_i) = \Omega(1/\binom{m}{k}^2)$, we get
\begin{align}
\mathrm{err}(S,i) + \mathrm{err}(S',i) &\geq \frac{c}{d} \min_{\substack{B,\rank(B)\lesssim k \log m \\ B' ,\rank(B')\lesssim k \log m }} \|(\sum_{j \in S} M_j) - N_i - B\|^2 +  \|(\sum_{j' \in S'} M_{j'}) - N_i - B'\|^2 \nonumber 
\\
&\geq \frac{c}{2d}\min_{B,\rank(B) \lesssim 2k \log m} \|(\sum_{j \in S} M_j) - (\sum_{j' \in S'} M_{j'}) - B\|^2\,. \label{ineq:linear-separation-motivation}
\end{align}
Thus, to prove Theorem~\ref{thm:sep-linear} it is sufficient to construct $M_1,\ldots,M_m$ that satisfy a separation property of the form ``\eqref{ineq:linear-separation-motivation} $\gtrsim |S \Delta S'| / k$''. This is achieved by a probabilistic construction, where we pick the matrices randomly and prove that they satisfy the separation and boundedness properties with nonzero probability. The full proof is in Appendix~\ref{app:sep-linear}.

\subsection{Separation for ReLU activation function $\sigma(t) = \max(0,t)$}

Finally, we are ready to consider the case most relevant to practice: the ReLU activation function $\sigma(t) = \max(0,t)$. We prove an expressivity separation that applies whenever (i) the size of the experts is mildly lower-bounded, $w \gtrsim \log m$, (ii) the number of active parameters in $f$ and $f'$ match, (iii) the number of active neurons is $kw \leq 0.99d$ (i.e. smaller than the embedding dimension), and (iv) the number of experts is sufficiently larger than the granularity, $m \gtrsim k$.\footnote{Of these conditions, the most interesting one to try to relax is (iii), as this would prove expressivity separations for a broader range of MoE hyperparameters. But this is beyond the range of our current techniques. Concretely, it is an interesting open problem to to prove a separation for the 
regime in which $kw = k'w' = 2d$.}

\begin{theorem}[Benefits of granularity; ReLU activation]\label{thm:sep-relu} There is a constant $C' > 0$ such that Theorem~\ref{thm:sep-constant} holds in the case that $\sigma(t) = \max(0,t)$ is the ReLU activation function, if we additionally assume (i) that $w \geq C'\log m$, (ii) that $k'w' = kw$, (iii) that $kw \leq 0.99d$, (iv) that $m \geq C'k$, (v) and that $\mu = N(0,I_d / d)$ or $\mu = \mathrm{Unif}[\BB]$.
\end{theorem}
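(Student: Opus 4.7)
My plan is to reduce the ReLU expressivity separation to the linear case (Theorem~\ref{thm:sep-linear}) by exploiting two structural facts: (a) ReLU can implement arbitrary linear maps exactly via the identity $t = \sigma(t) - \sigma(-t)$, so our construction of $f$ can be arranged to be piecewise linear with exactly the effective structure used in Theorem~\ref{thm:sep-linear}; and (b) any ReLU MoE with $k'w'$ active neurons is piecewise linear with linear pieces of rank $\leq k'w'$, so Lemma~\ref{lem:maintext-3} still furnishes the required lower bound on approximation error. Concretely, I would keep the routing vectors from Lemma~\ref{lem:maintext-1} and realize each expert with paired neurons: take $B_j = [C_j, -C_j]$ and $A_j = [D_j, -D_j]$ with $C_j, D_j \in \R^{d \times w/2}$ having i.i.d.\ Gaussian entries, so that $A_j \sigma(B_j^\top x) = D_j C_j^\top x$ identically in $x$. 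Then on each region $U_S$ the MoE computes $f(x) = M_S x$ with $M_S = \sum_{j \in S} D_j C_j^\top$ of rank $\leq kw/2$, and by repeating the probabilistic construction of Theorem~\ref{thm:sep-linear} the $M_S$'s satisfy with positive probability a ``heavy singular tail'' separation: for all $S,S'$ with $|S\Delta S'|\geq c'k$ and all $\kappa \leq (1-\eps)\cdot 2kw$, $\min_{\rank(B)\leq \kappa}\|M_S - M_{S'} - B\|_F^2 \gtrsim |S\Delta S'|/d$.

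For the approximator $f'$, I would consider each region $V_i$ on which $f'(x) = A'_i \sigma((B'_i)^\top x)$. Because $\sigma$ passes through the origin, the hyperplane arrangement of the $k'w'$ neurons partitions $V_i$ into at most $2^{k'w'}$ linear pieces $W$, and on each piece $f'|_W(x) = N_{i,W}\,x$ is linear with $\rank(N_{i,W}) \leq k'w' = kw$. For each pair $(S,S')$ with $|S\Delta S'|\geq c'k$ and each cell $(V_i,W)$, I would apply Lemma~\ref{lem:maintext-3} separately to $U_S \cap V_i \cap W$ and $U_{S'}\cap V_i \cap W$ (on both of which $f'$ is linear with the same matrix $N_{i,W}$), then combine via Young's inequality to get
\[
\mathrm{err}(S,i,W) + \mathrm{err}(S',i,W) \;\gtrsim\; \tfrac{1}{d}\min_{\rank(B) \leq 2\kappa}\|M_S - M_{S'} - B\|_F^2,
\]
with $\kappa \lesssim 1 + \log(1/\mu_{\min})$, where $\mu_{\min}$ lower bounds the cell measure. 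A typical cell has measure $\gtrsim 1/(p \cdot 2^{k'w'}\cdot \binom{m}{k})$ with $p = \binom{m'}{k'}$, yielding $\kappa \lesssim kw + k\log m$.

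Finally, I would run the linear-programming argument of Theorem~\ref{thm:sep-constant} on the refined partition $\{U_S \cap V_i \cap W\}$, pairing configurations with $|S\Delta S'|\geq c'k$ and summing the above local lower bounds against the separation of the $M_S$'s. The key calibration is to ensure $2\kappa \leq (1-\eps)\cdot 2kw$: assumption~(i) $w\geq C'\log m$ with $C'$ large enough absorbs the $k\log m$ overhead, assumption~(ii) $k'w' = kw$ matches the linear ranks on both sides so Lemma~\ref{lem:maintext-3} is tight, assumption~(iii) $kw\leq 0.99d$ keeps the linear pieces non-degenerate and leaves room for the rank-$\kappa$ approximant, and assumption~(iv) $m\geq C'k$ is inherited from Theorem~\ref{thm:sep-constant}. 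The main obstacle is precisely this rank budget: each of the $2^{k'w'}$ linear pieces of $f'$ contributes to $\kappa$ through $\log(1/\mu_{\min})$, so the contribution is only the logarithm $k'w'$ of the number of pieces rather than the number itself, and this logarithmic scaling is exactly what makes the balance close. Relaxing assumption~(iii) so that $kw$ is comparable to $d$ would break this balance, which is consistent with the open problem noted in the paper's footnote.
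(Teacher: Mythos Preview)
Your rank-budget calibration does not close. Refining each $V_i$ into the $\leq 2^{k'w'}=2^{kw}$ linear pieces of the ReLU network forces the cells $U_S\cap V_i\cap W$ you work with to have measure at best on the order of $2^{-kw}/\binom{m}{k}^2$, so Lemma~\ref{lem:maintext-3} requires $\kappa\geq C\bigl(1+\log(1/\mu_{\min})\bigr)\geq C\cdot kw\log 2$, where $C$ is the \emph{fixed} universal constant of that lemma. You have no freedom to make this constant small, so $2\kappa$ exceeds $kw$. But with your construction $M_S-M_{S'}=\sum_{j\in S\Delta S'}\pm D_jC_j^\top$ has rank at most $|S\Delta S'|\cdot w/2\leq kw$, hence $\min_{\rank(B)\leq 2\kappa}\|M_S-M_{S'}-B\|_F^2=0$ and the lower bound collapses. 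Taking $w\geq C'\log m$ only kills the subleading $k\log m$ term; the dominant $kw$ contribution to $\log(1/\mu_{\min})$ comes from the piece count itself and cannot be absorbed. (For the same reason, your stated separation property ``for all $\kappa\leq(1-\eps)\cdot 2kw$'' is false as written: the difference matrix already has rank $\leq kw$.) A secondary issue is that the LP now runs over $p\cdot 2^{kw}$ regions rather than $p$, so the ``unsaturated-vertex'' loss term in the final accounting also blows up by $2^{kw}$.

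The paper sidesteps this by never linearizing $f'$ inside $V_i$. It proves a new lemma (Lemma~\ref{lem:helper-21}) lower-bounding $\E_{x\sim\mu|_U}\|Ax-h(\Pi x)\|^2$ for an \emph{arbitrary measurable} $h$ depending only on a rank-$kw$ projection $\Pi$, yielding $\tfrac{c}{d}\sum_{i\geq\kappa}\sigma_i^2(A\Pi^\perp)$ with $\kappa$ governed only by $\mu(U_S\cap V_i)$, so $\kappa=O(k\log m)$ with no $2^{kw}$ blowup. Since the unknown $\Pi_i$ is shared across all $S$ in the same $V_i$, pairwise differencing no longer eliminates it; instead the paper aggregates over $R$-tuples $(S_1,\ldots,S_R)$ with $R=10^6$ via a hypergraph LP together with a deterministic inequality (Lemma~\ref{lem:helper-17}) controlling $\sum_j\sum_{i>\kappa}\sigma_i^2(A_j\Pi^\perp)$ by the tail spectrum of $\sum_j A_j^\top A_j$. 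This in turn dictates a different expert construction than the Gaussian one you propose: random coordinate projectors $M_i=\sum_j e_{p_{i,j}}e_{p_{i,j}}^\top$ (Lemma~\ref{lem:helper-19}), engineered so that whenever $|S_1\cup\cdots\cup S_R|\geq 750k$ the union of supports exceeds $kw(1+\delta)$ with high probability, making the relevant matrix high-rank uniformly over all rank-$kw$ projections $\Pi$.
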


While the routing vectors are constructed as before, the experts require a distinct analysis and construction because ReLU activations permit a much broader function range than linear ones. A central difficulty is the apparent lack of a Lemma~\ref{lem:maintext-3}-style lower bound for ReLU activations.

Let us again use the notation that $f'$ is an MoE with $p$ regions $V_1,\ldots,V_p$. In each region, the function $f'$ is given by a $kw$-width ReLU network (because $k'w' = kw$). Again, our main approach is to try to lower-bound $\mathrm{err}(S,i) + \mathrm{err}(S',i)$. Namely, we want to show that $f'$ cannot approximate $f$ well on both $U_S \cap V_i$ and on $U_{S'} \cap V_i$.

We leverage the property that  $f'$ depends on a fixed $kw$-dimensional subspace on $(U_S \cup U_{S'}) \cap V_i$. In contrast, $f$ depends on distinct $kw$-dimensional subspaces in $U_S \cap V_i$ and $U_{S'} \cap V_i$. So if experts depend on sufficiently different subspaces, no single $kw$ dimensional space can capture all of the variance in $f$ across  $U_S \cap V_i$ and $U_{S'} \cap V_i$. This line of argument intuitively should give a lower bound on $\mathrm{err}(S,i) + \mathrm{err}(S',i)$. There are some technical complications in the analysis and we instead only lower-bound sums of errors $\mathrm{err}(S_1,i) + \ldots + \mathrm{err}(S_{C'},i)$ for a large enough constant $C'$. Thus, in order to make the argument go through we also have to use a ``tensorized'' version of the linear program, analyzing a linear program over an appropriate hypergraph instead of a graph. Our argument also relies on our earlier lemma for linear activations, which lower-bounds the spectrum of $\mu|_U$'s covariance matrix. The full proof is in Appendix~\ref{app:sep-relu}.

\section{Experiments}\label{sec:experiments}
We validate our theory with experiments demonstrating that the effects of granularity are relevant at practical scales. In Figures~\ref{fig:fixed-sparsity} and \ref{fig:varying-sparsity}, we train student MoE models to learn random teacher MoE models with Mean-Squared Error loss over Gaussian input data in dimension $d = 256$. The teacher and student models have roughly equal numbers of active parameters.\footnote{In all experiments, teachers have \textasciitilde 65K active parameters and students are slightly overparametrized to have \textasciitilde 82K active parameters to make optimization easier.} In Figure~\ref{fig:fixed-sparsity}, we vary the granularity while keeping the total parameters and active parameters fixed. In Figure~\ref{fig:varying-sparsity}, we vary both the granularity and the number of total parameters, while keeping the number of active parameters fixed. The shorthand 16e8a denotes a 16-expert model with 8 active experts. In both figures, we observe that the granularity of the student model must match the granularity of the teacher in order to learn. Full experimental details are in Appendix~\ref{app:experimental-details}.
\begin{figure}[h]
  \begin{minipage}{0.44\textwidth} %
    \centering
    \includegraphics[width=\linewidth]{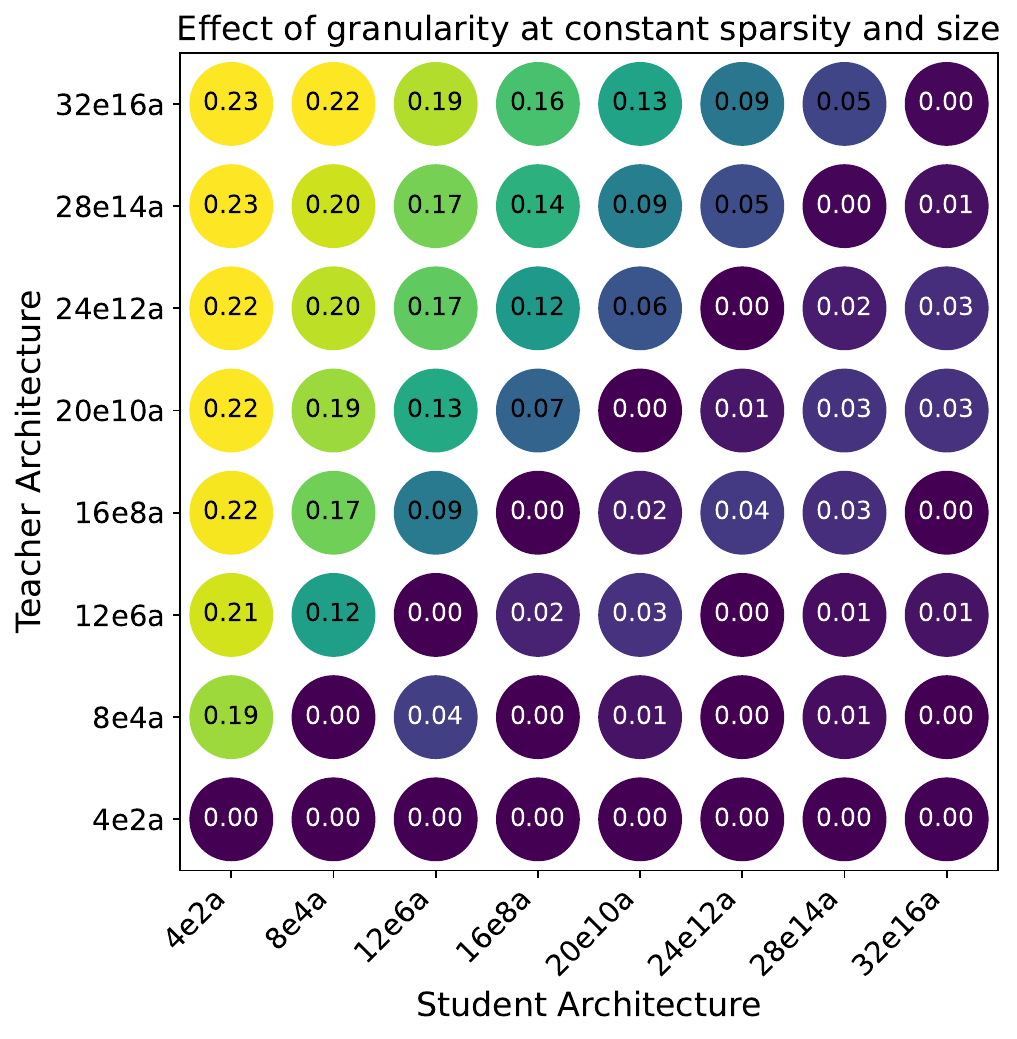}
    \caption{Each data point is the test loss of a teacher MoE trained to learn a student MoE.}
    \label{fig:fixed-sparsity}
  \end{minipage}\hfill%
  \begin{minipage}{0.54\textwidth} %
    \centering
    \includegraphics[width=\linewidth]{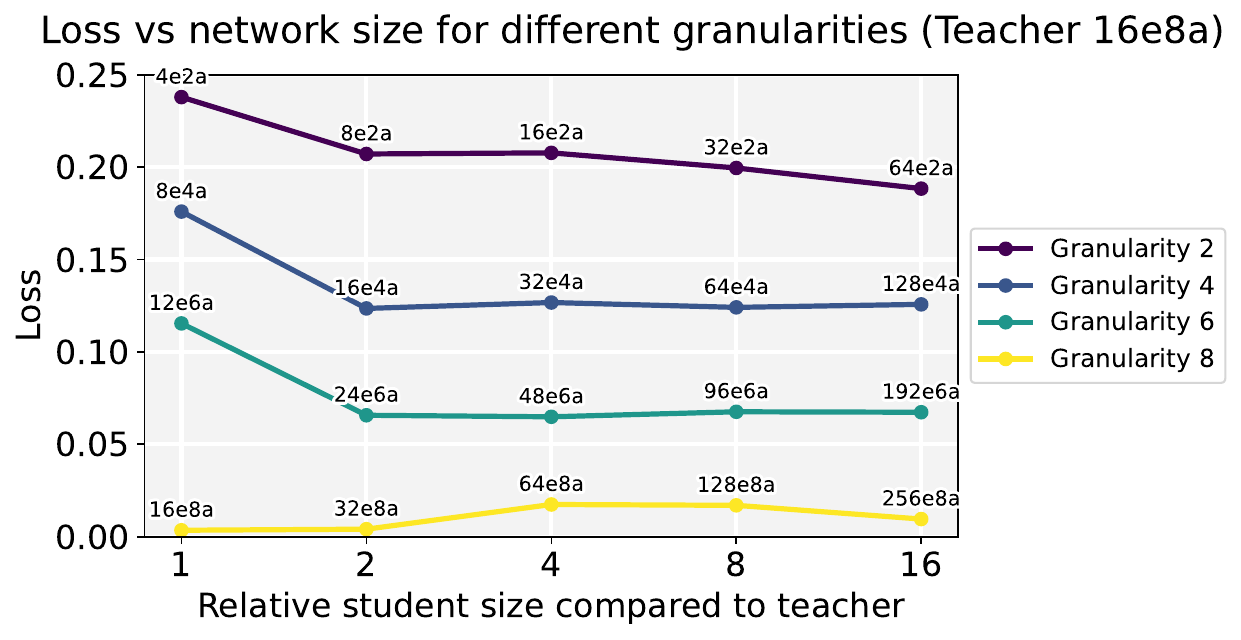}
    \caption{We fix a 16-expert 8-active teacher model, and train student models with varying granularities and total number of parameters. Note that even with up to 16 times as many total parameters, student models do not fit the teacher unless their granularity is at least 8.}
    \label{fig:varying-sparsity}
  \end{minipage}
\end{figure}

\section{Limitations and future directions}

This paper establishes the first known separation between MoE architectures based on granularity. Our findings suggests that future architecture designs could benefit from higher granularity without altering the total or active parameter count. However, this recommendation does not take into account potential hardware challenges. Indeed, higher granularity may increase inter-GPU communication overhead and routing costs, highlighting an interesting open problem: developing hardware and routing protocols to make highly granular architectures more practical.

Our theoretical results pertain to specific design choices (routing scheme, activation functions) and extending these results to other choices made in practice is an interesting direction for future work. Similarly, some parameter regimes are not covered by the current results; see the discussion after Theorem~\ref{thm:sep-relu}.

While this paper examines MoE expressivity (ignoring optimization), experiments (Section~\ref{sec:experiments}) indicate trained MoEs can fit teacher models of similar or lower granularity. A key open question is understanding how algorithms like SGD can harness the expressive power of fine-grained MoEs.

\section*{Acknowledgements}
EB is thankful for the support of NSF grant CCF-2106377, and PR is thankful for the support of NSF grants  DMS-2022448 and CCF-2106377. EB is also extremely grateful for the support of his fianc\'ee, Hope, and his brothers, Carles and Mart\'i.

\clearpage

\appendix
\section{Proof for constant expert separation, Theorem~\ref{thm:sep-constant}}\label{app:sep-constant}

Below, we give the full proof of Theorem~\ref{thm:sep-constant}, restated below.

\begin{theorem}[Benefits of granularity; constant activation; restated Theorem~\ref{thm:sep-constant}]
There are universal constants $C,c > 0$ such that the following holds for $\sigma(t) = 1$. Suppose that $\mu$ is a rotationally-invariant probability distribution, that $d \geq Ck (\log m)^2$, that $m \geq 2k$ and that
\begin{align*}
\binom{m'}{k'} < c\binom{m}{k}^{0.99}\,.
\end{align*}Then there is a $(m,k,w,d)$-MoE model $f$ such that for all $(m',k',w',d')$-MoE models $f'$ we have
$$\E_{x \sim \mu}[\|f(x)-f'(x)\|^2] > c \E_{x \sim \mu}[\|f(x)\|^2].$$
\end{theorem}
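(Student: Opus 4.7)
The plan is to combine the routing construction of \Cref{lem:maintext-1} with the constant-expert construction of \Cref{lem:maintext-2}. Take $f$ to be the $(m,k,w,d)$-MoE whose routing vectors $r_1,\ldots,r_m$ partition $\R^d$ into regions $U_S$ (for $S\in\binom{[m]}{k}$), a constant fraction of which have mass at least $1/(2\binom{m}{k})$, and whose (constant) experts $u_1,\ldots,u_m$ satisfy $\|u_S\|^2\le 1$ and $\|u_S-u_{S'}\|^2\ge |S\Delta S'|/(4k)$ where $u_S:=\sum_{j\in S}u_j$. The boundedness condition immediately yields $\E_{x\sim\mu}\|f(x)\|^2\le 1$, so it suffices to lower bound $\E_{x\sim\mu}\|f(x)-f'(x)\|^2$ by an absolute constant. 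For any $(m',k',w',d')$-MoE $f'$ with constant activation, $f'$ is piecewise constant on some measurable partition $V_1,\ldots,V_p$ of $\R^d$ with $p=\binom{m'}{k'}$, taking values $v_1,\ldots,v_p\in\R^d$.

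\textbf{Graph and LP.} The core idea is that since $p\ll\binom{m}{k}$, by pigeonhole each piece $V_i$ typically overlaps many regions $U_S$, and the separation property forces $v_i$ to be far from most of the corresponding $u_S$. I would formalize this by restricting to moderately sized regions $\mathcal V=\{S:\mu(U_S)\le 20/\binom{m}{k}\}$ (Markov's inequality on the mean $1/\binom{m}{k}$ shows the excluded regions carry at most a small fraction of the total mass), and defining the graph with edge set $\mathcal E=\{\{S,S'\}:|S\Delta S'|\ge c'k\}$ for a small constant $c'>0$. Each vertex is non-adjacent to at most $\binom{m}{\lfloor c'k\rfloor}\binom{k}{\lfloor c'k\rfloor}$ others. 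For each $i\in[p]$ I would consider the fractional matching LP
\[
\max \sum_{e\in\mathcal E}\xi_{e,i},\quad \text{s.t. }\xi_{e,i}\ge 0,\ \sum_{S\in e\in\mathcal E}\xi_{e,i}\le \mu(U_S\cap V_i)\text{ for all }S\in\mathcal V.
\]
At optimality, the number of unsaturated vertex constraints is at most $\binom{m}{\lfloor c'k\rfloor}\binom{k}{\lfloor c'k\rfloor}$, since otherwise there would be two unsaturated vertices that are neighbors in $\mathcal E$ and the objective could be improved.

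\textbf{Executing the bound.} On $U_S\cap V_i$ the error is the constant $\|u_S-v_i\|^2$, so Young's inequality gives $\|u_S-v_i\|^2+\|u_{S'}-v_i\|^2\ge \tfrac12\|u_S-u_{S'}\|^2\ge c'/8$ for every edge $\{S,S'\}\in\mathcal E$. Summing $\xi_{e,i}(\|u_S-v_i\|^2+\|u_{S'}-v_i\|^2)$ over all $e$ and $i$, each vertex $S\in\mathcal V$ contributes $\mu(U_S\cap V_i)$ whenever its constraint is saturated, so by the near-saturation bound
\[
\E_{x\sim\mu}\|f(x)-f'(x)\|^2\ \gtrsim\ c'\Bigl(\sum_{S\in\mathcal V}\mu(U_S)\ -\ 20\,p\,\binom{m}{\lfloor c'k\rfloor}\binom{k}{\lfloor c'k\rfloor}\Big/\binom{m}{k}\Bigr).
\]
The first term is a positive constant by \Cref{lem:maintext-1} combined with Markov, and the second term is negligible provided the counting inequality $p\cdot\binom{m}{\lfloor c'k\rfloor}\binom{k}{\lfloor c'k\rfloor}\le\frac{1}{1000}\binom{m}{k}$ holds.

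\textbf{Main obstacle.} The delicate step is this last combinatorial estimate: under the hypothesis $\binom{m'}{k'}<c\binom{m}{k}^{0.99}$ and $m\ge 2k$, I need $\binom{m}{\lfloor c'k\rfloor}\binom{k}{\lfloor c'k\rfloor}\le\binom{m}{k}^{0.01}/(1000c)$ for a suitably small $c'$. This requires standard but careful binomial tail manipulations (e.g.\ $\binom{m}{\lfloor c'k\rfloor}\le\binom{m}{k}^{O(c'\log(1/c'))}$), with $c'$ and $c$ chosen together as universal constants. Once this is in place the argument closes; everything else is either a direct appeal to the two lemmas or routine LP duality/pigeonhole reasoning.
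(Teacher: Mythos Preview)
Your proposal is correct and follows essentially the same approach as the paper: the same routing and expert constructions (Lemmas~\ref{lem:maintext-1} and~\ref{lem:maintext-2}), the same restriction to regions with $\mu(U_S)\le 20/\binom{m}{k}$, the same graph with edges $|S\Delta S'|\ge c'k$, the same fractional-matching LP with the non-saturation bound via pigeonhole, and the same chain of inequalities through Young's inequality and the separation property. The combinatorial obstacle you flag is exactly the paper's Claim~\ref{claim:helper-23}, proved via the entropy bound $\binom{m}{\lfloor c'k\rfloor}\le \binom{m}{k}^{4c'\log_2(1/c')}$ as you anticipate.
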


The proof can be broken into three parts:
\begin{enumerate}
\item Construction of routing vectors that partition the input space roughly evenly. This construction is also used for the case of linear activation functions and ReLU activation functions; see Appendix~\ref{app:sep-constant-1}.
\item Construction of constant experts such that the functions computed by activating experts $S \in \binom{[m]}{k}$ versus experts $S' \in \binom{[m]}{k}$ are far from each other; see Appendix~\ref{app:sep-constant-2}.
\item Proof that an MoE with the routing vectors and experts constructed in the previous two sections is inapproximable by an MoE with many fewer possible configurations of experts; see Appendix~\ref{app:sep-constant-3}.

\end{enumerate}

In the below calculations some of the constants are loose and we do not seek to optimize them here.

\subsection{There are routing vectors that split the space into $\binom{m}{k}$ roughly equal regions}\label{app:sep-constant-1}

We first recall a technical fact bounding the deviations of chi-squared deviations from \cite{laurent2000adaptive}.
\begin{proposition}[Equations (4.3) and (4.4) of \cite{laurent2000adaptive}] \label{prop:helper-10} Let $Z \sim \chi^2_d$ be distributed as a chi-squared random variable with $d$ degrees of freedom. Then for any positive $x \geq 0$, we have 
$$
\P[Z \leq d-2\sqrt{dx}] \vee  \P[Z \geq d + 2\sqrt{dx} + 2x]  \leq e^{-x}
$$
\end{proposition}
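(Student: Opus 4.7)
The plan is to prove both tail bounds via the Cramér--Chernoff method, exploiting the fact that a $\chi^2_d$ random variable is a sum of $d$ i.i.d.\ squared standard Gaussians, so its moment generating function factorizes as $\E[e^{\lambda Z}] = (1-2\lambda)^{-d/2}$ for $\lambda < 1/2$ (and analogously $\E[e^{-\lambda Z}] = (1+2\lambda)^{-d/2}$ for $\lambda \geq 0$). The two tails are handled separately, with the upper tail requiring more care because $d + 2\sqrt{dx} + 2x$ grows like $x$ for large $x$ (sub-exponential, not merely sub-Gaussian).

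For the upper tail, I would apply Markov to $e^{\lambda(Z-d)}$ to get $\P[Z \geq d + t] \leq \exp(-\lambda t + d\,\psi_+(\lambda))$ where $\psi_+(\lambda) := -\lambda - \tfrac{1}{2}\log(1-2\lambda)$. The key analytic step is the estimate $\psi_+(\lambda) \leq \lambda^2/(1-2\lambda)$ for $\lambda \in [0, 1/2)$, which follows by writing $-\log(1-2\lambda) = \sum_{k\geq 1}(2\lambda)^k/k$, cancelling the $k=1$ contribution against $-\lambda$, and bounding the remaining tail by a geometric series. Substituting gives $\P[Z \geq d + t] \leq \exp(-\lambda t + d\lambda^2/(1-2\lambda))$, and I would then verify that the choice $\lambda = \sqrt{x/d}/(1+2\sqrt{x/d})$ combined with $t = 2\sqrt{dx}+2x$ makes this quantity collapse exactly to $e^{-x}$. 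The choice of $\lambda$ is motivated by first-order optimality in $\lambda$, but can alternatively just be guessed and checked.

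For the lower tail, the same framework with $\psi_-(\lambda) := \lambda - \tfrac{1}{2}\log(1+2\lambda)$ yields $\P[Z \leq d - t] \leq \exp(-\lambda t + d\,\psi_-(\lambda))$ for $\lambda \geq 0$. Here the relevant estimate is the simpler $\psi_-(\lambda) \leq \lambda^2$, equivalent to $\log(1+y) \geq y - y^2/2$ for $y \geq 0$; this is a one-liner from $f(y) := \log(1+y) - y + y^2/2$ satisfying $f(0)=0$ and $f'(y) = y^2/(1+y) \geq 0$. The resulting sub-Gaussian bound $\exp(-\lambda t + d\lambda^2)$ is minimized at $\lambda = t/(2d)$, giving $\exp(-t^2/(4d))$; setting $t = 2\sqrt{dx}$ yields exactly $e^{-x}$. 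The main obstacle throughout is the upper-tail optimization: because the $2x$ term is heavier than a Gaussian tail allows, one genuinely needs to let $\lambda$ approach $1/2$ as $x$ grows, so the $(1-2\lambda)^{-1}$ factor in the bound on $\psi_+$ cannot be discarded --- the whole calculation hinges on keeping it.
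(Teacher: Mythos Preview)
Your proposal is correct: the Cram\'er--Chernoff argument with the estimates $\psi_+(\lambda)\le \lambda^2/(1-2\lambda)$ and $\psi_-(\lambda)\le \lambda^2$, together with your choices $\lambda=\sqrt{x/d}/(1+2\sqrt{x/d})$ and $\lambda=t/(2d)$, yields exactly the stated bounds (the upper-tail algebra indeed collapses to $-x$ once you set $u=\sqrt{x/d}$ and note $1-2\lambda=1/(1+2u)$).

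As for comparison: the paper does not prove this proposition at all --- it simply quotes it as Equations~(4.3) and~(4.4) of Laurent--Massart and moves on. So there is no ``paper's own proof'' to compare against. Your argument is in fact the standard derivation (and essentially the one Laurent and Massart themselves give), so you have supplied what the paper deliberately omits.
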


This will be used as a technical ingredient in the following derivations.

\begin{lemma}\label{lem:helper-1}
Let $X_1,\ldots,X_k \sim \cN(\delta,1)$ and $X_{k+1},\ldots,X_m \sim \cN(0,1)$, and $\delta > 0$ and $$f(\delta) = \P[X_i > X_j\,, \,  \forall i \in [k], j \in [m] \sm [k]].$$ Then $$f(\delta) \leq \exp(\delta k \sqrt{2 \log m}) / \binom{m}{k}.$$
\end{lemma}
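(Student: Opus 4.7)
The plan is to prove Lemma~\ref{lem:helper-1} via a change-of-measure argument that reduces $f(\delta)$ to a moment-generating-function bound on the sum of the top-$k$ order statistics of $m$ i.i.d.\ standard Gaussians. First I would rewrite $f(\delta)$ as an expectation under the symmetric, fully-centered measure by absorbing the mean shift into a likelihood ratio on the first $k$ coordinates. With $A := \{X_i > X_j : i \in [k], j \in [m] \sm [k]\}$, this gives
\begin{align*}
f(\delta) = \E_{X_1,\ldots,X_m \iid \cN(0,1)}\Bigl[\ind_A \cdot \exp\Bigl(\delta \sum_{i=1}^k X_i - \tfrac{k\delta^2}{2}\Bigr)\Bigr].
\end{align*}

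Next I would use exchangeability. Under the centered i.i.d.\ law, conditional on $A$ the values $X_1,\ldots,X_k$ are precisely the top $k$ order statistics of $X_1,\ldots,X_m$ in some order, so on $A$ the sum $\sum_{i=1}^k X_i$ equals $T := \sum_{i=m-k+1}^m X_{(i)}$. Moreover, $A$ has probability $1/\binom{m}{k}$ and (by exchangeability of the coordinates) the identity of the top-$k$ indices is independent of the multiset of top-$k$ values, so $T \mid A$ has the same law as $T$. Combining with the previous display yields
\begin{align*}
f(\delta) = \frac{1}{\binom{m}{k}}\, e^{-k\delta^2/2}\, \E\bigl[e^{\delta T}\bigr].
\end{align*}

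It then suffices to show $\E[e^{\delta T}] \leq \exp(\delta k\sqrt{2\log m} + k\delta^2/2)$, which I would prove using two standard Gaussian facts. First, the representation $T(X) = \max_{S \in \binom{[m]}{k}} \sum_{i \in S} X_i$ exhibits $T$ as a supremum of linear functionals with coefficient vectors of $\ell_2$-norm $\sqrt{k}$, so $T$ is $\sqrt{k}$-Lipschitz with respect to the Euclidean norm on $\R^m$. By Gaussian concentration of Lipschitz functions, $T - \E[T]$ is $\sqrt{k}$-sub-Gaussian, giving $\E[e^{\delta(T - \E T)}] \leq e^{k\delta^2/2}$. Second, $T \leq k\max_i X_i$ together with the classical MGF bound $\E[\max_i X_i] \leq \sqrt{2\log m}$ yields $\E[T] \leq k\sqrt{2\log m}$. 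Plugging these two bounds into the previous display completes the proof. The only delicate point is ensuring the Lipschitz constant of $T$ is $\sqrt{k}$ rather than $k$; this follows cleanly from the supremum-of-linear-functionals representation, since each constraint vector $\ind_S$ has $\ell_2$-norm exactly $\sqrt{k}$.
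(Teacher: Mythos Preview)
Your proof is correct but follows a genuinely different route from the paper's. The paper differentiates $f(\delta)$ directly, obtaining $f'(\delta) = \E\bigl[\sum_{i=1}^k (X_i-\delta)\mid X\in\Omega\bigr]\,f(\delta)$, then uses a stochastic monotonicity argument to bound the conditional expectation by $\E[Z_{(1)}+\cdots+Z_{(k)}]\le k\sqrt{2\log m}$, and finishes with Gr\"onwall's inequality. Your approach instead performs an exponential change of measure to rewrite $f(\delta)$ exactly as $\binom{m}{k}^{-1}e^{-k\delta^2/2}\,\E[e^{\delta T}]$, where $T$ is the sum of the top-$k$ order statistics of $m$ i.i.d.\ standard Gaussians; the key step here is the independence of the order statistics and the rank vector, which you invoke correctly. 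You then control $\E[e^{\delta T}]$ via the $\sqrt{k}$-Lipschitz representation of $T$ and Gaussian concentration (Herbst argument), together with $\E[T]\le k\sqrt{2\log m}$.

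Both arguments use the same maximal inequality for $\E[T]$, but yours trades the Gr\"onwall/ODE step and the monotone coupling for a one-shot change of measure plus the Lipschitz concentration inequality. Your route is arguably cleaner and makes the role of the sub-Gaussian fluctuations of $T$ explicit (the $e^{\pm k\delta^2/2}$ factors cancel exactly), while the paper's route is slightly more elementary in that it never needs the full MGF form of Gaussian concentration, only the first-moment bound on the top-$k$ sum.
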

\begin{proof}
By symmetry $f(0) = 1/\binom{m}{k}$.

We can write $$f(\delta) = \frac{1}{(2\pi)^{m/2}} \int_{\Omega} \exp(-\sum_{i=1}^k (x_i - \delta)^2/2) \exp(-\sum_{i=k+1}^m x_i^2 / 2) dx,$$ where $\Omega = \{x : x_i > x_j \ \forall\, i \in [k], j \in [m] \sm [k]\}$. 

In the next sequence of inequalities we use the following facts:
\begin{enumerate}[(a)]
  \item The inequality:
  $$
  \E[X_1 + \dots + X_k \mid X_i \geq z \  \forall \, i \in [k]] \leq \E[X_1 + \dots + X_k \mid X_i \geq z + \delta 
\ \forall\, i \in [k]]
  $$
  \item  $Z_{(1)} \geq \dots \geq Z_{(m)}$ denotes the order statistics of standard Gaussian random variables
  \item The maximal inequality: $\E[\max_{i} Z_i] \leq \sqrt{2 \log m}$.
\end{enumerate}
It holds 
\begin{align*}
\frac{d}{d\delta}f(\delta) &= \frac{1}{(2\pi)^{m/2}} \int_{\Omega} (\sum_{i=1}^k (x_i - \delta))  \exp(-\sum_{i=1}^k (x_i - \delta)^2/2) \exp(-\sum_{i=k+1}^m x_i^2 / 2) dx \\
&= \E[\sum_{i=1}^k (X_i - \delta) \mid X \in \Omega] f(\delta) \\
&\stackrel{(a)}{\leq} \E[\sum_{i=1}^k (X_i - \delta) \mid X_i - \delta > X_j \ \forall\,  i \in [k], j \in [m] \sm [k]] f(\delta) \\
&\stackrel{(b)}{=} \E_{Z \sim \cN(0,I_d)}[Z_{(1)} + \dots + Z_{(k)}] f(\delta) \\
&\stackrel{(c)}{\leq} (k\sqrt{2 \log m}) f(\delta)\,.
\end{align*}
By (d) Gr{\"o}nwall's inequality, 
\begin{align*} f(\delta) \leq f(0) \exp(\delta k \sqrt{2\log m})\,.
\end{align*}
\end{proof}

\begin{lemma}\label{lem:helper-2}
Let $Z \sim \cN(0,\sigma^2)$. Then for any $t > 0$, $\E[\exp(t|Z|)] \leq 2\exp(t^2 \sigma^2 / 2)$.
\end{lemma}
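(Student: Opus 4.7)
The plan is to reduce the bound on the moment generating function of $|Z|$ to the standard Gaussian MGF via the elementary inequality $\exp(t|Z|) \leq \exp(tZ) + \exp(-tZ)$, valid for all $t > 0$. First I would observe that since $|Z| = \max(Z, -Z)$, we have the pointwise bound
\begin{equation*}
\exp(t|Z|) = \max(\exp(tZ), \exp(-tZ)) \leq \exp(tZ) + \exp(-tZ)\,.
\end{equation*}
Taking expectations on both sides and using linearity gives $\E[\exp(t|Z|)] \leq \E[\exp(tZ)] + \E[\exp(-tZ)]$.

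Next, I would invoke the standard formula for the MGF of a centered Gaussian: for $Z \sim \cN(0,\sigma^2)$ and any real $s$, $\E[\exp(sZ)] = \exp(s^2\sigma^2/2)$. Applying this with $s = t$ and $s = -t$ (both give the same value since the exponent depends on $s^2$) yields
\begin{equation*}
\E[\exp(tZ)] + \E[\exp(-tZ)] = 2\exp(t^2\sigma^2/2)\,,
\end{equation*}
which establishes the claimed bound.

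There is no real obstacle here; the lemma is a routine sub-Gaussian moment generating function estimate, and the only ingredient beyond basic algebraic manipulations is the closed-form Gaussian MGF, which is standard. The factor of $2$ in the conclusion comes from the union-type bound $\max(a,b) \leq a+b$ applied to nonnegative quantities, and is tight up to constants since $|Z|$ has essentially the same tail as $Z$ on one side combined with symmetry.
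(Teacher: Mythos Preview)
Your proof is correct and essentially matches the paper's approach: both reduce to the Gaussian MGF $\E[\exp(sZ)] = \exp(s^2\sigma^2/2)$, with the factor of $2$ arising from handling the absolute value. The only cosmetic difference is that the paper uses the symmetry of $Z$ to write $\E[\exp(t|Z|)] \leq 2\E[\exp(tZ)]$ directly and then computes the MGF by completing the square, whereas you use the pointwise bound $\exp(t|Z|) \leq \exp(tZ) + \exp(-tZ)$; these are equivalent.
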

\begin{proof}
\begin{align*}\E[\exp(t|Z|)] &\leq 2\E[\exp(tZ)] = \frac{2}{\sqrt{2\pi}\sigma}\int \exp(-z^2 / (2\sigma^2) \exp(tz) dz \\
&= \frac{2}{\sqrt{2\pi} \sigma} \int \exp(-(z-t\sigma)^2 / (2\sigma^2)) \exp(t^2\sigma^2 / 2) dz = 2\exp(t^2 \sigma^2 / 2)\,.
\end{align*}
\end{proof}

\begin{lemma}\label{lem:helper-3} Let $S \subseteq [m]$ with $|S| = k$, and let $r_1,\ldots,r_m \sim \cN(0,I_d)$. Define $U_S = \{x : \<x,r_i\> > \<x,r_j\> \forall i \in S, j \in [m] \sm S\}$. There is a universal constant $C$ s.t. if $d \geq Ck(\log m)^2$, then $\E[\mu(U_S)^2] \leq 3 / \binom{m}{k}^2$ for any rotationally-invariant probability measure $\mu$.
\end{lemma}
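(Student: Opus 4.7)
The plan is to compute the second moment via $\E_r[\mu(U_S)^2] = \E_{x, x' \sim \mu}[\P_r[x, x' \in U_S]]$. Since $U_S$ is a cone and $\mu$ is rotationally invariant, this joint probability depends only on $c = \cos\theta(x, x')$. Projecting each routing vector $r_i$ onto the two-dimensional plane spanned by $x$ and $x'$ yields iid pairs $(\alpha_i, \beta_i) \sim \cN(0, I_2)$, and the joint event becomes $\{\text{top-}k(\alpha) = S \text{ and } \text{top-}k(c\alpha + \sqrt{1-c^2}\beta) = S\}$. Denoting this probability $h(c)$, it suffices to prove $\E_c[h(c)] \leq 3/\binom{m}{k}^2$, where $c$ has the distribution of the inner product of two independent uniform points on $S^{d-1}$.

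I would split the integral at a threshold $t^\star = \Theta(1/(k\log m))$. For $|c| > t^\star$, the trivial bound $h(c) \leq \P_r[\text{top-}k(\alpha) = S] = 1/\binom{m}{k}$ combined with the sub-Gaussian tail $\P[|\cos\theta| > t^\star] \leq 2\exp(-(t^\star)^2 (d-1)/2)$ yields a tail contribution of at most $1/\binom{m}{k}^2$ under the hypothesis $d \geq Ck(\log m)^2$ after choosing $C$ large enough. For $|c| \leq t^\star$, I would decompose $h(c) = \E_\alpha[\mathbf{1}[\text{top-}k(\alpha) = S] \cdot q(\alpha, c)]$, where $q(\alpha, c) = \P_z[\text{top-}k(z + \delta) = S]$ with shifts $\delta_i = c\alpha_i/\sqrt{1-c^2}$ and $z \sim \cN(0, I_m)$. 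On the good event that the shifts are correctly ordered (automatic from $\text{top-}k(\alpha) = S$) and their spread $\Delta = \max_{i \in S}\delta_i - \min_{j \notin S}\delta_j$ is controlled, a suitable application of Lemma~\ref{lem:helper-1} gives $q(\alpha, c) \leq \exp(O(\Delta \cdot k\sqrt{\log m}))/\binom{m}{k}$; standard Gaussian order-statistic bounds imply that $\Delta = O(c\sqrt{\log m})$ with probability $1 - o(1/\binom{m}{k})$, so that the bad sub-event contributes negligibly to $h(c)$.

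The main technical obstacle is that Lemma~\ref{lem:helper-1} directly handles only uniform shifts ($\delta_i = \Delta$ for $i \in S$, $0$ elsewhere), whereas our shifts $\delta_i = c\alpha_i/\sqrt{1-c^2}$ are not uniform. I would bridge this by a monotonicity argument: replacing $\delta_i$ with $\tilde\delta_i = \max_{l \in S}\delta_l$ for $i \in S$ and $\tilde\delta_j = \min_{l \notin S}\delta_l$ for $j \notin S$ only increases $q$, since enlarging each pairwise gap $\tilde\delta_i - \tilde\delta_j \geq \delta_i - \delta_j$ weakens every constraint $z_i - z_j > \delta_j - \delta_i$. After this reduction, Lemma~\ref{lem:helper-1} applies directly with the common gap $\tilde\delta_i - \tilde\delta_j$. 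Careful tracking of constants then ties the threshold $t^\star$, the concentration of $\cos\theta$, and the gap bound together so that the hypothesis $d \geq Ck(\log m)^2$ suffices.
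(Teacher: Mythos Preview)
Your approach is essentially the paper's: both reduce to a two-dimensional problem via rotational invariance, then bound the probability that fresh Gaussians with non-uniform shifts have top-$k$ equal to $S$ by the same monotonicity trick (your $\tilde\delta$ argument is exactly the passage the paper makes when it replaces the constraint $x_1 r_{i,1} + Z r_{i,2} > x_1 r_{j,1} + Z r_{j,2}$ by the weaker $|x_1| h + Z r_{i,2} > Z r_{j,2}$), and then invoke Lemma~\ref{lem:helper-1}. The only tactical difference is that the paper does not split at a hard threshold on $|c|$: it fixes one sample at $e_1$, writes the other as $x_1 e_1 + Z e_2$, restricts to a good event on the gap $h=\max_{i\in S}r_{i,1}-\min_{j\notin S}r_{j,1}$ and on $Z^2\ge d/2$, and then integrates $x_1$ out via the moment-generating-function bound of Lemma~\ref{lem:helper-2}. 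This MGF step is slightly more efficient than your hard cutoff and saves a logarithmic factor in the required $d$.

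One quantitative point to revisit: your claim ``$\Delta = O(c\sqrt{\log m})$ with probability $1 - o(1/\binom{m}{k})$'' is not quite strong enough. On the bad sub-event you can only use the trivial bound $q\le 1$, so its contribution to $h(c)$ is at most its own probability, and you need that to be $o\big(1/\binom{m}{k}^{2}\big)$, not $o\big(1/\binom{m}{k}\big)$. Since $\P[\max_l|\alpha_l| > C\sqrt{\log m}] \asymp m^{1-C^2/2}$ unconditionally, getting below $\binom{m}{k}^{-2}$ forces $C$ to be of order $\sqrt{k}$, i.e.\ effectively $\Delta = O(c\sqrt{k\log m})$; this is precisely why the paper takes $h\le C\sqrt{k\log m}$ rather than $C\sqrt{\log m}$. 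Adjust your threshold $t^\star$ and constant tracking accordingly (or, better, replace the hard threshold by an MGF average as the paper does).
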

\begin{proof}
By homogeneity of $U_S$ and rotational invariance of the distribution of $r_1,\ldots,r_m$ and $\mu$, we have
\begin{align*}
\E[\mu(U_s)^2] &= \E_r[\E_{x',x'' \sim \mu}[1(x' \in U_S)1(x'' \in U_S)]] \\
&= \E_r[\E_{x \sim \cN(0,I_d)}[1(e_1 \in U_S)1(x \in U_S)]] = (\ast)\,.
\end{align*}
Define $h = \max_{i \in S, j \not\in S} x_{i,1} - x_{j,1}$, and let $Z = \sqrt{x_2^2 + \dots + x_d^2}$, so that $Z^2 \sim \chi^2_{d-1}$. Let $E$ be the event that $\{h \leq C \sqrt{k \log m}\} \cap \{Z^2 \geq d/2\}$ for a constant $C$ that we will determine later. Then by (a) rotating $x$ to be in $\mathrm{span}\{e_1,e_2\}$, by (b) Lemma~\ref{lem:helper-1}, and 
by (c) Lemma~\ref{lem:helper-2},
\begin{align*}
(\ast) &\leq \E_r[1(e_1 \in U_S)\E_{x \sim \cN(0,I_d)}[1(E) 1(x \in U_S)]] + \P[\neg E] \\
&\stackrel{(a)}{\leq} \E_r[1(e_1 \in U_S)\E_{x_1 \sim N(0,1),Z^2 \sim \chi^2_{d-1}}[1(E)1(x_1r_{i,1} + Zr_{i,2} > x_1r_{j,1}+Zr_{j,2} \forall i \in S, j \in [m] \sm S]] + \P[\neg E] \\
&\leq \E_r[1(e_1 \in U_S)\E_{x_1 \sim \cN(0,1), Z^2 \sim \chi^2_{d-1}}[1(E)1(|X_1|h + Zr_{i,2} > Z_{r_{j,2}} \forall i \in S, j \in [m] \sm S]] + \P[\neg E] \\
&\stackrel{(b)}{\leq} \E_r[1(e_1 \in U_S)\E_{x_1\sim \cN(0,1),Z^2 \sim \chi^2_{d-1}}[1(E)\exp(|x_1|h\sqrt{2\log m}/Z)/\binom{m}{k}]] + \P[\neg E] \\
&\leq \E_r[1(e_1 \in U_S)\E_{x_1\sim \cN(0,1),Z^2 \sim \chi^2_{d-1}}[1(E)(\exp(2|x_1|h\sqrt{\log m} / \sqrt{d})/\binom{m}{k}])] + \P[\neg E] \\
&\stackrel{(c)}{\leq} \E_r[1(e_1 \in U_S)(2\exp(2C^2k(\log m)^2/d)/\binom{m}{k}] + \P[\neg E] \\
&= 2\exp(2C^2k(\log m)^2/d)/\binom{m}{k}^2 + \P[\neg E] \\
&\leq2\exp(2C^2k(\log m)^2/d)/\binom{m}{k}^2 + 1/m^{-\Omega(\sqrt{C})k} + \exp(-C'd)\,,
\end{align*}
which is $\leq 3 / \binom{m}{k}^2$ for $d \geq C'' k (\log m)^2$ for large enough constant $C''$.
\end{proof}

\begin{lemma}\label{lem:helper-4} There exists a universal constant $C > 0$ such that, with $r_1,\ldots,r_m \sim \cN(0,I_d)$ and $U_S$ defined as in Lemma~\ref{lem:helper-3}, we have $\P[\mu(U_S) \leq 1 / (2\binom{m}{k})] \leq 8/9$ whenever $d \geq Ck(\log m)^2$ and $\mu$ is a rotationally-invariant probability measure. \end{lemma}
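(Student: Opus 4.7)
The plan is to apply a Paley--Zygmund style second moment argument to the random variable $X_S := \mu(U_S)$, where the randomness is over the Gaussian routing vectors $r_1,\ldots,r_m$. As the sketch of Lemma~\ref{lem:maintext-1} already indicates, a first moment calculation alone is insufficient, so the proof combines the first moment with the second moment bound already established in Lemma~\ref{lem:helper-3}.

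First I would establish the first moment identity $\E[\mu(U_S)] = 1/\binom{m}{k}$. This follows by symmetry: the joint law of $(r_1,\ldots,r_m)$ is invariant under coordinate permutations, so $\E[\mu(U_S)]$ does not depend on the choice of $S \in \binom{[m]}{k}$; since the regions $\{U_S\}_{S \in \binom{[m]}{k}}$ partition $\R^d$ up to a Lebesgue-null set and $\mu$ is a probability measure, summing yields $\binom{m}{k} \cdot \E[\mu(U_S)] = 1$. Next, from Lemma~\ref{lem:helper-3} I would obtain a second moment bound of the form $\E[\mu(U_S)^2] \leq (9/4)/\binom{m}{k}^2$, enlarging the constant $C$ in the hypothesis $d \geq Ck(\log m)^2$ if necessary. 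The final step is Paley--Zygmund at threshold $\theta = 1/2$, which gives
\[
\P\!\left[\mu(U_S) > \tfrac{1}{2\binom{m}{k}}\right] \;\geq\; \left(1 - \tfrac{1}{2}\right)^2 \frac{\E[\mu(U_S)]^2}{\E[\mu(U_S)^2]} \;\geq\; \tfrac{1}{4}\cdot\tfrac{4}{9} \;=\; \tfrac{1}{9},
\]
and the complementary bound is exactly the desired $\P[\mu(U_S) \leq 1/(2\binom{m}{k})] \leq 8/9$.

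The only real obstacle is quantitative rather than conceptual: the nominal constant $3$ in Lemma~\ref{lem:helper-3} would only yield $1/12$ via Paley--Zygmund, which is too weak. One has to revisit that proof and observe that the leading term is $2\exp(2C^2 k(\log m)^2/d)/\binom{m}{k}^2$ plus exponentially small remainders, which can be made arbitrarily close to $2/\binom{m}{k}^2 < (9/4)/\binom{m}{k}^2$ by taking the constant in $d \geq Ck(\log m)^2$ sufficiently large; equivalently, one restates Lemma~\ref{lem:helper-3} with $9/4$ in place of $3$. No additional probabilistic input beyond Lemma~\ref{lem:helper-3} is required.
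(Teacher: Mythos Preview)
Your proof is correct, but the paper takes a slightly different route that avoids the need to sharpen the constant in Lemma~\ref{lem:helper-3}. Instead of invoking Paley--Zygmund as a black box, the paper writes $X=\mu(U_S)\binom{m}{k}$, conditions on the event $E=\{X\le 1/2\}$, and sets $p=\P[E]$, $a=\E[X\mid E]$, $b=\E[X\mid \neg E]$. From $\E[X]=1$ one has $ap+b(1-p)=1$; from Jensen applied to each conditional one has $a^2p+b^2(1-p)\le \E[X^2]\le 3$; and crucially $a\le 1/2$ since $X\le 1/2$ on $E$. Eliminating $b$ and simplifying yields $p(a^2-2a+3)\le 2$, and since $a^2-2a+3=(a-1)^2+2\ge 9/4$ for $a\le 1/2$, one gets $p\le 8/9$ directly.

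The difference is that the paper exploits the extra information $a\le 1/2$ (which Paley--Zygmund discards), buying exactly the factor needed to work with the constant $3$ as stated. Your approach is equally valid and arguably more conceptual, but the price is that you must reopen the proof of Lemma~\ref{lem:helper-3} to push the bound from $3$ down to $9/4$; the paper's argument is self-contained given Lemma~\ref{lem:helper-3} as written.
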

\begin{proof}
Let $E$ be the event $\{\mu(U_S) \leq \frac{1}{2} \frac{1}{\binom{m}{k}}\}$. Let $p = \P_r[E]$ and $a = \E[\mu(U_S)  \binom{m}{k} \mid E]$ and $b = \E[\mu(U_S) \binom{m}{k} \mid \neg E]$. Then since $\E[\mu(U_S)] = 1 / \binom{m}{k}$ by symmetry, we have $ap+b(1-p) = 1$. We also clearly have $a < 1/2$. And, finally,
\begin{align*}
3 \geq \E[\mu(U_S)^2 / \binom{m}{k}^2] = \E[\mu(U_S)^2 / \binom{m}{k}^2 \mid E] p + \E[\mu(U_S)^2 / \binom{m}{k}^2 \mid \neg E] (1-p) \geq a^2 p + b^2 (1-p)\,,
\end{align*}
where the first inequality is by Lemma~\ref{lem:helper-3}.
Putting these together, we have $b = (1-ap) / (1-p)$, so $a^2p + ((1-ap) / (1-p))^2(1-p) \leq 3$, so $a^2p + (1-ap)^2 / (1-p) \leq 3$, so $a^2p(1-p) + (1-ap)^2 \leq 3(1-p)$, so $a^2p + 1 - 2ap \leq 3-3p$, so $a^2 - 2a + 3p \leq 2$, so $p \leq 2 / (a^2 - 2a + 3) \leq 2 / (1/4 + 2) \leq 8/9$.
\end{proof}

Finally we can prove the following lemma, which is stated as Lemma~\ref{lem:maintext-1} in the main text.
\begin{lemma}\label{lem:helper-14}
There is a universal constant $C \geq 0$ such that, for any $k \leq m$, and $d \geq Ck(\log m)^2$ and any rotationally-invariant probability measure $\mu$, there are routing vectors $r_1,\ldots,r_m \in \R^d$ defining regions $U_S = \{x : \<x,r_i\> > \<x,r_j\> \forall i \in S, j \in [m] \sm S\}$ for all $S \in \binom{[m]}{k}$ such that
\begin{align*}
|\{S : \mu(U_S) > \frac{1}{2 \binom{m}{k}}\}| \geq \frac{1}{9} \binom{m}{k}\,.
\end{align*}
\end{lemma}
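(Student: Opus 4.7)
The plan is to conclude \cref{lem:helper-14} from \cref{lem:helper-4} by a direct application of the probabilistic method (first moment). Specifically, I would sample $r_1,\ldots,r_m \sim \cN(0,I_d)$ i.i.d.\ and, for each $S \in \binom{[m]}{k}$, define the indicator
\[
X_S \;:=\; \mathbf{1}\!\left\{\mu(U_S) > \tfrac{1}{2\binom{m}{k}}\right\}.
\]
By \cref{lem:helper-4}, for each fixed $S$ we have $\P[X_S = 0] \leq 8/9$, hence $\E[X_S] \geq 1/9$, provided $d \geq C k (\log m)^2$ for a sufficiently large universal constant $C$.

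Linearity of expectation over all $\binom{m}{k}$ subsets then gives
\[
\E\!\left[\,\bigl|\{S : \mu(U_S) > \tfrac{1}{2\binom{m}{k}}\}\bigr|\,\right] \;=\; \sum_{S \in \binom{[m]}{k}} \E[X_S] \;\geq\; \tfrac{1}{9}\binom{m}{k}.
\]
Since the random variable $|\{S : \mu(U_S) > 1/(2\binom{m}{k})\}|$ has expectation at least $\binom{m}{k}/9$, there must exist a realization of $r_1,\ldots,r_m$ attaining this bound, yielding the desired routing vectors.

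\textbf{Obstacles.} There is essentially no obstacle at this stage: the heavy lifting has been done in \cref{lem:helper-1,lem:helper-3,lem:helper-4}, which establish the two-sided control on $\mu(U_S)$ (its expectation is exactly $1/\binom{m}{k}$ by symmetry, and its second moment is at most $3/\binom{m}{k}^2$, so Paley--Zygmund forces a constant fraction of $S$'s to exceed half the mean with constant probability). The remaining step is merely linearity of expectation plus the probabilistic method, and the constant $1/9$ transfers directly from \cref{lem:helper-4}.
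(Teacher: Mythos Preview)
Your proposal is correct and is exactly the argument the paper uses: the paper's proof of \cref{lem:helper-14} reads ``Immediate from Lemma~\ref{lem:helper-4},'' and the main-text sketch (Lemma~\ref{lem:maintext-1}) spells out precisely the linearity-of-expectation plus probabilistic-method step you describe.
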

\begin{proof}
Immediate from Lemma~\ref{lem:helper-4}.
\end{proof}

\subsection{Construction of constant experts that are far from each other}\label{app:sep-constant-2}
We now restate and prove Lemma~\ref{lem:maintext-2}.
\begin{lemma}\label{lem:helper-22}
There are universal constants $C, c > 0$ such that the following holds for $d \geq C k \log m$. There are vectors $v_1,\ldots,v_m$ such that if we define the sums $v_S = \sum_{i \in S} v_i$, these satisfy the following two conditions
\begin{itemize}
\item \textit{Bounded sums}. For any $S \in \binom{[m]}{k}$, we have $\|v_S\|^2 \leq 1$, and
\item \textit{Well-separated sums}. For any $S,S' \in \binom{[m]}{k}$ we have $\|v_S - v_{S'}\|^2 \geq |S \Delta S'| / (4k)$.
\end{itemize}
\end{lemma}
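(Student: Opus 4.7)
The plan is to realize the probabilistic construction sketched in the main text. I would draw $v_1, \ldots, v_m \iid \cN(0, \sigma^2 I_d)$ for $\sigma^2 = 1/(2dk)$, so that for every $S \in \binom{[m]}{k}$ the sum $v_S = \sum_{i \in S} v_i$ is a centered Gaussian with covariance $\frac{1}{2d} I_d$, and for every pair $S, S'$ the difference $v_S - v_{S'}$ is a centered Gaussian with covariance $\frac{|S\Delta S'|}{2dk} I_d$. In particular, $2d \|v_S\|^2 \sim \chi^2_d$, and if $\ell := |S \Delta S'|$ then $\frac{2dk}{\ell}\|v_S - v_{S'}\|^2 \sim \chi^2_d$; both norms are concentrated around $d$ times their per-coordinate variance.

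With this setup, both desired conditions reduce to chi-squared tail bounds. For boundedness, $\|v_S\|^2 \leq 1$ is equivalent to $2d\|v_S\|^2 \leq 2d$, and applying the upper tail of Proposition~\ref{prop:helper-10} with $x = d/5$ yields $\P[\|v_S\|^2 > 1] \leq e^{-d/5}$. For separation, $\|v_S - v_{S'}\|^2 \geq \ell/(4k)$ is equivalent to $\frac{2dk}{\ell}\|v_S - v_{S'}\|^2 \geq d/2$, and applying the lower tail with $x = d/16$ yields $\P[\|v_S - v_{S'}\|^2 < \ell/(4k)] \leq e^{-d/16}$.

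To finish, I union bound over the $\binom{m}{k} \leq m^k$ sets and the $\binom{\binom{m}{k}}{2} \leq m^{2k}$ ordered pairs. The total failure probability is at most $m^k e^{-d/5} + m^{2k} e^{-d/16}$, which is strictly less than $1$ once $d \geq C k \log m$ for a sufficiently large universal constant $C$ (any $C \geq 64$ suffices with room to spare). The probabilistic method then produces a realization of $(v_1, \ldots, v_m)$ satisfying both conditions simultaneously.

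I do not expect any real obstacle: the argument is a standard random packing in the style of Shannon's proof of the existence of good codes, and the only care required is to pick the variance $\sigma^2 = 1/(2dk)$ so that the expected values of $\|v_S\|^2$ and $\|v_S - v_{S'}\|^2$ sit comfortably on the correct sides of the target thresholds, leaving constant-factor slack that the $\chi^2$ concentration then exploits.
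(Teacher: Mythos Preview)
Your proposal is correct and essentially identical to the paper's proof: the same Gaussian construction with per-coordinate variance $1/(2dk)$, the same $\chi^2$ tail bounds from Proposition~\ref{prop:helper-10}, and the same union bound over at most $m^{2k}$ pairs. One minor arithmetic slip: with $x=d/5$ the upper-tail threshold $d+2\sqrt{dx}+2x\approx 2.29d$ exceeds $2d$, so you actually need something like $x=d/8$ (as the paper uses) to conclude $\P[\|v_S\|^2>1]\le e^{-d/8}$, but this does not affect the argument or the final union bound.
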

\begin{proof}
Draw Gaussian vectors $v_1,\ldots,v_d \sim N(0,I_d / (2dk))$. Notice that, for any $S,S' \in \binom{[m]}{k}$, we have $v_S \sim N(0,I_d / (2d))$ and that $v_S - v_{S'} \sim N(0,|S\Delta S'| I_d / (2kd))$. 

Notice that $(2d) \cdot \|v_S\|^2$ is distributed as a $\chi^2_d$ random variable, so by the tail bounds in Proposition~\ref{prop:helper-10}, it holds that
\begin{align*}
\P[\|v_S\|^2 > 1] = \P[(2d) \cdot \|v_S\|^2 > 2d] \leq \exp(-d/8)\,.
\end{align*}
Similarly, $(2dk / |S \Delta S'|) \cdot \|v_S-v_S'\|^2$ is distributed as a $\chi^2_d$ random variable, so by the tail bounds in Proposition~\ref{prop:helper-10}, it holds that
\begin{align*}
\P[\|v_S - v_{S'}\|^2 > |S \Delta S'| / (4k)] = \P[(2dk / |S \Delta S'|) \cdot \|v_S - v_{S'}\|^2 < d/2] \leq \exp(-d/16)\,.
\end{align*}
Letting $d \geq C k \log m$ for large enough constant $C$, and taking a union bound over all $\binom{m}{k}^2 \leq m^{2k}$ pairs $S,S'$, it follows that the vectors $v_1,\ldots,v_d$ satisfy the conditions of the lemma with high probability.
\end{proof}

\subsection{Proof of separation for constant experts}\label{app:sep-constant-3}

Given the proof sketch in the main text, the missing ingredient in the proof of Theorem~\ref{thm:sep-constant} is the following claim.

\begin{claim}\label{claim:helper-23}
There are sufficiently small universal constants $c,c' > 0$ such that 
$p \leq c \binom{m}{k}^{0.99}$ guarantees that
$p \leq \frac{1}{1000}\binom{m}{k} / (\binom{m}{\floor{c'k}} \binom{k}{\floor{c'k}})$ for any $m \geq 2k$.
\end{claim}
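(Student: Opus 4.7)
The plan is to reduce the desired inequality to the cleaner statement
$$
1000 c\, \binom{m}{\floor{c'k}}\binom{k}{\floor{c'k}} \leq \binom{m}{k}^{0.01},
$$
which follows by dividing the target conclusion through by $\binom{m}{k}^{0.99}$ and invoking the hypothesis $p \leq c\binom{m}{k}^{0.99}$. I would then split the analysis into two regimes based on whether $\floor{c'k} = 0$ or $\floor{c'k} \geq 1$.

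When $\floor{c'k} = 0$, the left-hand side equals $1000c$, and since $m \geq 2k \geq 2$ and $k \geq 1$ force $\binom{m}{k} \geq 2$, the choice $c \leq 1/1000$ already ensures $1000c \leq 2^{0.01} \leq \binom{m}{k}^{0.01}$. When $\floor{c'k} \geq 1$, I would apply the standard estimates $\binom{n}{\ell} \leq (en/\ell)^\ell$ and $\binom{m}{k} \geq (m/k)^k$ with $\ell = \floor{c'k}$ and $c'' := \ell/k \in (0, c']$, which together rewrite the target inequality as
$$
1000 c \leq (m/k)^{(0.01 - c'')k}\,(c''^2/e^2)^{c'' k}.
$$
Using $m/k \geq 2$ and $c'' \leq c' \leq 0.01$, the right-hand side is bounded below by $\big[2^{0.01-c''}(c''^2/e^2)^{c''}\big]^k$, so for $c \leq 1/1000$ taking $k$-th roots reduces the whole problem to the purely scalar inequality
$$
c''\big(\log 2 + 2 + 2\log(1/c'')\big) \leq 0.01\,\log 2 \quad \text{for all } c'' \in (0, c'].
$$

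The function on the left is continuous, vanishes at $0$, and is increasing on a neighborhood of $0$, so any sufficiently small universal $c' > 0$ satisfies the bound; paired with $c = 1/1000$ this completes the proof. The main obstacle is nothing conceptual, merely the algebraic bookkeeping: one must track the floor carefully (isolating the boundary regime $c' k < 1$ where the binomials collapse to $1$) and ensure the exponent $0.01 - c''$ is nonnegative, both of which become automatic the moment $c'$ is taken below $0.01$.
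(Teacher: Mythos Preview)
Your proof is correct and follows the same overall reduction as the paper: both arguments reduce the claim to showing $\binom{m}{\floor{c'k}}\binom{k}{\floor{c'k}} \lesssim \binom{m}{k}^{0.01}$ and then handle small $k$ (where the floor vanishes) separately. The execution differs in the technical tool. The paper bounds binomials through binary entropy, using $\binom{m}{\ell} \leq 2^{mH(\ell/m)}$ and $\binom{m}{k} \geq \frac{1}{m+1}2^{mH(k/m)}$, and then manipulates the entropy expressions to extract the exponent $4c'\log_2(1/c')$. You instead use the more elementary estimates $\binom{n}{\ell} \leq (en/\ell)^{\ell}$ and $\binom{m}{k} \geq (m/k)^{k}$, which leads directly to the scalar inequality $c''(\log 2 + 2 + 2\log(1/c'')) \leq 0.01\log 2$. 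Your route is a bit cleaner and avoids the $\log_2(m+1)$ correction term that the paper has to absorb; the paper's entropy bounds are tighter in principle but that precision is not needed here.
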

\begin{proof}
Since we may take $c' < 1/2000$ and $c \leq 1/1000$, we may assume that $k \geq 2000$ without loss of generality. Next, letting $H(p)$ denote the binary entropy, we have 
\begin{align*}
mH(c'k/m) &\leq c'k (\log_2(m/k) + 1/ \ln(2) - \log_2(c')) \\
&\leq 3\log_2(1/c') c'k (\log_2(m/k) + 1) \\
&\leq 4\log_2(1/c') c'k (\log_2(m/k) + 1 - \log_2(m+1)/k) \\
&\leq 4\log_2(1/c') c' (m H(k/m) - \log_2(m+1))\,,
\end{align*}
So by standard inequalities between the binomial coefficients and the entropy we have that
\begin{align*}
\binom{m}{\floor{c'k}} &\leq 2^{mH(c'k/m)} \\
&\leq 
\left(\frac{1}{m+1} 2^{mH(k/m)}\right)^{4 \log_2(1/c')c'} \\
&\leq \binom{m}{k}^{4c'\log_2(1/c')} \\
&\leq \binom{m}{k}^{0.0001}\,,\end{align*}
for small enough $c' > 0$. Therefore,
\begin{align*}
\binom{m}{k}^{0.99} \leq \binom{m}{k} / (\binom{m}{k})^{0.0002} \leq \binom{m}{k} / (\binom{m}{\floor{c'k}}\binom{k}{\floor{c'k}})\,.
\end{align*}
\end{proof}

Applying Claim~\ref{claim:helper-23} at the end of the proof of Theorem~\ref{thm:sep-constant} in the main text concludes the proof of the theorem.

\section{Proof for linear expert separation, Theorem~\ref{thm:sep-linear}}\label{app:sep-linear}

Let us restate the separation between MoE models with linear activation $\sigma(t) = t$ for convenience.

\begin{theorem}[Benefits of granularity; linear activation; restated Theorem~\ref{thm:sep-linear}]
There are universal constants $C,c > 0$ such that the following holds for $\sigma(t) = t$ and either choice of $\mu = N(0,I_d / d)$ or $\mu = \mathrm{Unif}[\BB]$. Suppose that $d \geq Ck (\log m)^2$ and that $m \geq 2k$ and that $w \geq C \log m$, and that
\begin{align*}
\binom{m'}{k'} < c\binom{m}{k}^{0.99}\,.
\end{align*}Then there is a $(m,k,w,d)$-MoE model $f$ such that for all $(m',k',w',d')$-MoE models $f'$ we have
$$\E_{x \sim \mu}[\|f(x)-f'(x)\|^2] > c \E_{x \sim \mu}[\|f(x)\|^2].$$
\end{theorem}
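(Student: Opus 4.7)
The plan is to reuse the routing-vector construction from Lemma~\ref{lem:maintext-1} verbatim and to follow the linear-programming framework of the proof of Theorem~\ref{thm:sep-constant}, replacing only the expert construction and the per-region error lower bound. The new ingredient is Lemma~\ref{lem:maintext-3}, which upgrades the trivial bound $\|\cdot\|^2 \geq 0$ to a spectral lower bound on the $L^2(\mu|_U)$ discrepancy of two linear maps on any set $U$ of non-negligible measure.

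First I would build experts $M_1,\ldots,M_m \in \R^{d \times d}$ as independent rank-$w$ random matrices $M_j = \alpha P_j Q_j^\top$, with $P_j, Q_j \in \R^{d \times w}$ having i.i.d.\ $N(0,1)$ entries and $\alpha$ chosen so that $\E\|M_j\|_F^2 = 1/k$. I then need two properties to hold simultaneously with positive probability: \emph{boundedness} $\|M_S\|_F^2 \leq 1$ for every $S \in \binom{[m]}{k}$ (a routine Wishart-tail plus union bound), and \emph{spectral separation} -- for every pair $S, S' \in \binom{[m]}{k}$ with $|S \Delta S'| \geq c' k$ and every matrix $B$ with $\rank(B) \leq \kappa := C'' k \log m$,
\begin{align*}
\|M_S - M_{S'} - B\|_F^2 \;\geq\; c\,|S \Delta S'|/k.
\end{align*}
Here $M_S - M_{S'}$ is a sum of $|S \Delta S'|$ independent rank-$w$ Gaussian matrices, so its rank is at most $|S \Delta S'|\,w$, which comfortably exceeds $\kappa$ once $w \geq C\log m$. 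By Courant--Fischer the claim reduces to lower bounding $\sum_{j > \kappa} \sigma_j(M_S - M_{S'})^2$, and standard non-asymptotic random matrix theory for Gaussian ensembles gives that these singular values are balanced, so removing any $\kappa$ of them leaves $\Omega(1)$ fraction of the Frobenius mass; a union bound over the $\binom{m}{k}^2 \leq m^{2k}$ pairs closes this step.

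With these experts in hand, I would rerun the LP argument on the same vertex set $\cV$ and edge set $\cE$, modifying only the passage from \eqref{ineq:constant-argument-start} to \eqref{ineq:constant-argument-end}. Applying Young's inequality followed by Lemma~\ref{lem:maintext-3} twice (with $U = U_S \cap V_i$ and $U = U_{S'} \cap V_i$, each of measure $\Omega(1/\binom{m}{k}^2)$ so that the allowed rank is $\kappa$) yields, for every edge $\{S, S'\} \in \cE$ and every $i \in [p]$,
\begin{align*}
\mathrm{err}(S,i) + \mathrm{err}(S',i) \;\geq\; \frac{c}{2d}\min_{\rank(B) \leq 2\kappa}\|M_S - M_{S'} - B\|_F^2 \;\geq\; \frac{c'}{dk}|S \Delta S'| \;\geq\; \frac{c''}{d}.
\end{align*}
Summing exactly as in the constant case gives $\E\|f-f'\|^2 = \Omega(1/d)$, while the standard bound $\E_{x \sim \mu|_U}\|Mx\|^2 \leq C\|M\|_F^2/d$ on large-measure $U$ combined with boundedness gives $\E\|f\|^2 = O(1/d)$, so the desired constant-ratio lower bound follows.

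The main obstacle is the uniform spectral separation property: controlling $\|M_S - M_{S'} - B\|_F$ over the $O(\binom{m}{k}^2)$ pairs and all rank-$\kappa$ matrices $B$ simultaneously requires delicate random-matrix concentration. The hypothesis $w \geq C\log m$ enters precisely here, as it guarantees that the rank of $M_S - M_{S'}$ exceeds the adversary's rank budget $\kappa$ by a constant factor, so a rank-$\kappa$ perturbation cannot destroy the Frobenius mass on which the separation relies.
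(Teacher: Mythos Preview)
Your plan is essentially the paper's own proof: same routing-vector lemma, same random product-Gaussian experts $M_j = \alpha P_jQ_j^\top$, same use of Lemma~\ref{lem:maintext-3} to turn per-region $L^2$ error into a spectral quantity, and the same fractional-matching LP to aggregate. Two technical steps as written would fail, though both are easily patched.

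First, the claimed ``standard bound'' $\E_{x\sim\mu|_U}\|Mx\|^2 \le C\|M\|_F^2/d$ for large-measure $U$ is false: a cone $U_S$ of measure $\sim\binom{m}{k}^{-1}$ can have $\|\E_{\mu|_{U_S}}[xx^\top]\|_{\mathrm{op}}$ of order $1$, not $1/d$ (take any narrow cone around a coordinate axis). So you cannot deduce $\E\|f\|^2=O(1/d)$ from $\|M_S\|_F^2\le 1$ alone. The paper instead computes $\E_{M_1,\dots,M_m}\bigl[\E_x\|f(x)\|^2\bigr]$ directly --- which with your normalization equals $1/d$ --- and applies Markov with probability $1/2$; this gives the same conclusion without any conditional covariance control.

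Second, you write ``each of measure $\Omega(1/\binom{m}{k}^2)$'' as if this holds automatically for every $U_S\cap V_i$ appearing in the LP, but it does not: many intersections can be arbitrarily small, and for those Lemma~\ref{lem:maintext-3} gives no control because the rank threshold $\kappa$ blows up. The paper handles this by running the LP only over the restricted index set $\cH=\{(S,i):\mu(U_S\cap V_i)\in[1/(4\binom{m}{k}p),\,20/\binom{m}{k}]\}$ and first checking that $\sum_{(S,i)\in\cH}\mu(U_S\cap V_i)\ge 3/100$. You should make this restriction explicit; the constant-case LP does not need it because there $\mathrm{err}(S,i)$ is independent of $\mu(U_S\cap V_i)$. (A minor related point: Lemma~\ref{lem:maintext-3} must be applied before, not after, the Young/parallelogram step, since the two errors live on different conditional measures.)
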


The proof can be broken into four parts:
\begin{enumerate}
\item We prove a technical lemma stating that for any high-probability set $U$, the distribution $\mu|_U$ has high-rank covariance; see Appendix~\ref{app:sep-linear-1}.
\item Next, we provide a probabilistic construction of linear experts $M_1,\ldots,M_m$ that are well-separated, in the sense that for distinct sets $S,S' \in \binom{[m]}{k}$, the difference $(\sum_{j \in S} M_j) - (\sum_{j' \in S'} M_{j'})$ has high (numerical) rank; see Appendix~\ref{app:sep-linear-2}.
\item Next, we prove Lemma~\ref{lem:maintext-3}, which is the crucial lemma that allows us to control the approximation error of a linear expert by another linear expert on a large-enough subset of the input; Appendix~\ref{app:sep-linear-3}.
\item Finally, we combine the ingredients to prove that an MoE with the routing vectors and experts constructed by the above lemmas is inapproximable by an MoE with many fewer possible configurations of experts; see Appendix~\ref{app:sep-linear-4}.

\end{enumerate}

In the below calculations some of the constants are loose and we do not seek to optimize them here.

\subsection{Large-volume sets have high-rank covariance}\label{app:sep-linear-1}

\begin{lemma}[Tube volumes in Gaussian measure]\label{lem:helper-5}
For any $t > 0$, $n \leq d$, and $p \in \R^d$ we have
\begin{align*}
\P_{x \sim \cN(0,I_d)}[x-p \in [-t,t]^n \times (-\infty,\infty)^{d-n}] \leq (t\sqrt{2/\pi})^n\,.
\end{align*}
\end{lemma}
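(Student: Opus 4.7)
The plan is to use the independence of the coordinates of $x \sim \cN(0,I_d)$ and then bound each one-dimensional slab probability by the maximum of the standard Gaussian density. The statement is essentially a consequence of the fact that the standard normal density $\phi(u) = (2\pi)^{-1/2}e^{-u^2/2}$ is bounded above by $(2\pi)^{-1/2}$.

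First, I would observe that the event $\{x-p \in [-t,t]^n \times (-\infty,\infty)^{d-n}\}$ is precisely the intersection of the events $\{|x_i - p_i| \leq t\}$ for $i=1,\dots,n$, with no constraint on the remaining coordinates. Since the coordinates of $x \sim \cN(0,I_d)$ are independent standard Gaussians, the probability factorizes as
$$\P_{x \sim \cN(0,I_d)}[x-p \in [-t,t]^n \times (-\infty,\infty)^{d-n}] = \prod_{i=1}^n \P_{x_i \sim \cN(0,1)}[|x_i - p_i| \leq t]\,.$$

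Next, for each $i \in [n]$ I would bound the one-dimensional factor by writing
$$\P_{x_i \sim \cN(0,1)}[|x_i - p_i| \leq t] = \int_{p_i - t}^{p_i + t} \phi(u)\, du \leq 2t \cdot \max_{u \in \R} \phi(u) = 2t \cdot \frac{1}{\sqrt{2\pi}} = t\sqrt{2/\pi}\,.$$
Multiplying these $n$ bounds together yields the claimed inequality $(t\sqrt{2/\pi})^n$.

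There is no real obstacle here: the proof is a one-line independence argument combined with the trivial bound of a Gaussian integral by (length of interval) times (peak density). The only thing worth noting is that the bound is tight (up to the factor $\sqrt{2/\pi}$) when $p$ is the origin and $t$ is small, which justifies the constant appearing in the statement.
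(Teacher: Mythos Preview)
Your proof is correct and follows essentially the same approach as the paper: factorize the probability by independence of the Gaussian coordinates, then bound each one-dimensional integral $\int_{p_i-t}^{p_i+t}\phi(u)\,du$ by $2t\cdot\max_u\phi(u)=t\sqrt{2/\pi}$.
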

\begin{proof}
By direct calculation,
\begin{align*}
\P_{x\sim \cN(0,I_d)}[x-p \in [-t,t]^n \times (-\infty,\infty)^{d-n}] &= \prod_{i=1}^n\left(\frac{1}{\sqrt{2\pi}}\int_{-t+p_i}^{t+p_i} \exp(-x^2/2)dx\right) \\
&\leq (t\sqrt{2/\pi})^n\,.
\end{align*}
\end{proof}

\begin{lemma}[Tube volumes in uniform ball]\label{lem:helper-6} For any $t > 0$, $n \leq d$, and $p \in \R^d$ we have
\begin{align*}
\P_{x \sim \mathrm{Unif}[\BB]}[x \in p + ([-t/\sqrt{d},t/\sqrt{d}]^n \times (-\infty,\infty)^{d-n})] \leq 2^{-d+1} + (8t)^n\,.
\end{align*}
\end{lemma}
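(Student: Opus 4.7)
The plan is a direct volume calculation. Setting $C = [-t/\sqrt{d}, t/\sqrt{d}]^n \times \R^{d-n}$, I express the probability as a ratio of Lebesgue volumes,
\begin{align*}
\P_{x \sim \mathrm{Unif}[\BB]}[x \in p + C] = \frac{\vol((p+C) \cap \BB)}{\vol(\BB)}\,.
\end{align*}
By Fubini/Cavalieri, I integrate over the first $n$ coordinates $y$, which range over a box of $n$-volume $(2t/\sqrt{d})^n$. For each such $y$, the corresponding $(d-n)$-dimensional slice of $(p+C) \cap \BB$ is contained in a ball of radius $\sqrt{(1-\|y\|^2)_+} \leq 1$ and therefore has volume at most $\omega_{d-n} := \pi^{(d-n)/2}/\Gamma((d-n)/2+1)$. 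This gives $\vol((p+C) \cap \BB) \leq (2t/\sqrt{d})^n \, \omega_{d-n}$.

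The remaining step is to control $\omega_{d-n}/\omega_d = \pi^{-n/2} \, \Gamma(d/2+1)/\Gamma((d-n)/2+1)$. I would bound the Gamma ratio using log-convexity of $\Gamma$ together with $\psi(x) \leq \log x$ for $x \geq 1$ (where $\psi$ is the digamma function), which yields the uniform inequality $\Gamma(a+s)/\Gamma(a) \leq (a+s)^s$ for $a \geq 1, s \geq 0$. Applied with $a = (d-n)/2+1$ and $s = n/2$, this gives $\omega_{d-n}/\omega_d \leq \pi^{-n/2}(d/2+1)^{n/2} \leq (d/\pi)^{n/2}$ for $d \geq 2$. Combining,
\begin{align*}
\P_{x \sim \mathrm{Unif}[\BB]}[x \in p + C] \leq (2t/\sqrt{d})^n (d/\pi)^{n/2} = (2t/\sqrt{\pi})^n \leq (8t)^n\,,
\end{align*}
which already establishes the $(8t)^n$ part of the claimed bound. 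The additive $2^{-d+1}$ in the lemma statement is not needed by this direct argument; it is presumably slack left in to parallel an alternative Gaussian-comparison proof (e.g.\ via the representation $X = U^{1/d} Z/\|Z\|$ with $Z \sim \cN(0,I_d)$, $U \sim \mathrm{Unif}[0,1]$, where conditioning on $U \geq 2^{-d}$ would naturally produce such a tail), or to absorb degenerate cases like $d = 1$.

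I expect the main obstacle to be the Gamma-ratio estimate for odd $n$, where $(d-n)/2$ is a half-integer and the ratio does not telescope into an elementary product; invoking the log-convexity bound above handles even and odd $n$ uniformly in one line, and I do not anticipate any further difficulty. All other steps are elementary slicing and algebra.
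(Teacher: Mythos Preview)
Your argument is correct and in fact yields the stronger bound $(2t/\sqrt{\pi})^n$ (for $d \geq 2$), without the additive $2^{-d+1}$. The paper takes a genuinely different route: it first reduces from the shifted tube $p+T$ to a centered tube $2T$ by a coordinate-wise translation argument, then conditions on $\|x\| \geq 1/2$ (this is where the $2^{-d}$ enters), rescales to the unit sphere, represents the uniform sphere as $z/\|z\|$ for $z\sim\cN(0,I_d)$, conditions again on $\|z\|^2 \leq 5d$ (picking up the $\exp(-d)$ term), and finally invokes the Gaussian tube bound from the preceding lemma. The successive rescalings produce the factor of $8$. Your direct slicing plus the Gamma-ratio estimate $\Gamma(a+s)/\Gamma(a) \leq (a+s)^s$ bypasses all of this machinery; it is shorter, sharper, and self-contained, at the minor cost of invoking a digamma inequality rather than recycling the Gaussian lemma already proved. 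Your guess about the origin of the $2^{-d+1}$ slack is exactly right.
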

\begin{proof}
Define $T = [-t/\sqrt{d},t/\sqrt{d}]^n \times (-\infty,\infty)^{d-n}$. Define $q \in \R^d$ as :
\begin{align*}
q_i = \begin{cases} 0, &  i > n\mbox{ or } p_i \in [-t/\sqrt{d}, t/\sqrt{d}] \\
p_i - t/\sqrt{d}, & i \leq n \mbox{ and } p_i > t/\sqrt{d} \\
p_i + t/\sqrt{d}, & i \leq n \mbox{ and } p_i < -t/\sqrt{d}
\end{cases}\,.
\end{align*}
Notice that for any $x \in \BB \cap (p + T)$, we have $\|x-q\| \leq \|x\|$, so $x \in \BB$. Notice also that $x-q \in 2T$. This implies that $(\BB \cap (p+T))-q \subseteq \BB \cap 2T$, so with $\tilde{\mu}$ as the Lebesgue measure on $\R^d$ we have
\begin{align*}
P_{x \sim \mathrm{Unif}[\BB]}[x \in p+T] &= \mu(p+T) = \tilde{\mu}(\BB \cap (p+T)) =  \tilde{\mu}((\BB \cap (p+T))-q) \\
&\leq \tilde{\mu}(\BB \cap 2T) = \mu(2T) = \P_{x \sim \mathrm{Unif}[\BB]}[ x \in 2T]\,.
\end{align*}
Since $\P_{x \sim \mathrm{Unif}[\BB]}[\|x\| \leq \frac{1}{2}] = 2^d$, we have
\begin{align*}
\P_{x \sim \mathrm{Unif}[\BB]}&[x \in p+T] \\
&\leq \P_{x \sim \mathrm{Unif}[\BB]}[x \in 2T] \\
&\leq 2^{-d} + \P_{x \sim \mathrm{Unif}[\BB]}[x \in [-2t/\sqrt{d},2t/\sqrt{d}]^n \times (-\infty,\infty)^{d-n} \mid \|x\| \geq 1/2] \\
&\leq 2^{-d} + \P_{x \sim \S^{d-1}}[x \in [-4t/\sqrt{d},4t/\sqrt{d}]^n \times (-\infty,\infty)^{d-n}] \\
&= 2^{-d} + \P_{z \sim \cN(0,I_d)}[z/\|z\| \in [-4t/\sqrt{d},4t/\sqrt{d}]^n \times (-\infty,\infty)^{d-n}] \\
&\leq 2^{-d} + \exp(-d) + \P_{z \sim \cN(0,I_d)}[z/\|z\| \in [-4t/\sqrt{d},4t/\sqrt{d}]^n \times (-\infty,\infty)^{d-n}, \|z\|^2 \leq 5d] \\
&\leq 2^{-d} + \exp(-d) + \P_{z \sim \cN(0,I_d)}[z \in [-4\sqrt{5}t,4\sqrt{5}t]^n \times (-\infty,\infty)^{d-n}] \\
&\leq 2^{-d} + \exp(-d) + (4t\sqrt{10/\pi})^n \\
&\leq 2^{-d+1} + (8t)^n\,.
\end{align*}
\end{proof}

\begin{definition}For a distribution $\mu$ and a measurable set $U$, let $\mu|_U$ be the probability measure from restricting $\mu$ to $U$. I.e., $\mu|_U(A) = \mu(A \cap U) / \mu(U)$ for all measurable sets $A$. Given a distribution $\mu$ and measurable set $U$ of nonzero measure, additionally define $\Sigma_U = \cov(X,X)$ for $X \sim \mu|_U$.
\end{definition}

\begin{lemma}[Covariance of large-measure set lower-bounded]\label{lem:helper-11}
There is a universal constant $C > 0$ such that the following is true. If either $\mu = \cN(0,I_d / d)$ is the Gaussian distribution, or $\mu = \mathrm{Unif}[\BB]$ is the uniform distribution over the ball, and $U$ has nonzero measure $\mu(U) > 0$, then the eigenvalues of the covariance conditioned on $U$ satisfy:
\begin{align*}
\lambda_1(\Sigma_U) \geq \lambda_2(\Sigma_U) \geq \dots \lambda_{d-\kappa+1}(\Sigma_U) \geq 1/(30000d)\,,
\end{align*}
for any $\kappa \geq C (1 + \log(1/\mu(U)))$.
\end{lemma}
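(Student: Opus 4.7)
The plan is to prove the contrapositive: if $\lambda_{d-\kappa+1}(\Sigma_U) < 1/(30000d)$ then $\mu(U)$ must be exponentially small in $\kappa$, which rearranges to $\kappa \leq C(1+\log(1/\mu(U)))$. First, I would extract $\kappa$ eigendirections of $\Sigma_U$ whose eigenvalues are at most $1/(30000d)$: let $v_1,\ldots,v_\kappa$ be the corresponding orthonormal eigenvectors, let $V = \Span(v_1,\ldots,v_\kappa)$, and let $\mu_U = \E_{x \sim \mu|_U}[x]$. Then
\begin{align*}
\E_{x \sim \mu|_U}\bigl[\|P_V(x-\mu_U)\|^2\bigr] \;=\; \sum_{i=d-\kappa+1}^{d}\lambda_i(\Sigma_U) \;\leq\; \frac{\kappa}{30000\, d}\,,
\end{align*}
so by Markov's inequality the cylinder $T = \{x : \|P_V(x-\mu_U)\|^2 \leq \kappa/(15000 d)\}$ contains at least half the mass of $\mu|_U$, giving $\mu(U \cap T) \geq \mu(U)/2$.

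The crux is to upper bound $\mu(T)$ sharply. The set $T$ is a cylinder whose $\kappa$-dimensional cross-section is a Euclidean ball of radius $r = \sqrt{\kappa/(15000 d)}$. A naive use of Lemma~\ref{lem:helper-6}, enclosing this ball inside the cube $[-r,r]^\kappa$, gives a bound of order $(8\sqrt{\kappa/15000})^\kappa$, which is vacuous already once $\kappa$ exceeds a few hundred, because a $\kappa$-dimensional cube has exponentially larger volume than its inscribed ball. Instead, I would bound $\mu(T)$ directly by the supremum density of the marginal of $P_V x$ times $\mathrm{vol}_\kappa(B_\kappa(0,r))$. For $\mu = \cN(0,I_d/d)$, that marginal is $\cN(0,I_V/d)$, whose density is maximized at the origin with value $(d/(2\pi))^{\kappa/2}$; for $\mu = \mathrm{Unif}[\BB]$, rotational invariance shows the marginal has density proportional to $(1-\|y\|^2)^{(d-\kappa)/2}$ on $B_\kappa$, again maximized at the origin, with value $\mathrm{vol}_{d-\kappa}(\BB_{d-\kappa})/\mathrm{vol}_d(\BB_d)$. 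A short computation using Stirling-type estimates on ratios of $\Gamma$-functions collapses both cases to a bound of the form $(ed r^2/\kappa)^{\kappa/2} = (e/15000)^{\kappa/2}$; in the Gaussian case, Proposition~\ref{prop:helper-10} applied to $d\|P_V x\|^2 \sim \chi^2_\kappa$ also yields a bound of this same shape.

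Combining the two estimates, $\mu(U)/2 \leq \mu(T) \leq (e/15000)^{\kappa/2}$, from which $\kappa \leq C(1+\log(1/\mu(U)))$ after taking logarithms, for a sufficiently large absolute constant $C$ (slightly larger than $2/\log(15000/e)$, to absorb the $\log 2$ loss from Markov). This is exactly the contrapositive of the claim. The main technical obstacle is the cylinder measure bound in the middle paragraph: the cube-tube lemmas~\ref{lem:helper-5} and \ref{lem:helper-6} are not sharp enough once $\kappa$ exceeds a modest constant, so one must exploit the rotational symmetry of both base measures (and log-concavity of the marginal of $P_V x$) to reduce to an isotropic small-ball estimate for which sharp bounds are available, either via the density-maximum-times-volume argument or, equivalently, via Anderson's inequality followed by $\chi^2$ / Beta tail bounds.
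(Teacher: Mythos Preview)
Your approach is correct and, in fact, somewhat cleaner than the paper's, but the route differs in a meaningful way. The paper does \emph{not} bound the mass of the single ball-cylinder $T=\{x:\|P_V(x-\mu_U)\|\le r\}$. Instead, it applies Chebyshev coordinate-by-coordinate in the eigenbasis of $\Sigma_U$: each of the $\kappa$ bad coordinates is within $1/(100\sqrt d)$ of its mean with probability $\ge 2/3$, so by Markov at least $\kappa/3$ of the coordinates are simultaneously that close with probability $\ge 1/2$. That event is a union of $\binom{\kappa}{\lceil \kappa/3\rceil}\le 2^{0.97\kappa}$ axis-aligned cube-tubes, and the paper then applies the cube-tube Lemmas~\ref{lem:helper-5} and~\ref{lem:helper-6} to each, getting roughly $2^{0.97\kappa}\cdot (1/100)^{\kappa/3}\le (0.4)^{\kappa}$.

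The trade-off: the paper's argument uses only the elementary cube-tube lemmas already in hand, at the price of the ``at least $\kappa/3$ coordinates'' pigeonhole trick and a union bound; your argument dispenses with both of those by going straight to an $\ell_2$ small-ball estimate, but this requires an ingredient the paper does not develop---Anderson's inequality plus the Gamma-ratio computation (or the $\chi^2_\kappa$ lower-tail bound) to control $\mu(T)$. You correctly identified that Lemmas~\ref{lem:helper-5} and~\ref{lem:helper-6} are useless for your ball cross-section; the paper sidesteps this issue precisely by never creating a ball cross-section in the first place, keeping the per-coordinate half-width independent of $\kappa$. Both arguments yield the same qualitative conclusion with comparable constants; yours actually gives the sharper base $(e/15000)^{1/2}\approx 0.013$ versus the paper's $0.4$.
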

\begin{proof}
A high level overview is that the proof uses a Markov inequality and a union bound over tube volumes. Let $v_1,\ldots,v_d$ be eigenvectors of $\Sigma_U$ associated with the eigenvalues $\lambda_1(\Sigma_U),\ldots,\Sigma_d(U)$. Since $\mu$ is rotationally invariant and the eigenvalues of $\Sigma_U$ are invariant to rotations of $U$, we may without loss of generality rotate $U$ so that $v_1 = e_1, \ldots, v_d = e_d$ are aligned with the standard basis.

Suppose for the sake of contradiction that $\lambda_{d-\kappa+1}(\Sigma_U) < 1/(30000d)$ and throughout this proof assume that $X \sim  \mu$. Let $\rho = \E[X|U]$. Then, we must have for  $i \leq \kappa$ that
$$
\E[(X_{d-i+1}-\rho_{d-i+1})^2|U]=\mathrm{var}[X_{d-i+1} |U]=\lambda_{d-i+1}(\Sigma_U) \le \lambda_{d-\kappa +1}(\Sigma_U) < \frac1{30000d}\,.
$$
Hence,  by Chebyshev's inequality, 
$$
\P\big[ \rho_{d-i+1}-\frac1{100\sqrt{d}}\le X_{d-i+1} \le \rho_{d-i+1}+\frac1{100\sqrt{d}}\big|U\big] \geq \frac23\,.
$$
Summing over $i$, we get
$$
\E \big[\sum_{i=1}^{\kappa} 1\big( \rho_{d-i+1}-\frac1{100\sqrt{d}}\le X_{d-i+1} \le \rho_{d-i+1}+\frac1{100\sqrt{d}}\big)\big|U \big) \big] \ge \frac{2\kappa}{3}\,.
$$
By a Markov bound, 
$$
\P \big[\sum_{i=1}^{\kappa} 1\big( \rho_{d-i+1}-\frac1{100\sqrt{d}}\le X_{d-i+1} \le \rho_{d-i+1}+\frac1{100\sqrt{d}}\big)\ge \frac{\kappa}{3}\big|U \big) \big] \ge \frac12\,.
$$
In other words, if for any $S \subseteq [\kappa]$ we define 
$$
V_S = \Big\{x : x_{d-i+1} \in \big[-\frac{1}{100\sqrt{d}},\frac{1}{100\sqrt{d}}\big] \mbox{ for all } i \in S\Big\}
\quad \text{and} \quad  V = \bigcup_{\substack{S \subseteq [\kappa] \\ |S| \geq \kappa / 3}} V_S\,,
$$
we have
\begin{align*}
\P [X \in V+\rho |U] \geq 1/2\,.
\end{align*}
So it follows that
\begin{align*}
\mu(U \cap \{V+\rho\}) / \mu(U) \geq 1/2\,,
\end{align*}
so
\begin{align}\label{ineq:volume-upper-constant}
\mu(U) \leq 2 \mu(V + \rho)\,.
\end{align}
On the other hand, from the bounds on the volumes in Lemma~\ref{lem:helper-5} if the input distribution is Gaussian, then 
\begin{align*}
\mu(U) \leq 2\mu(V+\rho) &\leq 2 \cdot \binom{\kappa}{\ceil{\kappa/3}} \max_{\substack{S \subseteq [\kappa] \\ |S| \geq \kappa /3}} \mu(V_S+\rho)\\
&\leq 2 \cdot 2^{0.97\kappa} ((1/100)\sqrt{2/\pi})^{\kappa / 3} \leq 2\cdot (0.4)^{\kappa} < \mu(U)\,,
\end{align*}
for $\kappa \geq C(1+\log(1/\mu))$ for a large enough constant $C > 0$. This contradicts \eqref{ineq:volume-upper-constant}.

In the case that the input distribution is uniform over $\BB$, then from the bounds on the volumes in Lemma~\ref{lem:helper-6},
\begin{align*}
\mu(U) \leq 2\mu(V+\rho ) \leq 2 \cdot 2^{0.97\kappa}(2^{-d+1} + ((1/100)8)^{\kappa/3}) \leq 2 \cdot (2^{-d/2} + (0.87)^{\kappa}) < \mu(U)\,,
\end{align*}
which is again a contradiction since $d \geq \kappa \geq C(1+\log(1/\mu(U)))$ for a large enough constant $C > 0$.
\end{proof}

\subsection{Construction of expert functions that lead to high-rank difference}\label{app:sep-linear-2}
We use the following technical ingredient on the operator norms of random matrices.
\begin{proposition}[Implied by Theorem 4.4.5 of \cite{vershynin2018high}]\label{prop:helper-8}
There is a constant $C > 0$ such that for $A \sim \cN(0,1)^{\otimes (d \times w)}$ we have
\begin{align*}
\P_A[\|A\| > C\sqrt{\max(d,w)}] \leq 2\exp(-\min(d,w))\,.
\end{align*}
\end{proposition}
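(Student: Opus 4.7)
The plan is to deduce this from the standard nonasymptotic Gaussian operator-norm bound, which is exactly the content of Theorem 4.4.5 of \cite{vershynin2018high}: for $A$ with i.i.d.\ $\cN(0,1)$ entries,
$$\P\bigl[\|A\| \geq \sqrt{d} + \sqrt{w} + t\bigr] \leq 2\exp(-t^2/2) \quad \text{for all } t \geq 0.$$
To match the form stated here, I would apply this with $t = \sqrt{2\min(d,w)}$, which makes the right-hand side equal to $2\exp(-\min(d,w))$. Since $\sqrt{d} + \sqrt{w} + \sqrt{2\min(d,w)} \leq (2+\sqrt{2})\sqrt{\max(d,w)}$, the claimed bound follows with constant $C = 2+\sqrt{2}$ (or any larger value).

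If a self-contained proof of the underlying Vershynin bound is desired, the natural route proceeds in two steps. First, bound $\E\|A\| \leq \sqrt{d}+\sqrt{w}$ via the Gordon--Sudakov--Fernique comparison applied to the Gaussian processes $X_{u,v} = u^\top A v$ and $Y_{u,v} = \langle g,u\rangle + \langle h,v\rangle$ indexed by $(u,v) \in \S^{d-1}\times \S^{w-1}$, with independent $g \sim \cN(0,I_d)$ and $h \sim \cN(0,I_w)$; checking that the increment variances of $Y$ dominate those of $X$ yields $\E\|A\| = \E\sup_{u,v} X_{u,v} \leq \E\sup_{u,v} Y_{u,v} \leq \sqrt{d}+\sqrt{w}$. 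Second, observe that $A \mapsto \|A\|$ is $1$-Lipschitz in the Frobenius (Euclidean) norm on $\R^{d\times w}$, so Gaussian concentration yields $\P[\|A\| - \E\|A\| \geq t] \leq \exp(-t^2/2)$; combining the two ingredients gives the stated tail.

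The only real ``obstacle'' is calibrating constants, which is why citing the standard reference is cleanest. A purely elementary alternative uses an $\epsilon$-net argument: take $(1/4)$-nets $\cN_d \subset \S^{d-1}$ and $\cN_w \subset \S^{w-1}$ of sizes at most $12^d$ and $12^w$, apply the approximation inequality $\|A\| \leq 2 \sup_{u \in \cN_d,\, v \in \cN_w} u^\top A v$, and union bound the Gaussian tail $\P[u^\top A v \geq s] \leq \exp(-s^2/2)$ over the at most $12^{d+w} \leq 12^{2\max(d,w)}$ pairs. Choosing $s = C'\sqrt{\max(d,w)}$ with $(C')^2/2 > 2\log 12 + 1$ gives probability $\leq 2\exp(-\max(d,w)) \leq 2\exp(-\min(d,w))$, recovering the proposition with a somewhat larger constant $C$.
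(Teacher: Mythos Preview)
Your proposal is correct and matches the paper's approach exactly: the paper gives no proof beyond the citation ``Implied by Theorem 4.4.5 of \cite{vershynin2018high}'', and your first paragraph carries out precisely that implication by specializing the cited bound with $t$ of order $\sqrt{\min(d,w)}$ and absorbing $\sqrt{d}+\sqrt{w}+t$ into $C\sqrt{\max(d,w)}$. The additional self-contained routes you sketch (Sudakov--Fernique plus Gaussian concentration, and the $\epsilon$-net argument) are standard and correct, but go beyond what the paper provides.
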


\begin{lemma}\label{lem:helper-9}
Let $A,B \sim \cN(0,1)^{\otimes(d \times w)}$. There are constants $C, 
c > 0$ such that 
$$
\P\Big[|\|AB^{\top}\|_F^2 - d^2w| > \frac{d^2w}{5}\Big] \leq C\exp(-c\min(d,w))\,.
$$
\end{lemma}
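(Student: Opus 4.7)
The plan is to reduce to a weighted sum of independent chi-squared random variables by conditioning on $B$, and then combine chi-squared concentration for $B$ with a Bernstein-type concentration (given $B$) for the weighted sum. First I would apply the singular value decomposition $B = U\Sigma V^{\top}$, where $\Sigma$ has singular values $\sigma_1,\ldots,\sigma_r$ with $r = \min(d,w)$. Since $V$ has orthonormal columns, $\tbA := AV$ has the same distribution as a $d \times r$ matrix with i.i.d. $\cN(0,1)$ entries, and
\begin{align*}
\|AB^{\top}\|_F^2 \;=\; \|\tbA\,\Sigma\, U^{\top}\|_F^2 \;=\; \sum_{k=1}^r \sigma_k^2 \,\|\tbA_{\cdot k}\|^2 \;=\; \sum_{k=1}^r \sigma_k^2\, \xi_k,
\end{align*}
where, conditional on $B$, the $\xi_k \sim \chi^2_d$ are independent. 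Taking expectations gives $\E[\|AB^{\top}\|_F^2] = d\,\E[\|B\|_F^2] = d^2 w$.

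Next I would show that with high probability $B$ is well-behaved in the sense that $\|B\|_F^2 = \sum_k \sigma_k^2$ is close to $dw$ and $\|B\|_{\mathrm{op}}^2 = \max_k \sigma_k^2$ is at most $O(\max(d,w))$. The first is a direct chi-squared concentration bound since $\|B\|_F^2 \sim \chi^2_{dw}$, yielding $|\|B\|_F^2 - dw| \le dw/20$ with probability $\ge 1 - 2\exp(-c\,dw) \le 1 - 2\exp(-c\min(d,w))$ via Proposition~\ref{prop:helper-10}. The second is Proposition~\ref{prop:helper-8}, giving $\|B\|_{\mathrm{op}} \le C\sqrt{\max(d,w)}$ with probability $\ge 1 - 2\exp(-\min(d,w))$. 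Call the intersection of these two events $\cE$.

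The main step is to show that, conditional on $\cE$, the random variable $T := \sum_k \sigma_k^2 \xi_k$ concentrates around its conditional mean $d\sum_k \sigma_k^2 = d\|B\|_F^2$. This is a weighted sum of independent centered sub-exponential variables $\sigma_k^2(\xi_k - d)$, and a Bernstein inequality (equivalently, Hanson-Wright applied to the quadratic form $\mathrm{vec}(\tbA)^{\top} (I_d \otimes \Sigma^2)\,\mathrm{vec}(\tbA)$) gives
\begin{align*}
\P\bigl[\,|T - d\|B\|_F^2| > t \,\big|\, B\bigr] \;\le\; 2\exp\!\left(-c\,\min\!\left(\frac{t^2}{d\,\|\Sigma^2\|_F^2},\,\frac{t}{\|\Sigma^2\|_{\mathrm{op}}}\right)\right).
\end{align*}
On $\cE$, I would bound $\|\Sigma^2\|_F^2 \le \|B\|_{\mathrm{op}}^2\,\|B\|_F^2 \le C^2 \max(d,w)\cdot 2dw$ and $\|\Sigma^2\|_{\mathrm{op}} \le C^2\max(d,w)$. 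Choosing $t = d^2w/20$ and plugging in, both terms inside the $\min$ are of order $d\min(d,w)$, so the conditional failure probability is at most $2\exp(-c'\min(d,w))$.

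Finally I would combine everything by the triangle inequality: on $\cE$, $|d\|B\|_F^2 - d^2w| \le d^2w/20$; on top of that, the concentration of $T$ gives $|T - d\|B\|_F^2| \le d^2w/20$ with conditional probability $\ge 1 - 2\exp(-c'\min(d,w))$; and $d^2w/20 + d^2w/20 < d^2w/5$. A union bound over $\cE^c$ and the conditional failure then delivers the lemma with constants $C,c > 0$ depending only on the absolute constant in Proposition~\ref{prop:helper-8}. The main obstacle is the bookkeeping in the conditional Bernstein step, checking that on $\cE$ both branches of the $\min$ yield the desired $\exp(-c\min(d,w))$ rate; once that is done, the rest is routine.
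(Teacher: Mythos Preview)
Your proof is correct and follows essentially the same strategy as the paper: condition on one matrix, apply Hanson--Wright to the resulting quadratic form in the other, and control the coefficients via concentration of the Frobenius and operator norms. The paper conditions on $A$ and writes $\|AB^{\top}\|_F^2$ as a quadratic form in $\mathrm{vec}(B)$ with block-diagonal matrix $I_d \otimes A^{\top}A$, whereas you condition on $B$ and use its SVD to diagonalize the quadratic form in $\tilde A = AV$; this is the same computation up to the roles of $A$ and $B$, and your diagonalization arguably makes the Bernstein step slightly more transparent.
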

\begin{proof}
We may rewrite $\|AB^{\top}\|_F^2$ as follows:
\begin{align*}
\|AB^{\top}\|_F^2 &= \begin{bmatrix} B_{1,*}^{\top} \\ B_{2,*}^{\top} \\ \vdots \\ B_{d,*}^{\top} \end{bmatrix}^{\top}
\tilde{A}
\begin{bmatrix} B_{1,*}^{\top} \\ B_{2,*}^{\top} \\ \vdots \\ B_{d,*}^{\top} \end{bmatrix}\,
\end{align*}
where $$\tilde{A} = \begin{bmatrix} A^{\top} A & 0 & 0 & \dots & 0 \\ 0 & A^{\top} A & 0 & \dots & 0 \\
\vdots & & & & \vdots \\
0 & 0 & 0 & \dots & A^{\top} A \end{bmatrix}\,.
$$
This is a quadratic form in $B$. We may bound its deviations by the Hanson-Wright inequality \cite{rudelson2013hanson} as
\begin{align*}
\P_B[|\|AB^{\top}\|_F^2 - \E_B\|AB^{\top}\|^2| > t] &\leq 2\exp(-c \min(t^2 / \|\tilde{A}\|_F^2, t / \|\tilde{A}\|)) \\
&= 2\exp(-c\min(t^2/(d\|A\|_F^2), t/\|A\|)\,.
\end{align*}
Note that $\|A\|_F^2 \sim \chi_{dw}^2$, so  by Proposition~\ref{prop:helper-10}, $\P_A[|\|A\|_F^2-dw| > dw/10] \leq \exp(-dw/1600)$. Additionally, recall from Proposition~\ref{prop:helper-8} that $\P[\|A\| \leq C' \sqrt{\max(d,w)}] \geq \exp(-c'\min(d,w))$, so 
\begin{align*}
\P_B[|\|AB^{\top}\|_F^2 - \E_B\|AB^{\top}\|_F^2| > t] \leq C\exp(-c''\min(d,t^2 / (d^2w), t/\sqrt{\max(d,w)})\,.
\end{align*}
Since $\E_B\|AB^{\top}\|_F^2 = d\|A\|_F^2]$, combining with the above we have
\begin{align*}
\P_{A,B}[|\|AB^{\top}\|_F^2 - d^2w| > d^2w/10 + t] \leq C' \exp(-c'''\min(d,dw,t^2 / (d^2w), t / \sqrt{\max(d,w)}))\,.
\end{align*}
So
\begin{align*}
\P_{A,B}[|\|AB^{\top}\|_F^2 - d^2w| > d^2w/5] &\leq C' \exp(-c'''\min(d,dw,d^2w, d^2w / \sqrt{\max(d,w)})) \\
&\leq C' \exp(-c'''\min(d,dw,d^2w, \sqrt{dw})) \\
&\leq C'\exp(-c''' \min(d,w))\,.
\end{align*}
\end{proof}

\begin{lemma}\label{lem:helper-7}
There are universal constants $c > 0$ such that the following holds. Let $A,B \sim \cN(0,1)^{\otimes (d \times w)}$. Then, for any $k \leq c d^2w / \max(d,w)^2$,
\begin{align*}
\P_{A,B}\Big[\min_{\substack{U \\ \rank(U) \leq k}} \|AB^{\top} - U\|_F^2 <  \frac{d^2w}{2}\Big] \leq C\exp(-c\min(d,w))\,.
\end{align*}
\end{lemma}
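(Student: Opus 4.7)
The plan is to exploit the identity
\[
\min_{U:\rank(U)\leq k}\|AB^\top-U\|_F^2 \;=\; \sum_{i>k}\sigma_i(AB^\top)^2,
\]
i.e., the optimal rank-$k$ approximation error in Frobenius norm equals the tail sum of squared singular values. So it suffices to control this tail from below, which I would do by combining a lower bound on the total Frobenius norm with an upper bound on the operator norm of $AB^\top$.

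First, I would invoke Lemma~\ref{lem:helper-9} to guarantee that, with probability at least $1-C\exp(-c\min(d,w))$,
\[
\|AB^\top\|_F^2 \;=\; \sum_{i}\sigma_i(AB^\top)^2 \;\geq\; \tfrac{4}{5}\,d^2w.
\]
Next, by Proposition~\ref{prop:helper-8} applied to each of $A$ and $B$, with probability at least $1-4\exp(-\min(d,w))$ we have $\|A\|,\|B\|\leq C'\sqrt{\max(d,w)}$, hence
\[
\sigma_1(AB^\top)^2 \;\leq\; \|A\|^2\|B\|^2 \;\leq\; (C')^4\,\max(d,w)^2.
\]
Union-bounding these two events gives an overall failure probability of the form $C\exp(-c\min(d,w))$.

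On this good event, each of the top $k$ squared singular values is at most $(C')^4\max(d,w)^2$, so
\[
\sum_{i\leq k}\sigma_i(AB^\top)^2 \;\leq\; k\,(C')^4\max(d,w)^2.
\]
Under the hypothesis $k \leq c\,d^2w/\max(d,w)^2$, choosing the absolute constant $c$ small enough (specifically $c \leq 3/(10(C')^4)$) ensures this sum is at most $\tfrac{3}{10}d^2w$. Subtracting from the lower bound on $\|AB^\top\|_F^2$ then yields
\[
\sum_{i>k}\sigma_i(AB^\top)^2 \;\geq\; \tfrac{4}{5}d^2w - \tfrac{3}{10}d^2w \;=\; \tfrac{1}{2}d^2w,
\]
which is the desired conclusion. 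There is no real obstacle here — the proof is essentially a three-line application of previously established ingredients — the only delicate point is tuning the absolute constant $c$ so that the operator-norm-times-rank bound on the head of the spectrum leaves a constant fraction of the Frobenius mass in the tail.
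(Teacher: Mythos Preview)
Your proposal is correct and follows essentially the same approach as the paper: both combine the Eckart--Young--Mirsky identity with Lemma~\ref{lem:helper-9} for the Frobenius lower bound and Proposition~\ref{prop:helper-8} for the operator-norm upper bound, then subtract the head of the spectrum from the total mass. The arithmetic and constants are handled identically.
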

\begin{proof}
Let $E$ be the event that $\|A\|,\|B\| \leq C\sqrt{\max(d,w)}$ and that $\|AB^{\top}\|_F^2 \geq 4d^2w / 5$. By Proposition~\ref{prop:helper-8} and Lemma~\ref{lem:helper-9} we have $\P[E] \geq 1- C'\exp(-c'\min(d,w))$. Under event $E$, we have 
$\|AB^{\top}\| \leq \|A\| \|B\| \leq C^2\max(d,w)$. By the Eckart-Young-Mirsky theorem, and under this event,
\begin{align*}
\min_{\substack{U \\ \rank(U) \leq k}} \|AB^{\top} - U\|_F^2 &= \sum_{i=k+1}^{\min(d,w)} \sigma_i^2(AB^{\top}) \\
&\geq \sum_{i=1}^{\min(d,w)} \sigma_i^2(AB^{\top}) - C^4 k(\max(d,w))^2 \\
&\geq 4d^2w/5 - C^2k(\max(d,w))^2\,.
\end{align*}
\end{proof}

\begin{lemma}[Construction of linear pieces of MoE model]\label{lem:helper-12}
There are universal constants $c,C > 0$ such that the following holds for any  $\epsilon > 0$, integers $0 \leq k \leq m$ and $d \geq C k \log m$ and $w \geq (C/\epsilon) \log m$, and probability measure $\mu$ and disjoint measurable sets $\{U_S \subseteq \R^d\}_{S \in \binom{[m]}{k}}$.

There are matrices $M_1,\ldots,M_m \in \R^{d \times d}$ satisfying $\rank(M_i) \leq w$ such that we have
\begin{itemize}
\item the following upper bound
\begin{align}\label{ineq:cost-upper-bound}
\E_{x \sim \mu}[\sum_{S \in \binom{[m]}{k}} 1(x \in U_S)\|\sum_{i \in S} M_ix\|^2] \leq \E_{x \sim \mu}[\|x\|^2]\,,
\end{align}

\item and for all $S,S' \in \binom{[m]}{k}$ satisfying $|S \cap S'| \leq (1-\epsilon) k$ it holds that \begin{align}\label{ineq:low-rank-inapprox-diff}\min_{ U : \rank(U) \leq c\min(d,kw\epsilon)} \|\sum_{i \in S} M_i - \sum_{i' \in S'} M_{i'} - U\|_F^2 \geq cd\epsilon
\end{align}
\end{itemize}
\end{lemma}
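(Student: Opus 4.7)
The plan is to set $M_i = \alpha A_i B_i^{\top}$ with $A_i,B_i \iid \cN(0,1)^{\otimes(d\times w)}$ independent across $i \in [m]$, and scale $\alpha^2 = 1/(2kdw)$. Then $\mathrm{rank}(M_i)\leq w$ automatically, and both \eqref{ineq:cost-upper-bound} and \eqref{ineq:low-rank-inapprox-diff} will be verified to hold simultaneously with positive probability by the probabilistic method, from which a deterministic choice is extracted.

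For boundedness, a direct second-moment computation using independence of the pairs $(A_i,B_i)$ yields $\E_{A,B}\|\sum_{i\in S}M_i x\|^2 = k\alpha^2 dw\|x\|^2 = \|x\|^2/2$ for any fixed $x$ and $S$. Since $\{U_S\}$ are disjoint and $\sum_S\mathbbm{1}(x\in U_S)\leq 1$, Fubini gives
\begin{align*}
\E_{A,B}\Big[\E_x\big[\sum_{S}\mathbbm{1}(x\in U_S)\|\sum_{i\in S}M_ix\|^2\big]\Big]\leq \tfrac12\,\E_x\|x\|^2,
\end{align*}
so \eqref{ineq:cost-upper-bound} holds on an event of probability $\geq 1/2$ by Markov. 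For separation, fix a pair $(S,S')$ with $|S\cap S'|\leq(1-\epsilon)k$ and write the difference as $\alpha\tilde A\tilde B^{\top}$, where $\tilde A,\tilde B\in\R^{d\times W}$ are obtained by concatenating $\pm A_i$ and $B_i$ for $i\in S\Delta S'$, with $W:=|S\Delta S'|w\geq 2k\epsilon w$. In the regime $W\leq d$, Lemma~\ref{lem:helper-7} applied with width $W$ provides rank-$r$ approximation error at least $\alpha^2 d^2W/2 \geq d\epsilon/2$ for any $r\leq cW$; since $\min(d,kw\epsilon)\leq kw\epsilon\leq W/2$, this covers the required rank range. In the regime $W>d$, which the bare statement of Lemma~\ref{lem:helper-7} does not directly handle, I condition on $\tilde A$: the columns of $\tilde A\tilde B^{\top}$ become i.i.d.\ $\cN(0,\tilde A\tilde A^{\top})$, and Wishart concentration forces $\tilde A\tilde A^{\top}\succeq cW\cdot I_d$ with failure probability $\leq e^{-cd}$, so $\tilde A\tilde B^{\top}\eqnd (\tilde A\tilde A^{\top})^{1/2}Z$ for a $d\times d$ standard Gaussian $Z$; combining Eckart-Young with a Marchenko-Pastur bulk lower bound $\sigma_i(Z)\gtrsim\sqrt{d}$ for $i\leq d/2$ yields rank-$r$ resistance $\geq \alpha^2\cdot cWd(d-r)\geq c'd\epsilon$ for all $r\leq d/2$, again dominating $\min(d,kw\epsilon)\leq d$. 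The per-pair failure probability in either regime is $\leq C\exp(-c\min(d,W))$, and the hypotheses $w\geq C\log m/\epsilon$ and $d\geq Ck\log m$ force $\min(d,W)\geq Ck\log m$, so a union bound over the $\leq m^{2k}$ relevant pairs closes for $C$ large enough. Intersecting with the boundedness event completes the construction.

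The main obstacle is the $W>d$ branch of the separation step: Lemma~\ref{lem:helper-7} only rules out approximation up to rank $cd^2/W$ in that range, which is too weak when $W\gg d$ because $\min(d,kw\epsilon)$ can be as large as $\Theta(d)$. Resolving this requires the short random matrix argument sketched above, isolating the Wishart lower bound $\tilde A\tilde A^{\top}\succeq cW I_d$ and coupling it with Marchenko-Pastur on a square Gaussian in order to recover rank $\Theta(d)$ resistance rather than invoking Lemma~\ref{lem:helper-7} as a black box. The remaining work is routine bookkeeping to align the single threshold $c\min(d,kw\epsilon)$ with the rank bounds from both regimes and to absorb the $m^{2k}$ union bound into the $\exp(-cCk\log m)$ tails.
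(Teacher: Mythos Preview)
Your construction and the boundedness step match the paper exactly: set $M_i=\alpha A_iB_i^\top$ with independent Gaussians, compute $\E\|\sum_{i\in S}M_ix\|^2=k\alpha^2 dw\|x\|^2$, and apply Markov.

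The separation step is where you diverge. The paper does not split into regimes; instead it observes at the outset that one may assume $kw\le d(1+1/\epsilon)$ without loss of generality, because if $kw$ is larger one can replace $w$ by $w'=\lceil d/(k\epsilon)\rceil$. Under this reduction both rank thresholds $c\min(d,kw\epsilon)$ and $c\min(d,kw'\epsilon)$ equal $cd$, so the statement for $w'$ implies the statement for $w$. After the reduction, $W=|S\Delta S'|w\le 2kw\lesssim d/\epsilon$ and Lemma~\ref{lem:helper-7} applies to every pair in a single shot, with no case analysis. Your direct Wishart/Marchenko--Pastur route for $W>d$ is a legitimate alternative and arguably cleaner in that the constants it produces are manifestly independent of $\epsilon$, whereas the paper's reduction keeps an implicit $\epsilon$-dependence in the rank budget when one traces through Lemma~\ref{lem:helper-7} for $W$ up to $2d(1+1/\epsilon)$.

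Two small corrections to your write-up. First, Lemma~\ref{lem:helper-7} \emph{does} handle $W>d$: for $A,B\in\R^{d\times W}$ it allows rank up to $c_0 d^2W/\max(d,W)^2=c_0 d^2/W$, which is $\Theta(d)$ whenever $W=O(d)$. Second, your Wishart lower bound $\tilde A\tilde A^\top\succeq cW\,I_d$ only holds once $W\ge C_0 d$ for some $C_0>1$; when $W$ barely exceeds $d$ the smallest eigenvalue is $(\sqrt{W}-\sqrt{d})^2=o(W)$. So the correct split is $W\le C_0 d$ (handled by Lemma~\ref{lem:helper-7}, rank threshold $\ge c_0 d/C_0$, which dominates $c'\min(d,kw\epsilon)\le c'd$) versus $W>C_0 d$ (your Wishart argument). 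With that adjustment your proof goes through.
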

\begin{proof}
Without loss of generality, let us prove this statement for the case where $kw \leq d(1+1/\epsilon)$. Since if $kw > (1+1/\epsilon)d$, then we can prove the statement with $w' = \ceil{d/(k\epsilon)}$, so $kw' \leq (1+1/\epsilon)d$, which can be seen to imply the original statement.

Pick $A_1,\ldots,A_m,B_1,\ldots,B_m \sim \cN(0,1)^{\otimes (d \times w)}$, and let $M_i = A_iB_i^{\top}$. To prove \eqref{ineq:cost-upper-bound}, notice that by (a) linearity of expectation and the fact that $1(x \in U_S)\|\sum_{i=1}^k M_i x\|^2$ has the same distribution as $1(x \in U_S)\|\sum_{i \in S} M_i x\|^2$ for each $S \in \binom{[m]}{k}$, (b) disjointness of the sets $U_S$
\begin{align*}
\E_{M_1,\ldots,M_m}[&\E_{x \sim \mu}[\sum_{S \in \binom{[m]}{k}} 1(x \in U_S)\|\sum_{i \in S} M_ix\|^2]] \\
&\stackrel{(a)}{=} \E_{M_1,\ldots,M_m}[\E_{x \sim \mu}[\sum_{S \in \binom{[m]}{k}} 1(x \in U_S)\|\sum_{i=1}^k M_ix\|^2]] \\
&\stackrel{(b)}{\leq} \E_{M_1,\ldots,M_m}[\E_{x \sim \mu}[\|\sum_{i=1}^k M_ix\|^2]] \\
&= \E_{M_1,\ldots,M_m}[\E_{x \sim \mu}[x^{\top} (\sum_{i=1}^k M_i)^{\top} (\sum_{i=1}^k M_i) x]] \\
&= \E_{M_1,\ldots,M_m}[\E_{x \sim \mu}[\tr((\sum_{i=1}^k M_i)^{\top} (\sum_{i=1}^k M_i) xx^{\top})]] \\
&= \tr(\E_{M_1,\ldots,M_m}[(\sum_{i=1}^k M_i)^{\top}(\sum_{i=1}^k M_i)] \E_{x \sim \mu}[xx^{\top}]] \\
&= \tr(\E_{A_1,\ldots,A_k,B_1,\ldots,B_k}[\sum_{i=1}^k B_iA_i^{\top}A_iB_i^{\top}] \E_{x \sim \mu}[xx^{\top}]) \\
&= kwd \cdot \tr(\E_{x \sim \mu}[xx^{\top}]) \\
&= kwd \cdot \E_{x \sim \mu}\|x\|^2
\end{align*}
By a Markov bound, we have
\begin{align}\label{ineq:union-bound-4}
\P_{M_1,\ldots,M_m}[\E_{x \sim \mu}[\sum_{S \in \binom{[m]}{k}} 1(x \in U_S)\|\sum_{i \in S} M_ix\|^2] > 2kwd \cdot \E_{x \sim \mu}\|x\|^2] \leq 1/2\,.
\end{align}

For any $S,S' \in \binom{[m]}{k}$, notice that $\sum_{i \in S} M_i - \sum_{j \in S'} M_j$ has the same distribution as $AB^{\top}$ for $A,B \sim \cN(0,1)^{\otimes(d \times (2k-2|S \cap S|)w)}$. It follows that there is a small enough $c > 0$ such that if $|S \cap S'| \leq (1-\epsilon)k$, then 
by Lemma~\ref{lem:helper-7} (using that $kw \leq d(1+1/\epsilon)$), we have
\begin{align*}
\P[\min_{\substack{U \\ \rank(U) \leq c\min(d,kw\epsilon)}}\|\sum_{i \in S} M_i - \sum_{i' \in S'} M_{i'} - U\|_F^2 \geq d^2wk\epsilon] &\geq 1 - C'\exp(-c\min(d,kw\epsilon))\\
&\geq 1 -\frac1{10} \binom{m}{k}^2\,.
\end{align*}
By a union bound over all $S,S' \in \binom{[m]}{k}$ such that $|S \cap S'| \leq (1-\epsilon)k$, we have
\begin{align}\label{ineq:union-bound-2}
\P\Big[\bigcap_{\substack{S,S' \in \binom{[m]}{k} \\|S \cap S'| \leq (1-\epsilon)k}}\min_{\substack{U \\ \rank(U) \leq c\min(d,kw\epsilon)}}\|\sum_{i \in S} M_i - \sum_{i' \in S'} M_{i'} - U\|_F^2 \geq \frac{d^2wk\epsilon}{3}\Big] \geq \frac{9}{10}\,.
\end{align}
Taking a union bound over \eqref{ineq:union-bound-4} and \eqref{ineq:union-bound-2}, we have that $M_1,\ldots,M_m$ satisfy the properties in the Lemma statement (after normalizing by a factor of $\sqrt{2kwd}$) with probability at least $4/10$. Therefore there do exist such $M_1,\ldots,M_m$, as claimed in the lemma.
\end{proof}

\subsection{Error of approximating linear function on large-volume set}\label{app:sep-linear-3}

We are now ready to prove Lemma~\ref{lem:maintext-3}.
\begin{lemma}[Stated in the main text as Lemma~\ref{lem:maintext-3}] 
\label{lem:helper-10}There are universal constants $C,c > 0$ such that the following holds when $\mu$ is the Gaussian distribution $\cN(0,I_d/d)$ or the uniform distribution over the unit ball $\mathrm{Unif}[\BB]$. Let $U \subseteq \R^d$ be a measurable set on which $f_1|_U(x) = A_1x$ and $f_2|_U(x) = A_2x$. Then, for any $\kappa \geq C(1+\log(1/\mu(U)))$, we have
\begin{align*}
\E_{x \sim \mu|_U}\|f_1(x) - f_2(x)\|^2_2 \geq  \frac{c}{d}\min_{\substack{ M, \rank(M) \leq \kappa}}\|A_1 - A_2 - M\|_F^2\,.
\end{align*}
\end{lemma}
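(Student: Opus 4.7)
My plan is to reduce the expectation on the left-hand side to a trace involving the conditional covariance $\Sigma_U$, apply the high-rank covariance bound of Lemma~\ref{lem:helper-11}, and finally translate the resulting Frobenius norm of a projection of $A := A_1 - A_2$ into the Eckart-Young-Mirsky best-rank-$\kappa$ approximation.

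Concretely, first write $A = A_1 - A_2$ and note that for $x \sim \mu|_U$ we have
\begin{align*}
\E_{x \sim \mu|_U}\|f_1(x) - f_2(x)\|_2^2 = \tr\!\big(A\, \E[xx^\top \mid U]\, A^\top\big) = \tr(A \Sigma_U A^\top) + \|A \bar{x}\|_2^2 \geq \tr(A \Sigma_U A^\top),
\end{align*}
where $\bar{x} = \E[x \mid U]$ and $\Sigma_U = \mathrm{cov}(\mu|_U)$. This throws away the mean term, which only helps us.

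Next, I apply Lemma~\ref{lem:helper-11}: under the hypothesis $\kappa \geq C(1 + \log(1/\mu(U)))$, the covariance $\Sigma_U$ has at least $d - \kappa + 1$ eigenvalues bounded below by $1/(30000d)$. Letting $P$ be the orthogonal projection onto the span of the eigenvectors corresponding to these large eigenvalues, I get the operator inequality $\Sigma_U \succeq \frac{1}{30000d}\, P$, where $\rank(P) = d - \kappa + 1$. Plugging into the trace bound,
\begin{align*}
\tr(A \Sigma_U A^\top) \geq \frac{1}{30000 d}\, \tr(A P A^\top) = \frac{1}{30000 d}\, \|AP\|_F^2.
\end{align*}

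The last step is to show $\|AP\|_F^2 \geq \min_{\rank(M) \leq \kappa}\|A - M\|_F^2$. Set $Q = I - P$, so $\rank(Q) = \kappa - 1$. Since $P + Q = I$ and both are orthogonal projections,
\begin{align*}
\|AP\|_F^2 = \tr(A^\top A P) = \|A\|_F^2 - \tr(A^\top A Q) = \|A\|_F^2 - \|AQ\|_F^2.
\end{align*}
By Ky Fan's inequality (equivalently, von Neumann's trace inequality applied to $A^\top A$ against the rank-$(\kappa-1)$ projection $Q$), $\|AQ\|_F^2 \leq \sum_{i=1}^{\kappa-1} \sigma_i(A)^2$. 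Combining with Eckart-Young-Mirsky, which gives $\min_{\rank(M)\leq \kappa}\|A - M\|_F^2 = \sum_{i > \kappa}\sigma_i(A)^2$, I conclude
\begin{align*}
\|AP\|_F^2 \geq \|A\|_F^2 - \sum_{i=1}^{\kappa-1}\sigma_i(A)^2 = \sum_{i \geq \kappa}\sigma_i(A)^2 \geq \min_{\rank(M) \leq \kappa}\|A - M\|_F^2.
\end{align*}
Chaining the three bounds yields the lemma with $c = 1/30000$.

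There is no serious obstacle: the only things to double-check are (i) that the rank bookkeeping in Lemma~\ref{lem:helper-11} matches ($d - \kappa + 1$ good eigenvalues $\Leftrightarrow$ rank-$(\kappa - 1)$ residual projection $Q$) and (ii) that Ky Fan's inequality is applied with the correct count. Both are routine.
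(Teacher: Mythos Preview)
Your proof is correct and follows essentially the same route as the paper: both drop the mean term to reduce to $\tr(A\Sigma_U A^\top)$, invoke Lemma~\ref{lem:helper-11} to lower-bound $\Sigma_U$ by $(c/d)$ times a rank-$(d-\kappa+1)$ projection, and then use von Neumann's trace inequality together with Eckart--Young--Mirsky to convert the residual rank-$(\kappa-1)$ defect into the best-rank-$\kappa$ approximation error. The paper phrases the projection step as ``write $\Sigma_U = cI/d + V$ with $\rank(V)\le\kappa$ and $\|V\|\le c/d$'' (which should really be read as a lower bound, with $V$ the negative of the small-eigenspace projection scaled by $c/d$), whereas you work directly with the projection $P$; your formulation is arguably cleaner, but the underlying argument is identical.
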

\begin{proof}
Let $\rho = \E_{x \sim U}[x]$ be the center of mass of the set $U$. By Lemma~\ref{lem:helper-11} we know that $\Sigma_U = \cov(X,X)$ for $X \sim \mu|_U$ satisfies $\lambda_1(\Sigma_U) \geq \dots \lambda_{d-\kappa+1}(\Sigma_U) \geq c/d$. So
\begin{align*}
\E_{x \sim \mu|_U}\|f_1(x) - f_2(x)\|^2_2 &= \E_{x \sim \mu|_U}\|(A_1-A_2)x\|^2_2 \\
&= \E_{x \sim \mu|_U}[\tr((A_1-A_2)^{\top}(A_1-A_2))xx^{\top})] \\
&= \tr((A_1-A_2)^{\top}(A_1-A_2))(\rho\rho^{\top} + \Sigma_U)) \\
&\geq \tr((A_1-A_2)^{\top}(A_1-A_2))\Sigma_U) \\
&= \tr((A_1-A_2)^{\top}(A_1-A_2)(cI/d + V)) = (\ast)\,,
\end{align*}
where $\rank(V) \leq \kappa$ and $\|V\| \leq c/d$. So
\begin{align*}
(\ast) &= \frac{c}{d}\|A_1-A_2\|_F^2+ \<(A_1 - A_2)^{\top}(A_1-A_2), V\> \\
&\geq \frac{c}{d}\|A_1-A_2\|_F^2 - 
\frac{c}{d} \sum_{i=1}^{\kappa} \sigma_i^2(A_1 - A_2) \\
&= \frac{c}{d} \min_{M, \rank(M) \leq \kappa} \|A_1-A_2-M\|_F^2\,.\end{align*}
The penultimate line is due to Von Neumann's trace inequality \cite{von1937some}, and the last line is due to Fact 17.5.1 in \cite{hogben2006handbook} (the Eckart-Young-Mirsky theorem).
\end{proof}

\subsection{Proof of separation for linear experts}\label{app:sep-linear-4}

The construction of the $(m,k,w,d)$-MoE model with $\sigma(t) = t$ that we will use to show the separation is the following. We let $0 < \epsilon < 1/2$ be a tunable parameter, and we suppose that $d \geq C k (\log m)^2$ and $w \geq C (1/\epsilon) \log m$ for a large enough constant $C$ so that we can invoke Lemma~\ref{lem:helper-14} and Lemma~\ref{lem:helper-12} to construct the routing vectors and the linear functions on the different pieces, respectively. We consider either the Gaussian measure $\mu = N(0,I_d / d)$ or the uniform measure on the unit ball $\mu = \mathrm{Unif}[\BB]$.

\paragraph{Construction of routing vectors} First, pick routing vectors $r_1,\ldots,r_m \in \R^d$ as guaranteed by Lemma~\ref{lem:helper-14}, yielding 
disjoint measurable regions $\{U_S \subseteq \R^d\}_{S \in \binom{[m]}{k}}$ on which the top-$k$ experts are active, satisfying
\begin{align}\label{eq:moe-construction-1}
|\{S : \mu(U_S) > \frac{1}{2\binom{m}{k}}\}| \geq \frac{1}{9} \binom{m}{k}.  
\end{align}

\paragraph{Construction of linear functions} Next, let $M_1,\ldots,M_m \in \R^{d \times d}$ be matrices of rank $\leq w$ satisfying \eqref{ineq:cost-upper-bound} and \eqref{ineq:low-rank-inapprox-diff} for some $\epsilon > 0$, and define the mixture-of-experts model
\begin{align}\label{eq:moe-construction-2}
f(x) &= \sum_{i \in S} M_i x \mbox{ for any } S \in \binom{[m]}{k} \mbox{ and }  x\in U_S\,.
\end{align}

First, we prove that the constructed $(m,k,w,d)$-MoE $f$ has upper-bounded $L^2$ norm with respect to the distribution $\mu$.
\begin{lemma}[Upper-bound on $L^2$ norm of MoE model]\label{lem:helper-20}
The MoE $f$ that we have constructed in \eqref{eq:moe-construction-2} satisfies
\begin{align*}
\E_{x \sim \mu}[\|f(x)\|^2] \leq 1\,.
\end{align*}
\end{lemma}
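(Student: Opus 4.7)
The plan is to reduce this lemma directly to the upper bound \eqref{ineq:cost-upper-bound} guaranteed by Lemma~\ref{lem:helper-12}, together with the trivial observation that both admissible input distributions have second moment bounded by $1$. There are no significant obstacles; the main thing to verify carefully is that summing the squared norms of the active experts over the routing partition $\{U_S\}_{S \in \binom{[m]}{k}}$ recovers $\|f(x)\|^2$ almost everywhere.

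First I would note that by \eqref{eq:routing-regions} the sets $\{U_S\}_{S \in \binom{[m]}{k}}$ are pairwise disjoint, and since the Gaussian routing vectors from Lemma~\ref{lem:helper-14} are almost surely pairwise distinct, their union $\bigcup_S U_S$ covers $\R^d$ except on the union of finitely many hyperplanes $\{x : \<x, r_i - r_j\> = 0\}$. This exceptional set is Lebesgue-null, hence $\mu$-null since both $\mu = \cN(0,I_d/d)$ and $\mu = \mathrm{Unif}[\BB]$ are absolutely continuous with respect to Lebesgue measure. Thus for $\mu$-almost every $x$, exactly one indicator $1(x \in U_S)$ is nonzero, and the construction \eqref{eq:moe-construction-2} gives
$$
\|f(x)\|^2 = \sum_{S \in \binom{[m]}{k}} 1(x \in U_S)\,\Bigl\|\sum_{i \in S} M_i x\Bigr\|^2
\qquad \text{for $\mu$-a.e. } x.
$$

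Taking expectations under $\mu$ and applying \eqref{ineq:cost-upper-bound} from Lemma~\ref{lem:helper-12} immediately yields
$$
\E_{x \sim \mu}\|f(x)\|^2 \;\leq\; \E_{x \sim \mu}\|x\|^2.
$$
To finish, I would bound $\E_{x \sim \mu}\|x\|^2 \leq 1$ for both admissible distributions. For $\mu = \cN(0, I_d/d)$, each coordinate has variance $1/d$, so $\E\|x\|^2 = \sum_{i=1}^d \E x_i^2 = 1$. For $\mu = \mathrm{Unif}[\BB]$, the support lies inside the unit ball, so $\|x\|^2 \leq 1$ pointwise and hence $\E\|x\|^2 \leq 1$ (indeed, it equals $d/(d+2)$). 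Combining gives the claimed bound $\E_{x \sim \mu}\|f(x)\|^2 \leq 1$, completing the proof.
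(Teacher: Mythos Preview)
Your proposal is correct and follows essentially the same approach as the paper's proof: decompose $\|f(x)\|^2$ over the disjoint routing regions, invoke \eqref{ineq:cost-upper-bound} to bound by $\E_{x\sim\mu}\|x\|^2$, and then observe that this second moment is at most $1$ for both admissible distributions. You include a bit more justification than the paper does for why the regions $U_S$ cover $\R^d$ up to a $\mu$-null set, which is a helpful clarification but not a different idea.
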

\begin{proof}
By a direct calculation
\begin{align*}
\E_{x \sim \mu}\|f(x)\|^2 &= \sum_{S \in \binom{[m]}{k} \mbox{ s.t. } \mu(U_S) > 0} \mu(U_S)\E_{x \sim \mu|_{U_S}}\|f(x)\|^2 \\
&= \sum_{S \in \binom{[m]}{k} \mbox{ s.t. } \mu(U_S) > 0} \mu(U_S)\E_{x \sim \mu|_{U_S}}\|\sum_{i\in S} M_i x\|^2 \\
&= \E_{x \sim \mu}[\sum_{S \in \binom{[m]}{k}}1(x \in U_S) \|\sum_{i\in S} M_i x\|^2] \\
&\leq \E_{x \sim \mu}\|x\|^2 \\
&\leq 1\,,
\end{align*}
where the penultimate line is by \eqref{ineq:cost-upper-bound} and using that either $\mu = \cN(0,I_d/d)$ or $\mu = \mathrm{Unif}[\BB]$.
\end{proof}

Next, we will show that $f$ is inapproximable by MoEs with too few regions. For convenience, we define a general-routing linear MoE below.
\begin{definition}
A function $g$ is a general-routing linear MoE with $p$ regions if there are matrices $G_1,\ldots,G_p \in \R^{d \times d}$ and measurable sets $V_1,\ldots,V_p$ partitioning $\R^d$ such that
\begin{align*}
g(x) = G_i x \mbox{  if  } x \in V_i\,.
\end{align*}
\end{definition}
The above definition is for convenience, since it abstracts away some of the structure of MoEs that we will not use to prove our separation (in particular, the bounded rank of the matrices and the linearity of the routing scheme will not be used). Indeed, note that under our notation any $(m',k',w',d)$-MoE (with linear routing functions) and identity activation function $\sigma(t) = t$ is a $\binom{m'}{k'}$-region general-routing linear MoE.

\begin{lemma}\label{lem:helper-24}There are universal constants $C,c' > 0$ such that for the $(m,k,w,d)$-MoE $f$ defined in \eqref{eq:moe-construction-2} and any general-routing linear MoE $g$ with $$p \leq \frac{c' \binom{m}{k}}{\binom{m}{\floor{k\epsilon}} \binom{k}{\floor{k\epsilon}}}$$ regions, we have
\begin{align*}
\E_{x \sim \mu}\|f(x)-g(x)\|^2\geq c'\epsilon\,.
\end{align*}
\end{lemma}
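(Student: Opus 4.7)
The strategy extends the linear-programming / pigeonhole argument from the proof of Theorem~\ref{thm:sep-constant}, with the key new ingredient being Lemma~\ref{lem:helper-10}, which converts the subspace-level separation of Lemma~\ref{lem:helper-12} into an $L^2$ lower bound on each region $U_S \cap V_i$. Write $M_S := \sum_{j \in S} M_j$ and, for any cell $U_S \cap V_i$ of positive measure, set $\mathrm{err}(S, i) := \E_{x \sim \mu|_{U_S \cap V_i}} \|(M_S - N_i) x\|^2$, where $N_1, \ldots, N_p$ are the linear maps defining $g$ on $V_1, \ldots, V_p$. As in the constant case, fix the vertex set $\cV := \{S \in \binom{[m]}{k} : 1/(2\binom{m}{k}) \leq \mu(U_S) \leq 20/\binom{m}{k}\}$, which satisfies $|\cV| \geq c\binom{m}{k}$ by Lemma~\ref{lem:helper-14} combined with a Markov bound on the total mass; and the edge set $\cE := \{\{S, S'\} : |S \Delta S'| > 2k\epsilon\}$, so that each vertex of $\cV$ has at most $O(\binom{m}{\floor{k\epsilon}}\binom{k}{\floor{k\epsilon}})$ non-neighbors in $\cV$ by a standard binomial estimate.

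The key new step is a truncation. For each $i \in [p]$, let $T_i := \{S \in \cV : \mu(U_S \cap V_i) \geq \tau\}$ with $\tau := (C_0 p\binom{m}{k})^{-1}$ for a large absolute constant $C_0$. Then the total mass lost through truncation satisfies $\sum_i \sum_{S \in \cV \setminus T_i} \mu(U_S \cap V_i) \leq p\,|\cV|\,\tau \leq 1/C_0$. For any $S \in T_i$, the hypothesis on $p$ forces $\log(1/\mu(U_S \cap V_i)) = O(k \log m)$, so Lemma~\ref{lem:helper-10} applies with $\kappa = \Theta(k \log m)$, giving
\[
\mathrm{err}(S, i) \geq \frac{c_1}{d}\min_{\rank(B) \leq \kappa}\|M_S - N_i - B\|_F^2.
\]
For any $(S, S') \in \cE$ with $S, S' \in T_i$, applying this bound to each summand and using the parallelogram inequality $\|U-W\|_F^2 + \|V-W\|_F^2 \geq \tfrac{1}{2}\|U - V\|_F^2$ yields
\[
\mathrm{err}(S, i) + \mathrm{err}(S', i) \geq \frac{c_1}{2d}\min_{\rank(B) \leq 2\kappa}\|M_S - M_{S'} - B\|_F^2.
\]
Under the assumption $w \geq C \log m / \epsilon$ we have $2\kappa \leq c\min(d, kw\epsilon)$, so Lemma~\ref{lem:helper-12} upgrades this to $\mathrm{err}(S, i) + \mathrm{err}(S', i) \geq c_2 \epsilon$.

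With the pairwise bound in hand, set up the linear program from the proof of Theorem~\ref{thm:sep-constant} restricted to $T_i$: for each $i$, maximize $\sum_{e} \xi_{e, i}$ over $\xi_{e, i} \geq 0$ satisfying $\sum_{S \in e} \xi_{e, i} \leq \mu(U_S \cap V_i)$ for all $S \in T_i$. At optimality, the set of unsaturated vertices is independent in the $\cE$-subgraph on $T_i$, hence has cardinality at most $1 + O(\binom{m}{\floor{k\epsilon}}\binom{k}{\floor{k\epsilon}})$. Using $\mu(U_S) \leq 20/\binom{m}{k}$ on $\cV$, the total saturated mass then obeys
\[
\sum_i \sum_{S \in T_i} \sum_{e \ni S} \xi_{e, i} \geq \sum_{S \in \cV}\mu(U_S) - \frac{1}{C_0} - O\left(\frac{p\binom{m}{\floor{k\epsilon}}\binom{k}{\floor{k\epsilon}}}{\binom{m}{k}}\right) \geq c_3 > 0,
\]
where the final inequality uses $|\cV| \geq c\binom{m}{k}$ together with the hypothesis on $p$ (taking $c'$ sufficiently small). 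Chaining the LP feasibility with the pairwise bound completes the argument:
\[
\E_{x \sim \mu}\|f(x) - g(x)\|^2 \geq \sum_i \sum_{e = \{S, S'\} \in \cE} \xi_{e, i}\big(\mathrm{err}(S, i) + \mathrm{err}(S', i)\big) \geq \frac{c_2 \epsilon}{2} \sum_i\sum_{S \in T_i}\sum_{e \ni S} \xi_{e, i} \geq c'\epsilon.
\]

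The main obstacle is balancing two competing rank budgets. Lemma~\ref{lem:helper-10} requires $\kappa \gtrsim \log(1/\mu(U))$, whereas Lemma~\ref{lem:helper-12} only permits rank up to $c\min(d, kw\epsilon)$. The truncation parameter $\tau$ must therefore make $p\binom{m}{k}\tau$ (the mass lost to truncation) negligible while keeping $\log(1/\tau) \ll kw\epsilon$; the assumption $w \geq C\log m/\epsilon$ is precisely the condition that opens such a window.
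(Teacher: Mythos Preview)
Your proposal is correct and follows essentially the same approach as the paper's proof. Both arguments (i) restrict attention to cells $U_S\cap V_i$ whose measure lies in a window $[\Theta(1/(p\binom{m}{k})),\,\Theta(1/\binom{m}{k})]$ so that Lemma~\ref{lem:helper-10} applies with $\kappa=\Theta(k\log m)$, (ii) combine the two per-cell bounds via a triangle/parallelogram step to invoke the rank-$2\kappa$ separation of Lemma~\ref{lem:helper-12}, and (iii) run the same maximal fractional matching / LP argument as in Theorem~\ref{thm:sep-constant}. The only cosmetic differences are that the paper packages the upper and lower measure constraints into a single set $\cH$ of pairs $(S,i)$ rather than your two-stage $\cV,\,T_i$ decomposition, and uses the triangle inequality on Frobenius norms where you use the parallelogram inequality; neither affects the substance.
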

\begin{proof}
Define $\cH = \{(S,i) \in \binom{[m]}{k} \times [p] : \mu(U_S \cap V_i) \in  [1 / (4\binom{m}{k} p), 20/\binom{m}{k}\}$. Next, for any $i \in [p]$, define $\cH_i = \{S : (S,i) \in \cH\}$. This satisfies the following property, which will be useful later:
\begin{align}
\sum_{i \in [p]} \sum_{S \in \cH_i} \mu(U_S \cap V_i) &= 
\sum_{S \in \binom{[m]}{k}} \mu(U_S) - \sum_{i \in [p] \mbox{ s.t. } S \not\in \cH_i} \mu(U_S \cap V_i)  \nonumber \\
&\geq \sum_{S \in \binom{[m]}{k} \mbox{ s.t. } \mu(S) \in [\frac{1}{2 \binom{m}{k}}, 20 / \binom{m}{k}]} (\mu(U_S) - \sum_{i \in [p] \mbox{ s.t. } S \not\in \cH_i} \mu(U_S \cap V_i))  \nonumber \\
&\geq \sum_{S \in \binom{[m]}{k} \mbox{ s.t. } \mu(S) \in [\frac{1}{2 \binom{m}{k}}, 20 / \binom{m}{k}]} (\mu(U_S) - p / (4 \binom{m}{k} p)) \nonumber \\
&\geq |\{S \in \binom{[m]}{k} \mbox{ s.t. } \mu(S) \in [\frac{1}{2 \binom{m}{k}}, 20 / \binom{m}{k}]\}| \cdot (1/(2\binom{m}{k}))  \nonumber  \\
&\geq (\frac{1}{9} \binom{m}{k} - \frac{1}{20}\binom{m}{k}) \cdot (1/(2\binom{m}{k})) \nonumber \\
&\geq 3/100 \label{ineq:mu-sum-lower}\,.
\end{align}

We have by (a) by Lemma~\ref{lem:helper-10} for $\kappa = \ceil{C' k\log m} \geq C'\log(1 + 4\binom{m}{k}^2) \geq C'\log(1+4\binom{m}{k} p)$ for a universal constant $C'$
\begin{align}
\E_{x \sim \mu}\|f(x) -g(x)\|^] &\geq \E_{x \sim \mu}[\sum_{(S,i) \in \cH} 1(x \in U_S \cap V_i) \|f(x)-g(x)\|^2] \nonumber \\
&= \E_{x \sim \mu}[\sum_{(S,i) \in \cH} 1(x \in U_S \cap V_i) \|(G_i-\sum_{j \in S} M_j)x\|^2] \nonumber \\
&= \sum_{(S,i) \in \cH} \mu(U_S \cap V_i)  \E_{x \sim \mu|_{U_S \cap V_i}}[\|(G_i - \sum_{j \in S} M_j)x\|^2] \nonumber \\
&\stackrel{(a)}{\geq} \frac{c'}{d} \sum_{(S,i) \in \cH} \mu(U_S \cap V_i) \min_{A,\rank(A) \leq \kappa} \|(G_i - \sum_{j \in S} M_j) - A\|^2_F \label{ineq:matching-bounds-1}\,.
\end{align}
For any $i \in [p]$, let us construct a maximal ``fractional matching'' of the graph with vertex set $\cH_i$, and edge set $\cE_i = \{(S,S') : S,S' \in \cH_i \mbox{ and } |S \cap S'| \leq (1-\epsilon)k\}$. Namely, our fractional matching are weights $$0 \leq w^i_{e} \mbox{ for all } e \in \cE_i$$ such that for each vertex $S \in \cH_i$ we have $$\sum_{e \in \cE_i \mbox{ s.t. } S \in e} w^i_{e} \leq \mu(U_S \cap V_i).$$ Any maximal fractional matching must satisfy 
\begin{align}\label{ineq:maximal-fractional-matching-guarantee}|\{S \in \cH_i: \sum_{e \in \cE_i \mbox{ s.t. } S \in e} w^i_{e} < \mu(U_S \cap V_i)\}| \leq \max_{S} |\{S' \in \cH_i : (S,S') \not\in \cE_i\}| \leq \binom{k}{\floor{\epsilon k}} \binom{m}{\floor{\epsilon{k}}},\end{align}
since otherwise it can be greedily improved because by the pigeonhole principle 
there is an edge with two nonsaturated endpoints.
It follows that by (b) using \eqref{ineq:low-rank-inapprox-diff} which applies by taking $C$ sufficiently large, (c) using \eqref{ineq:maximal-fractional-matching-guarantee}, and (d) using \eqref{ineq:mu-sum-lower}, and (e) using $p \leq (1/1000) \binom{m}{k} / (\binom{m}{\floor{\epsilon k}} \binom{k}{\floor{\epsilon k}})$
\allowdisplaybreaks
\begin{align*}
\eqref{ineq:matching-bounds-1} &\geq \frac{c'}{d} \sum_{i \in [p]} \sum_{e = (S,S') \in \cE_i} w_{e}^i ( \min_{A,\rank(A) \leq \kappa} \|(G_i - \sum_{j \in S} M_j) - A\|^2_F + \min_{A',\rank(A') \leq \kappa} \|(G_i - \sum_{j \in S'} M_j) - A'\|^2_F) \\
&\geq \frac{c'}{2d} \sum_{i \in [p]} \sum_{e = (S,S') \in \cE_i} w_{e}^i (\min_{\substack{A,A' \\ \rank(A),\rank(A') \leq \kappa}} \|(G_i - \sum_{j \in S} M_j) - A\|_F + \|(G_i - \sum_{j \in S'} M_j) - A'\|_F)^2 \\
&\geq \frac{c'}{2d} \sum_{i \in [p]} \sum_{e = (S,S') \in \cE_i} w_{e}^i \min_{\substack{A,A' \\ \rank(A),\rank(A') \leq \kappa}} \|(\sum_{j \in S} M_j) - (\sum_{j \in S'} M_{j'}) - A - A'\|_F^2 \\
&= \frac{c'}{2d} \sum_{i \in [p]} \sum_{e = (S,S') \in \cE_i} w_{e}^i \min_{\substack{A \\ \rank(A) \leq 2\kappa}} \|(\sum_{j \in S} M_j) - (\sum_{j \in S'} M_{j'}) - A\|_F^2 \\
&\stackrel{(b)}{\geq} \frac{c''}{2d} \sum_{i \in [p]} \sum_{e = (S,S') \in \cE_i} w_{e}^i  d\epsilon \\
&= \frac{c''}{4d} \sum_{i \in [p]} \sum_{S \in \cH_i} \sum_{e \in \cE_i \mbox{ s.t. } S \in e} w_e^i d \epsilon \\
&\stackrel{(c)}{\geq} \frac{c'' \epsilon}{4} \sum_{i \in [p]} ((\sum_{S \in \cH_i}  \mu(U_S \cap V_i)) - \binom{k}{\floor{\epsilon k}} \binom{m}{\floor{\epsilon k}} \max_{S \in \cH_i} \mu(U_S \cap V_i)) \\
&\geq \frac{c'' \epsilon}{4} \sum_{i \in [p]} ((\sum_{S \in \cH_i}  \mu(U_S \cap V_i)) -  20\binom{k}{\floor{\epsilon k}} \binom{m}{\floor{\epsilon k}} / \binom{m}{k}) \\
&\stackrel{(d)}{\geq} \frac{c''\epsilon}{4} (\frac{3}{100} -  20 p \binom{k}{\floor{\epsilon k}} \binom{m}{\floor{\epsilon k}} / \binom{m}{k}) \\
&\stackrel{(e)}{\geq} c''' \epsilon\,.
\end{align*}

\end{proof}

Finally, Theorem~\ref{thm:sep-linear} follows as a corollary of Lemma~\ref{lem:helper-24}.
\begin{proof}[Proof of Theorem~\ref{thm:sep-linear}]
By Claim~\ref{claim:helper-23}, there are universal $c_0 > 0$ and $\epsilon_0 > 0$ such that if $0 < \epsilon < \epsilon_0$ and $0 < c < c_0$ then $c\binom{m}{k}^{0.99} \leq \binom{m}{k} / (\binom{m}{\floor{\epsilon k}}\binom{k}{\floor{\epsilon k}})$. Thus, the theorem follows from Lemma~\ref{lem:helper-24} by letting $c > 0$ in the construction be  small enough.
\end{proof}

\section{Proof for ReLU expert separation, Theorem~\ref{thm:sep-relu}}\label{app:sep-relu}

Let us restate the separation between MoE models with ReLU activation $\sigma(t) = \max(0,t)$ for convenience.

\begin{theorem}[Benefits of granularity; ReLU activation; restated Theorem~\ref{thm:sep-relu}]
There are universal constants $C,c > 0$ such that the following holds for $\sigma(t) = \max(0,t)$ and either choice of $\mu = N(0,I_d / d)$ or $\mu = \mathrm{Unif}[\BB]$. Suppose that $d \geq Ck (\log m)^2$ and that $m \geq Ck$, that $w \geq C \log m$, that $k'w' = kw$, that $kw \leq 0.99d$, that $m \geq C'k$, and that
\begin{align*}
\binom{m'}{k'} < c\binom{m}{k}^{0.99}\,.
\end{align*}Then there is a $(m,k,w,d)$-MoE model $f$ such that for all $(m',k',w',d')$-MoE models $f'$ we have
$$\E_{x \sim \mu}\|f(x)-f'(x)\|^2 > c \E_{x \sim \mu}\|f(x)\|^2.$$
\end{theorem}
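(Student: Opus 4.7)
The plan follows the architecture of the proof of Theorem~\ref{thm:sep-linear}, but with two substantive modifications to account for the more expressive ReLU activation. The routing-vector construction (Lemma~\ref{lem:helper-14}) and the conditional-covariance lower bound (Lemma~\ref{lem:helper-11}) carry over verbatim. What must change is the expert construction and the flow of the matching/LP argument used to combine per-region error bounds into a global $L^2$ separation.

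For the experts, I would take $A_j\in\R^{d\times w}$ and $B_j\in\R^{d\times w}$ with i.i.d.\ Gaussian entries, rescaled so that $\E_{x\sim\mu}\|f(x)\|^2\le 1$ (mirroring the first-moment bound at the start of Lemma~\ref{lem:helper-12}). The random $B_j$'s enjoy the crucial subspace-genericity property that for every $S\in\binom{[m]}{k}$ the subspace $E_S:=\spn\{B_j:j\in S\}$ has dimension $kw\le 0.99d$ almost surely, and for ``generic'' tuples $(S_1,\dots,S_T)$ of distinct configurations with large symmetric difference, the combined subspace $E_{S_1}+\cdots+E_{S_T}$ is close to uniformly random of dimension $\min(Tkw,d)$. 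In particular, once $T$ is a sufficiently large constant, no single $kw$-dimensional subspace $W\subseteq\R^d$ can be nearly aligned with $E_{S_\ell}$ for a majority of $\ell$'s.

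The heart of the proof is a per-tuple error lower bound of the form
\begin{align*}
\sum_{\ell=1}^T \mathrm{err}(S_\ell,i)\;\gtrsim\;1
\end{align*}
whenever $(S_1,\ldots,S_T)$ is generic and each $\mu(U_{S_\ell}\cap V_i)$ is not too small. The key observation is that $f'$ restricted to $V_i$ is a $kw$-neuron ReLU network depending only on $\pi_{W_i}(x)$, where $W_i:=\spn\{B'_{j'}:j'\in S'_i\}$ is a fixed $kw$-dimensional subspace, whereas on each $U_{S_\ell}\cap V_i$ the teacher $f$ depends on $\pi_{E_{S_\ell}}(x)$. Decomposing $\sigma(t)=\tfrac12 t+\tfrac12|t|$ separates $f$ into a linear part $L_{S_\ell}x$ and an even nonlinear part $N_{S_\ell}(x)=\tfrac12\sum_{j\in S_\ell}A_j|B_j^\top x|$. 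For the linear part, Lemma~\ref{lem:helper-11} applied to each $\mu|_{U_{S_\ell}\cap V_i}$ supplies a high-rank conditional covariance, which, plugged into Lemma~\ref{lem:helper-10}, reduces the bound to one on $\|L_{S_\ell}-N_i^p\|_F^2$ on linear cells $V_i^p$ of $f'$; randomness of $B_j$ guarantees that $L_{S_\ell}$ has mass orthogonal to $W_i$ of order $\Omega(d)$ on a constant fraction of $\ell$'s. The nonlinear part is handled by exploiting that each $|B_j^\top x|$ is even-symmetric along $B_j$; a direction $B_j$ with a large component in $W_i^\perp$ produces an even feature in $f$ that no $kw$-subspace-supported function can reproduce on sets of positive measure in the orthogonal direction.

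Finally, the global $L^2$ separation is obtained by tensorizing the LP argument. Define a $T$-uniform hypergraph $(\cV,\cE)$ on vertex set $\binom{[m]}{k}$ whose hyperedges are $T$-tuples with $|\bigcup_\ell S_\ell|\ge(1-\epsilon)Tk$; this both encodes genericity and bounds the non-edges through every vertex by $\binom{m}{\lfloor\epsilon Tk\rfloor}\binom{Tk}{\lfloor\epsilon Tk\rfloor}$. For each $i\in[p]$ solve the fractional hypergraph matching LP with constraints $\sum_{e\ni S}\xi_{e,i}\le\mu(U_S\cap V_i)$; pigeonholing as in \eqref{ineq:count-non-saturated} shows that at optimality all but $\binom{m}{\lfloor\epsilon Tk\rfloor}\binom{Tk}{\lfloor\epsilon Tk\rfloor}$ vertex constraints are saturated. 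Combining with the per-tuple bound and Lemma~\ref{lem:helper-14}, the argument closes provided $p\cdot\binom{m}{\lfloor\epsilon Tk\rfloor}\binom{Tk}{\lfloor\epsilon Tk\rfloor}\lesssim\binom{m}{k}$, which follows from $\binom{m'}{k'}\le c\binom{m}{k}^{0.99}$ by the analog of Claim~\ref{claim:helper-23} applied to $Tk$; this is precisely where the hypothesis $m\ge C'k$ is used so that $Tk\le m/2$. The main obstacle I anticipate is the per-tuple lower bound: handling the interplay between the exponentially-many piecewise-linear cells of $f'$ inside $V_i$ and the absolute-value components of $f$ cleanly enough to reduce to Lemma~\ref{lem:helper-10}, and doing so uniformly in $i$ despite $V_i$ being adversarially chosen.
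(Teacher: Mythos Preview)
Your high-level architecture is right: the same routing vectors, the observation that $f'|_{V_i}$ depends only on a fixed $kw$-dimensional subspace $W_i$, and a hypergraph-LP matching over $T$-tuples of configurations with large union. Where your plan diverges from the paper, and where the real gap lies, is in the expert construction and the per-tuple lower bound.

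The paper does \emph{not} keep a genuine ReLU teacher. Instead it makes $f$ linear on each region: it picks rank-$\lfloor w/2\rfloor$ matrices $M_i$ and uses the identity $M_ix=[A_i,-A_i]\sigma([B_i,-B_i]^\top x)$ to express each linear expert as a width-$w$ ReLU expert. With a linear teacher, the per-region error $\E_{x\sim\mu|_{U_S\cap V_i}}\|(\sum_{j\in S}M_j)x-h_i(\Pi_i x)\|^2$ is bounded below directly by a new lemma (Lemma~\ref{lem:helper-21}) which says that for \emph{any} measurable $h$ and any projection $\Pi$ of rank $p\le 0.99d$, one has $\E_{x\sim\mu|_U}\|Ax-h(\Pi x)\|^2\gtrsim d^{-1}\sum_{i\ge C(1+\log(1/\mu(U)))}\sigma_i^2(A\Pi^\perp)$. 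This is proved by disintegrating $\mu|_U$ along the fibers of $\Pi$ and applying Lemma~\ref{lem:helper-11} to each $(d-p)$-dimensional slice. Crucially, it handles arbitrary (nonlinear) approximants in one shot and entirely sidesteps the obstacle you flag at the end---you never need to enumerate the exponentially-many ReLU cells of $f'$ inside $V_i$ or worry that their individual measures are too small for Lemma~\ref{lem:helper-10} to bite.

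Your plan to decompose $\sigma(t)=\tfrac12 t+\tfrac12|t|$ and handle the even part by a symmetry argument is exactly the hard road the paper avoids; the interaction you worry about between the absolute-value features and the adversarial piecewise-linear structure of $f'$ is a genuine technical difficulty, and I do not see how your sketch resolves it. The paper also uses a different randomized expert construction: $M_i=\sum_{j\le w'}e_{p_{i,j}}e_{p_{i,j}}^\top$ with $p_{i,j}\sim\mathrm{Unif}[d]$, so that the spectral condition $\sum_{j\in[R]}\sum_{i\ge kw/10000}\sigma_i^2((\sum_{l\in S_j}M_l)\Pi^\perp)\gtrsim d$ reduces (via a trace inequality, Lemma~\ref{lem:helper-17}) to counting distinct coordinates $\{p_{l,j}:l\in S_1\cup\cdots\cup S_R\}$, handled by a coupon-collector bound. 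Your Gaussian $B_j$'s could likely be made to work for this part, but the diagonal construction makes the union-size condition on hyperedges transparent.
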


The proof can be broken into three parts:
\begin{enumerate}
\item Since the experts that we construct in $f$ will be implicitly linear functions, we first prove a technical lemma lower-bounding the approximation error of a linear function by a potentially nonlinear function that depends on a subspace of the input; see Appendix~\ref{app:sep-relu-1}.
\item Next, we provide a probabilistic construction of linear experts $M_1,\ldots,M_m$ that are well-separated, in the sense that for $S_1,\ldots,S_{10000}$ with $|S_1 \cup S_2 \cup \dots \cup S_{10000}| \geq 750k$, we have that $f|_{S_1},\ldots,f|_{S_{10000}}$ cannot be approximated by one nonlinear expert that depends on a $kw$-dimensional subspace of the input; see Appendix~\ref{app:sep-relu-2}.
\item Finally, we combine the ingredients to prove that an MoE with the routing vectors and experts constructed by the above lemmas is inapproximable by an MoE with many fewer possible configurations of experts; see Appendix~\ref{app:sep-relu-3}.

\end{enumerate}

\subsection{Error of approximating linear function with nonlinear function on lower-dimensional space}\label{app:sep-relu-1}

The first lemma that we prove will lower-bound the error of approximating a linear function $Ax$ on a large-volume set $U$, by a function $h(\Pi x)$, where $\Pi$ is a projection to a $p$-dimensional subspace. The bound is given below, and applies technical elements from the separation for linear experts. Notably, it uses Lemma~\ref{lem:helper-11} to prove that if $U$ is of large enough volume, then most $(d-p)$-dimensional slices of $U$ conditioned on $\Pi x$ are of large volume and therefore have large covariance in the $(d-p)$-dimensional space orthogonal to the image of $\Pi$.

The constants in the bounds have not been optimized.

\begin{lemma}\label{lem:helper-21}
There are universal constants $C,c > 0$ such that the following is true for $\mu = \cN(0,I_d / d)$ or $\mu = \mathrm{Unif}[\BB]$. Let $U \subseteq \R^d$ be a measurable set, and let $\Pi \in \R^{d \times d}$ be a projection matrix to a subspace of dimension $p \leq 99d/100$, and let $h : \R^d \to \R^d$ be a measurable function, and let $A \in \R^{d \times d}$ be a linear transformation. Then,
\begin{align*}
\E_{x \sim \mu|_U}\|Ax - h(\Pi x)\|^2 &\geq \frac{c}{d}\sum_{i \geq C(1 + \log(1/\mu(U)))} \sigma_i^2(A \Pi^{\perp})\,,
\end{align*}
where $\Pi^{\perp} \in \R^{d \times d}$ is the orthogonal projection to $\Pi$.
\end{lemma}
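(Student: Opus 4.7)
My plan is to reduce to a conditional variance computation, apply Lemma~\ref{lem:helper-11} fiber by fiber along $\text{im}(\Pi^{\perp})$, and then integrate over fibers using a Markov-type argument on fiber masses. The first step is to replace $h$ by its $L^2$-optimal choice $h^*(y) := \E_{\mu|_U}[Ax \mid \Pi x = y]$, yielding
\begin{align*}
\E_{\mu|_U}\|Ax - h(\Pi x)\|^2 \geq \E_{y \sim \mu_{\Pi|U}} \tr\!\left(A \, \Cov_{\mu|_U}(x \mid \Pi x = y) \, A^{\top}\right).
\end{align*}
Since $\Pi x$ is constant on each fiber $\{x : \Pi x = y\}$, the conditional covariance is supported on $\text{im}(\Pi^{\perp})$, so the right-hand side depends on $A$ only through $B := A\Pi^{\perp}$, matching the conclusion.

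Next, I would lower-bound the fiber covariance via Lemma~\ref{lem:helper-11}. For $\mu = \cN(0, I_d/d)$, the conditional distribution of $\Pi^{\perp} x$ given $\Pi x = y$ under $\mu|_U$ is, after identifying $\text{im}(\Pi^{\perp}) \cong \R^{d-p}$, a rescaled Gaussian restricted to the fiber $F_y := U \cap (y + \text{im}(\Pi^{\perp}))$; for $\mu = \mathrm{Unif}[\BB]$, it is a rescaling by $r_y := \sqrt{1-\|y\|^2}$ of a uniform distribution on $\BB_{d-p}$ restricted to a subset of relative mass $m(y)$ (the probability of $F_y$ under the unconditional fiber distribution). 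Applying Lemma~\ref{lem:helper-11} in dimension $d - p \geq d/100$ and undoing the rescaling then gives
\begin{align*}
\Cov_{\mu|_U}(\Pi^{\perp} x \mid \Pi x = y) \succeq \frac{c\, r_y^2}{d}\, \Pi^{\perp} - V_y,
\end{align*}
with $\rank(V_y) \leq \kappa(y) := C(1 + \log(1/m(y)))$ and $\|V_y\| \leq c\, r_y^2 / d$ (and $r_y = 1$ in the Gaussian case). Von Neumann's trace inequality, as in the proof of Lemma~\ref{lem:helper-10}, then yields
\begin{align*}
\tr\!\left(A \, \Cov_{\mu|_U}(x \mid \Pi x = y) \, A^{\top}\right) \geq \frac{c\, r_y^2}{d} \sum_{i > \kappa(y)} \sigma_i^2(B).
\end{align*}

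To integrate over $y$, I would expand $\E_y \sum_{i > \kappa(y)} \sigma_i^2(B) = \sum_i \sigma_i^2(B)\, \P_y[\kappa(y) < i]$ and argue that the probability factor is bounded below by a constant for every $i \geq C(1 + \log(1/\mu(U)))$. The key tool is the Markov-type identity
\begin{align*}
\P_{y \sim \mu_{\Pi|U}}[m(y) < t] \;=\; \frac{1}{\mu(U)} \int_{\{m(y) < t\}} m(y)\, d\mu_{\Pi}^{\mathrm{unc}}(y) \;\leq\; \frac{t}{\mu(U)},
\end{align*}
applied at $t = \mu(U)/2$, which ensures $m(y) \geq \mu(U)/2$ with probability at least $1/2$; on this event $\kappa(y) \leq C'(1 + \log(1/\mu(U)))$, delivering the desired tail index.

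The main obstacle, specific to the ball case, is that $r_y$ degenerates when $\|\Pi x\| \to 1$. I would handle this via the explicit marginal density of $\Pi x$ under $\mathrm{Unif}[\BB]$, which is proportional to $(1 - \|y\|^2)^{(d-p)/2}$ on $\BB_p$; since $d - p \geq d/100$, this places only an $\exp(-\Omega(d))$ unconditional fraction of mass on $\{r_y \leq \delta\}$ for any fixed small constant $\delta$. Either $\mu(U) \leq e^{-cd}$, in which case the conclusion is vacuous because the codimension threshold $C(1+\log(1/\mu(U)))$ already exceeds $d$ and the right-hand sum is empty, or $\mu(U)$ is large enough that $1/\mu(U)$ times the exponentially small unconditional tail remains $\leq 1/4$, giving $\P_{y \sim \mu_{\Pi|U}}[r_y \leq \delta] \leq 1/4$. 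Intersecting with the good event $\{m(y) \geq \mu(U)/2\}$ from the previous paragraph still has probability at least $1/4$; substituting into the step 2 bound and absorbing constants into $c$ yields the lemma.
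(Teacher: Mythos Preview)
Your proposal is correct and follows essentially the same approach as the paper: both arguments disintegrate along $\Pi$, replace $h$ by the conditional expectation, apply Lemma~\ref{lem:helper-11} fiber by fiber to lower-bound the fiber covariance, use a Markov-type bound to ensure the fiber mass $m(y)\ge \mu(U)/2$ on a constant-probability event, and in the ball case intersect with the event that $r_y$ is bounded below. Your treatment of the ball case (explicit $r_y^2$ rescaling and the $\mu(U)\le e^{-cd}$ vacuous branch) is slightly more explicit than the paper's, but the structure is identical.
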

\begin{proof}
Without loss of generality (by rotating), let $\Pi$ project to the subspace spanned by $e_{d-p+1},\ldots,e_d$. Let $\nu$ be the distribution of $(x_{d-p+1},\ldots,x_d)$ for $x \sim \mu$. For each $t \in \R^p$, define the slices $U_t = \{x \in \R^{d - p} : (x,t) \in U\} \subseteq \R^{d-p}$. 
Disintegrate $\mu$ along the last $p$ coordinates to get corresponding probability measures $\mu_t$ on $\R^{d-p}$ so that we have
\begin{align*}
\mu(U) = \int \mu_t(U_t) d\nu(x)\,.
\end{align*}
In the case that $\mu$ is the Gaussian measure $\cN(0,I_d / d)$, note that each $\mu_t$ is the Gaussian measure $\cN(0,I_{d-p} / d)$. In the case the $\mu$ is the uniform measure on the unit ball in $d$ dimensions then for $\mu = \mathrm{Unif}[\BB_d]$, then for $\|t\| \leq 1$ we have $\mu_t = \mathrm{Unif}[\sqrt{1-\|t\|^2}\BB_{p-d}]$. 

For any $t$ where $\mu_t$ is defined and $\mu_t(U_t) > 0$, define the covariance matrix $$\Sigma_t = \E_{x \sim \mu_t|_{U_t}}[xx^{\top}] - \E_{x \sim \mu_t|_{U_t}}[x]\E_{x \sim \mu_t|_{U_t}}[x]^{\top} \in \R^{(d-p) \times (d-p)}.$$ For any vector $v \in \R^{d-p}$, 
\begin{align*}
\Var_{x \sim \mu_t|_{U_t}}[v\cdot x] &= \E_{x \sim \mu_t|_{U_t}}[v^{\top}xx^{\top}v] - (\E_{x \sim \mu_t|_{U_t}}[v^{\top}x])^2 \\
&= v^{\top} \Sigma_t v\,.
\end{align*}
Let $\nu'$ denote the probability measure over $\R^p$ that is the law of $(x_{d-p+1},\ldots,x_d)$ for $x \sim \mu|_{U}$. Notice that almost surely for $t \sim \nu'$ it holds that $\mu_t$ is defined and $\mu_t(U_t) > 0$. We have
\begin{align}
\E_{x \sim \mu|_U}\|Ax - h(\Pi x)\|^2 &= \E_{x \sim \mu|_U}[\E[\|Ax - h(\Pi x)\|^2 \mid \Pi x]] \nonumber \\
&\geq \E_{x \sim \mu|_U}[\E[\|Ax - \E[Ax \mid \Pi x]\|^2 \mid \Pi x]] \nonumber \\
&= \sum_{i=1}^d\int \mathrm{Var}_{x \sim \mu_t|_{U_t}}[\sum_{j=1}^{d-p} A_{i,j} x_i] d\nu'(t) \nonumber \\
&= \sum_{i=1}^d\int [A_{i,1},\ldots,A_{i,d-p}] \Sigma_t [A_{i,1},\ldots,A_{i,d-p}]^{\top} d\nu'(t) \nonumber \\
&= \int \tr(A \Pi^{\perp} (\Pi^{\perp})^{\top} A^{\top}  \S \Sigma_t) d\nu'(t) \label{ineq:trace-expression-1}\,.
\end{align}

Let us show how to lower-bound \eqref{ineq:trace-expression-1} separately for the Gaussian case and the ball case. For the Gaussian measure $\mu = \cN(0,I_d / d)$, note that by a Markov bound $$\P_{t \sim \nu'}[\mu_t(U_t) \geq \mu(U) / 2] \geq 1/2.$$ Note that $\mu_t = \cN(0,I_{d-p}/d)$. Define $\kappa = C(1 + \log(1/\mu(U))$ for a large enough universal constant $C$, and use Lemma~\ref{lem:helper-11} to obtain 
\begin{align*}
\eqref{ineq:trace-expression-1} &\geq \int \tr(A \Pi^{\perp} (\Pi^{\perp})^{\top} A^{\top}  \Sigma_t) 1(\mu_t(U_t) \geq \mu(U) / 2) 
d\nu'(t) \\
&\geq \frac{1}{30000d}\int \sum_{i \geq \kappa} \sigma_i^2(A \Pi^{\perp})1(\mu_t(U_t) \geq \mu(U) / 2) 
d\nu'(t) \\
&\geq \frac{1}{60000d} \sum_{i \geq \kappa} \sigma_i^2(A \Pi^{\perp})\,.
\end{align*}
Note that we did not use $p \leq 99d/100$ for the Gaussian case. We use it for the ball case $\mu = \mathrm{Unif}[\BB]$, where the reasoning is otherwise identical. First, note that by a Markov bound and a union bound 
$$\P_{t \sim \nu'}[\mu_t(U_t) > \mu(U) / 2, \|t\|^2 > 999/1000] \geq 1/10.$$ This allows us to aply Lemma~\ref{lem:helper-11} as above.
\end{proof}

Next, we prove a technical lemma that will be used to show that if $A_1,\ldots,A_s$ are sufficiently high-rank and their kernels span 
sufficiently distinct subspaces, then there is no low-dimensional projection $\Pi$ that contains most of their Frobenius norm. The technical content of the statement is below.

\begin{lemma}\label{lem:helper-17}
For any $A_1,\ldots,A_s \in \R^{d \times d}$, and any $\kappa,p \leq d$, and any projection $\Pi \in \R^{d\times d}$ to a $p$-dimensional subspace
\begin{align*}
\sum_{j \in [s]} \sum_{i > \kappa} \sigma_i^2(A_j \Pi^{\perp}) &\geq \sum_{i > \kappa + p} \lambda_{i}(\sum_{j \in [s]} A_j^{\top} A_j)
\end{align*}
\end{lemma}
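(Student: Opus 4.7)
The plan is to apply the Eckart--Young--Mirsky (EYM) theorem twice, tied together by a stacking trick. Form the $sd \times d$ matrix
$$
T = \begin{pmatrix} A_1 \\ \vdots \\ A_s \end{pmatrix},
$$
so that $T^\top T = \sum_j A_j^\top A_j$ and hence $\sigma_i^2(T) = \lambda_i\!\bigl(\sum_j A_j^\top A_j\bigr)$. The strategy is to recognize the left-hand side as a Frobenius low-rank approximation error for the individual blocks $A_j \Pi^\perp$, argue that the resulting combined ``best approximator'' has bounded rank as a single $sd \times d$ matrix, and then invoke EYM for $T$ to read off a tail sum of its squared singular values.

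First, I would use the EYM identity for each $j$,
$$
\sum_{i > \kappa} \sigma_i^2(A_j \Pi^\perp) \;=\; \min_{\rank(Y_j) \le \kappa} \|A_j \Pi^\perp - Y_j\|_F^2 \, ,
$$
let $Y_j^\star$ be the optimizer, and use $\Pi^\perp = I - \Pi$ to rewrite $A_j \Pi^\perp - Y_j^\star = A_j - (A_j \Pi + Y_j^\star)$. Setting $Z_j^\star := A_j \Pi + Y_j^\star$ and stacking these vertically into $Z^\star$,
$$
\sum_j \sum_{i > \kappa} \sigma_i^2(A_j \Pi^\perp) \;=\; \sum_j \|A_j - Z_j^\star\|_F^2 \;=\; \|T - Z^\star\|_F^2 \, .
$$

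Next, I would bound $\rank(Z^\star)$ and then apply EYM one more time, this time to $T$. The row space of $Z_j^\star$ sits inside $\mathrm{image}(\Pi) + \mathrm{rowspace}(Y_j^\star)$, of dimension at most $p + \kappa$; so the row space of the stacked $Z^\star$ is contained in $\mathrm{image}(\Pi) + \sum_j \mathrm{rowspace}(Y_j^\star)$, of dimension at most $p + s\kappa$ (with the shared $\Pi$ contribution only counted once). Hence
$$
\|T - Z^\star\|_F^2 \;\ge\; \min_{\rank(Z) \le p + s\kappa} \|T - Z\|_F^2 \;=\; \sum_{i > p + s\kappa} \sigma_i^2(T) \;=\; \sum_{i > p + s\kappa} \lambda_i\!\Bigl(\sum_j A_j^\top A_j\Bigr) \, .
$$

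The main obstacle is precisely this rank-counting: since different minimizers $Y_j^\star$ can span unrelated $\kappa$-dimensional row spaces, the cheap bound is $p + s\kappa$ rather than the $p + \kappa$ that appears in the lemma statement. To match $p + \kappa$ exactly, one would need to constrain all the $Y_j$'s to share a common $\kappa$-dimensional row space (which only raises the LHS) and verify the resulting identity via a Ky Fan / joint variational argument; alternatively, in the applications of this lemma inside the proof of Theorem~\ref{thm:sep-relu}, $s$ is a fixed universal constant (e.g.\ the $C' = 10000$ from the proof sketch), so the extra factor $s$ is absorbed into the other constants and the downstream argument is unaffected.
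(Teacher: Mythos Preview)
Your approach is identical to the paper's: stack the $A_j$ into a tall $sd\times d$ matrix, apply Eckart--Young--Mirsky to each block, rewrite $A_j\Pi^\perp - Y_j$ as $A_j - (A_j\Pi + Y_j)$, bound the rank of the combined approximator, and apply EYM once more to the stacked matrix.

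Where you diverge from the paper is at the rank count---and you are right while the paper is not. The paper's proof asserts
\[
\sum_{j} \min_{C_j,D_j \in \R^{d \times \kappa}} \|A_j \Pi^{\perp} - C_j D_j^{\top}\|_F^2 \;=\; \min_{C \in \R^{sd \times \kappa},\, D\in \R^{d \times \kappa}} \|A \Pi^{\perp} - C D^{\top}\|_F^2,
\]
but forcing a single shared right factor $D$ can only \emph{increase} the minimum, so this is actually a ``$\leq$'', the wrong direction for the desired chain of lower bounds. Your observation that the honest rank bound on the stacked approximator is $p + s\kappa$ is exactly the content of this gap. In fact the lemma as stated is false: with $s=2$, $d=2$, $\kappa=1$, $p=0$ (so $\Pi^\perp = I$), $A_1 = e_1e_1^\top$, $A_2 = e_2e_2^\top$, the left-hand side is $\sigma_2^2(A_1)+\sigma_2^2(A_2)=0$ while the right-hand side is $\lambda_2(I)=1$. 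No shared-row-space or Ky Fan argument can rescue $p+\kappa$ here; your weaker threshold $p+s\kappa$ is the correct statement, and your proof of it is sound.

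One small caveat on your closing remark: in the actual application (Lemma~\ref{lem:helper-19}) the parameters are $s=R=10^6$ and $\kappa\approx kw/10^4$, so $s\kappa\approx 100\,kw$, not a negligible perturbation of $p=kw$. The downstream constants (the choice of $R$, the $750k$ threshold, the $10^{-4}$ factor) would need to be rebalanced for the argument to close with the corrected lemma. That is a calibration issue in the paper, not a defect in your proof.
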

\begin{proof}
Define $A \in \R^{sd \times d}$ by $A = \begin{bmatrix} A_1 \\ \vdots \\ A_s\end{bmatrix}$. Using the Eckart-Mirsky-Young theorem,
\allowdisplaybreaks
\begin{align*}
\sum_{j \in [s]} \sum_{i > \kappa} \sigma_i^2(A_j \Pi^{\perp}) &= \sum_{j} \min_{V_j, \rank(V_j) \leq \kappa} \|A_j \Pi^{\perp}-V_j\|_F^2 \\
&= \sum_{j} \min_{C_j,D_j \in \R^{d \times \kappa}} \|A_j \Pi^{\perp} - C_j D_j^{\top}\|_F^2 \\
&= \min_{C \in \R^{sd \times \kappa}, D\in \R^{d \times \kappa}} \|A \Pi^{\perp} - C D^{\top}\|_F^2 \\
&= \min_{C \in \R^{sd \times \kappa}, D\in \R^{d \times \kappa}} \|A - A\Pi - C D^{\top}\|_F^2 \\
&\geq \min_{V, \rank(V) \leq \kappa + p} \|A - V\|_F^2 \\
&= \sum_{i > \kappa + p} \sigma_i^2(A) \\
&= \sum_{i > \kappa + p} \lambda_i(\sum_{j} A_j^{\top} A_j)\,.
\end{align*}
\end{proof}

\subsection{Construction of linear functions depending on separate high-rank subspaces}\label{app:sep-relu-2}

\begin{lemma}[Number of unique elements sampled with replacement]\label{lem:helper-16}
Let $p_1,\ldots,p_n \sim \mathrm{Unif}[d]$, and let $X_n = |\{p_i : i \in [n]\}|$ be the number of unique elements.
We have
\begin{align}\label{ineq:sampling-replacement-count-2}
\P[X_n \leq \min(n,d) / 12] \leq \exp(-\min(n,d)/18)\,.
\end{align}
Additionally, for any $0 < \epsilon < 1/2$,  $n \geq 4d \log(1/\epsilon)$, we have
\begin{align}\label{ineq:sampling-replacement-count-1}
\P[X_n \leq d(1-\epsilon)] \leq \exp(-d)\,.
\end{align}

\end{lemma}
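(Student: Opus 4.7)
I will handle the two parts separately.

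For \eqref{ineq:sampling-replacement-count-2} I will use a stochastic-dominance argument driven by the repeat indicator $R_i := \ind\{p_i \in \{p_1,\ldots,p_{i-1}\}\}$. Set $m := \min(n, d)$ and $K := m/12$, and note that $X_i = i - \sum_{j \leq i} R_j$ and $\P[R_i = 1 \mid p_1,\ldots,p_{i-1}] = X_{i-1}/d$. Since $X_m \leq X_n$, on the event $\{X_n \leq K\}$ we have $X_{i-1} \leq K$ for every $i \leq m$, and hence the conditional repeat probability is at most $K/d \leq 1/12$ throughout the first $m$ steps. Introducing the stopping time $\tau := \inf\{i : X_i > K\}$ and extending the sequence $R_1,\ldots,R_m$ past $\tau$ by independent $\mathrm{Bern}(1/12)$ draws, I can stochastically dominate $\sum_{i \leq m} R_i$ by a $\mathrm{Bin}(m, 1/12)$ variable on $\{X_n \leq K\}$. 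Since that event further forces $\sum_{i \leq m} R_i \geq m - K = 11m/12$, a standard Chernoff bound for the upper tail of $\mathrm{Bin}(m, 1/12)$ at level $11m/12$ (a constant factor above its mean $m/12$) yields
\begin{align*}
\P\bigl[X_n \leq K\bigr] \;\leq\; \P\bigl[\mathrm{Bin}(m, 1/12) \geq 11m/12\bigr] \;\leq\; \exp(-m/18).
\end{align*}

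For \eqref{ineq:sampling-replacement-count-1} I will use a union bound on the random set $M := \{j \in [d] : j \notin \{p_1,\ldots,p_n\}\}$ of uncovered elements. For any fixed $T \subseteq [d]$ with $|T| = t$, independence of the $p_i$'s yields $\P[T \subseteq M] = (1-t/d)^n$. Setting $t = \lceil \epsilon d\rceil$ and summing over all $\binom{d}{t}$ such subsets $T$, the usual estimates $\binom{d}{t} \leq (e/\epsilon)^{\epsilon d}$ and $(1-\epsilon)^n \leq e^{-\epsilon n}$ give
\begin{align*}
\P[X_n \leq d(1-\epsilon)] \;=\; \P[|M| \geq \epsilon d] \;\leq\; \exp\bigl(\epsilon d(1+\log(1/\epsilon)) - \epsilon n\bigr).
\end{align*}
Plugging in $n \geq 4d\log(1/\epsilon)$ (and, if necessary, enlarging the leading constant $4$ to a sufficiently large universal constant so as to absorb the additive $\epsilon d$ uniformly in $\epsilon \leq 1/2$) makes the exponent at most $-d$, as claimed.

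\textbf{Main obstacle.} The only nontrivial step is the stopping-time coupling in the first part: the bound $\P[R_i = 1 \mid \mathcal{F}_{i-1}] \leq 1/12$ is only valid on the good event $\{X_{i-1} \leq K\}$, so producing a clean dominating binomial requires splicing in an auxiliary i.i.d.\ Bernoulli sequence past the stopping time $\tau$. Once this is in place, both claims reduce to routine binomial concentration and a union bound, and no new probabilistic machinery is required.
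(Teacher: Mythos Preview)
\textbf{Part \eqref{ineq:sampling-replacement-count-2}.} Your coupling/stopping-time argument is correct and is genuinely different from the paper's route. The paper instead splits into the ranges $n\ge 3d$ (where it reuses part \eqref{ineq:sampling-replacement-count-1}) and $n\le 3d$, and in the latter range applies McDiarmid's bounded-differences inequality to $f(p_1,\dots,p_n)=|\{p_i\}|$, using $\E[f]\ge d(1-e^{-n/d})\ge n/4$. Your approach is more elementary (no McDiarmid, no case split), and the Chernoff bound on $\mathrm{Bin}(m,1/12)$ at level $11m/12$ actually gives a much stronger exponent than the stated $m/18$, so nothing is lost.

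\textbf{Part \eqref{ineq:sampling-replacement-count-1}.} Your union-bound computation is essentially the paper's, but your final line contains a real gap that your proposed fix does not repair. From
\[
\P[X_n\le d(1-\epsilon)]\ \le\ \exp\bigl(\epsilon d(1+\log(1/\epsilon))-\epsilon n\bigr),
\]
making the exponent at most $-d$ requires $n\ge d/\epsilon + d(1+\log(1/\epsilon))$. The dominant obstruction is the $d/\epsilon$ term, and no constant $C$ makes $Cd\log(1/\epsilon)\ge d/\epsilon$ uniformly over $\epsilon\in(0,1/2)$; your parenthetical only addresses the harmless additive $\epsilon d$. In fact the inequality as stated is false: take $\epsilon=1/(2d)$, so that $\{X_n\le d(1-\epsilon)\}=\{X_n<d\}$, and $n=4d\log(2d)$; then $\P[X_n<d]\asymp d\,e^{-n/d}=d(2d)^{-4}\asymp d^{-3}$, which is far larger than $e^{-d}$ for large $d$. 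The paper's own proof trips at exactly this point (its displayed sufficient condition on $n$ omits the $d/\epsilon$ term), and its ``$\epsilon\ge 0.9/d$ without loss of generality'' remark does not rescue it either. What your argument \emph{does} yield is $\exp(-c\,\epsilon d\log(1/\epsilon))$, and for the paper's only downstream use (Lemma~\ref{lem:helper-18}, where $\epsilon$ is a fixed small constant and one only needs $e^{-d/18}$) that is enough.
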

\begin{proof}

First consider the case when $n \geq 4d \log(1/\epsilon)$. The argument is a union bound over all bad events, where below $H(\alpha) = (\alpha \log (1/\alpha) + (1-\alpha) \log(1/(1-\alpha))) / \log 2$ is the binary entropy.
\begin{align*}
\P[X_n \leq d(1-\epsilon)] &= \P[\exists S \subseteq [d], |S| = \ceil{\epsilon d} \mbox{ s.t. } p_i \not\in S\  \forall i \in [n]] \\
&\leq \sum_{S \subseteq [d], |S| = \ceil{\epsilon d}} \P[p_i \not\in S \mbox{ for all } i \in [n]]
\\
&\leq \binom{d}{\ceil{\epsilon d}} (1-\epsilon)^d \\
&\leq 2^{H(d / \ceil{\epsilon d})} (1-\epsilon)^n \\
&= \exp(dH(\epsilon + 1/d) \log 2  - n \log(1/(1-\epsilon)) \\
&= \exp(d H(\epsilon + 1/d) \log 2 - \epsilon n) \\
&\leq \exp(d((\epsilon + 1/d) \log(1/(\epsilon + 1/d)) + (1-\epsilon + 1/d)(\log(1/(1-\epsilon - 1/d)))) - \epsilon n) \\
&\leq \exp(d((\epsilon + 1/d) \log(1/\epsilon) + 0.5) - \epsilon n)\,.
\end{align*}
So if $n \geq 0.5d + \log(1/\epsilon) / \epsilon + 2d \log(1/\epsilon)$, then
$\P[X_n \leq d(1-\epsilon)] \leq \exp(-d)$. Finally, note that $\epsilon \geq 0.9/d$ without loss of generality, so the second bound \eqref{ineq:sampling-replacement-count-1} follows.

Next, we consider the case when we have no guarantee on $n$. In this case, we know from \eqref{ineq:sampling-replacement-count-1} that if $n \geq 3d \geq 4d \log 2$, then $\P[X_n \leq d/2] \leq \exp(-d)$. So we may assume without loss of generality that $n \leq 3d$, and so it suffices to upper-bound
\begin{align*}
\P[X_n \leq \min(n,d)/2] \leq \P[X_n \leq n/6]\,.
\end{align*}
Define $f(p_1,\ldots,p_n) = |\{p_i : i \in [n]\}|$ and note that $|f(p_1,\ldots,p_n) - f(p_1,\ldots,p_{i-1},p'_i,p_{i+1},\ldots,p_n)| \leq 1$ almost surely for any $i \in [n]$ and $p'_i \in [d]$. So by McDiarmid's inequality, for any $t > 0$ we have
\begin{align*}
\exp(-2t^2 / n) &\geq \P[f(p_1,\ldots,p_n) \leq  \E[f(p_1,\ldots,p_n)] - t] \\
&= \P[f(p_1,\ldots,p_n) \leq   d (1-(1-1/d)^n) - t] \\
&\geq \P[f(p_1,\ldots,p_n) \leq   d (1-e^{-n/d}) - t] \\
&\geq \P[f(p_1,\ldots,p_n) \leq   n/4 - t] \\
&= \P[X_n \leq n/4 - t]\,.
\end{align*}
Letting $t = n/6$, we have
$\P[X_n \leq n/12] \leq \exp(-n/18)$.
\end{proof}

We will only need a version of this bound in a special case choice of parameters:
\begin{lemma}[Number of elements sampled without replacement, special case]\label{lem:helper-18}
Let $p_1,\ldots,p_n \sim \mathrm{Unif}[d]$, and let $X_n = |\{p_i : i \in [n]\}|$. Then for any $t \leq 0.99d$ and $n \geq 750t$, we have 
\begin{align*}
\P[X_n \leq t\cdot (1 + 2/10000)] \leq \exp(-\min(d,n)/18)
\end{align*}
\end{lemma}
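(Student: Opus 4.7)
The statement should follow as a direct corollary of Lemma~\ref{lem:helper-16} via a short case analysis. Write $\delta = 2/10000$ and $T = t(1+\delta)$, so the goal is $\P[X_n \leq T] \leq \exp(-\min(n,d)/18)$.

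First I would dispatch the regime $n \leq d$, where $\min(n,d) = n$. The hypothesis $n \geq 750\, t$ gives $n/12 \geq 62.5\, t > T$, so \eqref{ineq:sampling-replacement-count-2} of Lemma~\ref{lem:helper-16} yields $\P[X_n \leq T] \leq \P[X_n \leq n/12] \leq \exp(-n/18)$, which is exactly the desired bound.

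Next suppose $n > d$, so that the target becomes $\exp(-d/18)$. The easy sub-case is $T \leq d/12$: the same appeal to \eqref{ineq:sampling-replacement-count-2} suffices. Otherwise $T > d/12$, and combined with $n \geq 750\, T/(1+\delta)$ this forces $n > 62\, d$. I would then invoke \eqref{ineq:sampling-replacement-count-1} with $\epsilon = \min(1 - T/d,\, 0.49)$, so that the constraint $\epsilon < 1/2$ is built in by construction and $d(1-\epsilon) \geq T$ clearly holds in both branches of the minimum. The hypothesis $t \leq 0.99\, d$ further gives $\epsilon \geq 1 - 0.99(1+\delta) > 0.0098$, hence $\log(1/\epsilon) \leq 5$, so the requirement $n \geq 4 d \log(1/\epsilon)$ is comfortably satisfied by $n > 62\, d$. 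The resulting bound $\P[X_n \leq T] \leq \P[X_n \leq d(1-\epsilon)] \leq \exp(-d)$ is much stronger than the target $\exp(-d/18)$.

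There is no genuine obstacle beyond bookkeeping: the only thing to verify is that the numerical constants $750$, $0.99$, and $\delta = 2/10000$ in the hypotheses leave enough slack to absorb the constants appearing in Lemma~\ref{lem:helper-16} (notably the prefactor $4\log(1/\epsilon_{\min}) \leq 20$, which is comfortably less than $62$). No new probabilistic content is needed.
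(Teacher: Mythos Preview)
Your proof is correct and follows essentially the same approach as the paper: both arguments reduce the claim to the two bounds of Lemma~\ref{lem:helper-16} via a short case analysis. The paper happens to split on whether $t \gtrless d/15$ (two cases) rather than on $n \gtrless d$ followed by $T \gtrless d/12$ (three cases), but the underlying idea and the computations are the same.
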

\begin{proof}
Suppose $t \geq d/15$. Then, for $n \geq 50d \geq 750t$, we have 
$$\P[X_n \leq t(1 + 2 / 10000)] \leq \P[X_n \leq d(1 - 1/100000)] \leq \exp(-d).$$
Suppose $t \leq d/15$. Then, for $n \geq 15t$
\begin{align*}
\P[X_n \leq t(1+2/10000)] \leq \P[X_n \leq \min(n,d)/12] \leq \exp(-\min(n,d)/18)\,.
\end{align*}
\end{proof}

\begin{lemma}[Main construction of matrices inapproximable by small subspaces]\label{lem:helper-19} There are universal constants $C,c > 0$ such that the following holds for any $m$ and any $w \geq C \log m$, $d \geq C k \log m$ such that $kw \leq 0.99d$, and probability measure $\mu$ and disjoint measurable sets $\{U_S \subseteq \R^d\}_{S \in \binom{[m]}{k}}$.

There are matrices $M_1,\ldots,M_m$ satisfying 
$\rank(M_i) \leq w / 2$ such that we have
\begin{itemize}
\item the following upper bound
\begin{align}\label{eq:cost-upper-bound-relu}
\E_{x \sim \mu}[\sum_{S \in \binom{[m]}{k}} 1(x \in U_S) \|\sum_{i \in S} M_ix\|^2] \leq \E_{x \sim \mu}[\|x\|^2]
\end{align}
\item and letting $R = 10^6$, for all $S_1,\ldots,S_{R} \in \binom{[m]}{k}$ such that $|S_1 \cup S_2 \cup \dots S_{R}| \geq 750k$, and projection $\Pi \in \R^{d \times d}$ with $\rank(\Pi) \leq kw$, it holds that
\begin{align}\label{eq:rank-incoherence-relu}
\sum_{j \in [R]} \sum_{i \geq kw/10000}\sigma_i^2(\sum_{l \in S_j} M_l \Pi^{\top}) \geq cd\,.
\end{align}
\end{itemize}
\end{lemma}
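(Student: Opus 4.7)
Take the same low-rank Gaussian construction as in Lemma~\ref{lem:helper-12}, but at half the rank: draw $A_l, B_l \iid \cN(0,1)^{d \times (w/2)}$ and set $M_l = A_l B_l^\top / \sqrt{k d w /2}$, so that $\rank(M_l) \leq w/2$ and $\E[M_l^\top M_l] = \tfrac{1}{k} I_d$. The bound \eqref{eq:cost-upper-bound-relu} then follows from exactly the same Markov-after-expectation argument as in Lemma~\ref{lem:helper-12}: by disjointness of the $U_S$, the distributional symmetry $\sum_{i \in S} M_i \stackrel{d}{=} \sum_{i=1}^k M_i$, and linearity of expectation, the expected value of the LHS of \eqref{eq:cost-upper-bound-relu} equals $\E\|x\|^2$, so a Markov inequality yields \eqref{eq:cost-upper-bound-relu} with probability at least $1/2$ after a harmless constant rescaling.

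\textbf{Reduction via Lemma~\ref{lem:helper-17}.}
For the incoherence property \eqref{eq:rank-incoherence-relu}, I plan to invoke Lemma~\ref{lem:helper-17} with $\kappa = kw/10000$, $s = R$, $A_j = N_j := \sum_{l \in S_j} M_l$, and $p = \rank(\Pi) \leq kw$, which reduces the target to
\begin{align*}
\sum_{i > (1 + 10^{-4}) k w} \lambda_i(T) \geq c d, \qquad T := \sum_{j \in [R]} N_j^\top N_j,
\end{align*}
uniformly over every tuple $(S_1, \ldots, S_R)$ with $|\bigcup_j S_j| \geq 750 k$. A direct computation gives $\E[T] = R \, I_d$, and the hypothesis $kw \leq 0.99 d$ guarantees that the tail sum spans at least $0.009 d$ eigenvalues. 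Consequently it suffices to show the two-sided spectral bound $\tfrac12 R \, I_d \preceq T \preceq (1+10^{-3}) R \, I_d$ with high probability; the resulting trace-versus-operator-norm bookkeeping $\sum_{i>(1+10^{-4})kw} \lambda_i(T) \geq \tr(T) - (1+10^{-3})(1+10^{-4}) R k w \geq \tfrac{Rd}{2} - 0.993\,Rd \cdot (kw/d)$ yields a positive $\Omega(Rd)$ tail provided $kw/d$ stays below a constant strictly less than $1$, which is exactly the role of the $0.99$ in the hypothesis. The numerical constants $R = 10^6$ and $750 k$ are calibrated so that this accounting closes with room to spare.

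\textbf{Spectral concentration and main obstacle.}
For the spectral bound I plan to use a two-stage conditioning. Conditioning first on $B = (B_l)_l$, a direct calculation gives $\E_A[T \mid B] = \tfrac{2}{kw} \sum_l c_l \, B_l B_l^\top$, a Wishart-like sum of $R k w / 2$ rank-one iid Gaussian outer products, weighted by the multiplicities $c_l = |\{j : l \in S_j\}|$. The constraint $|\bigcup_j S_j| \geq 750 k$ enters exactly here, to control the ratio $\|c\|_\infty/\|c\|_1$ and force the effective sample size of this weighted Wishart to be large enough for a matrix Chernoff / Marchenko--Pastur-type bound to deliver $\E_A[T|B] \approx R \, I_d$ with exponential rate. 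Conditional on $B$, the rows of $W = \mathrm{stack}(N_1, \ldots, N_R)$ are Gaussian with block-wise covariances determined by $B$ and are iid across the within-block row index, so $T - \E_A[T \mid B]$ is itself a sum of $d$ iid random matrices, which I would control via a Hanson-Wright / Laurent-Massart argument (using Proposition~\ref{prop:helper-10} and Lemma~\ref{lem:helper-9} style estimates). A final union bound over the at most $\binom{m}{k}^R \leq m^{kR}$ valid tuples closes the argument, the slack coming from $d \geq C k \log m$ with $C$ large in terms of the absolute constant $R = 10^6$. The main technical obstacle is precisely obtaining two-sided spectral concentration tight enough for the trace-versus-operator-norm accounting in the reduction step: off-the-shelf matrix Bernstein bounds only see $kw/d \leq 0.99$ in their exponent and are therefore too weak on their own, so the proof must combine the Gaussianity of $A_l, B_l$ with the combinatorial spread guaranteed by $|\bigcup_j S_j| \geq 750 k$ to recover Marchenko--Pastur-style, rather than generic matrix-Bernstein-style, concentration.
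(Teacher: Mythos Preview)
Your reduction via Lemma~\ref{lem:helper-17} is the right first move, but the spectral concentration step fails, and not for lack of sharpness in matrix Bernstein: the two-sided bound $\tfrac12 R I_d \preceq T \preceq (1+10^{-3})R I_d$ is simply \emph{false} for the Gaussian construction under the stated hypotheses. The constraint $|\bigcup_j S_j|\geq 750k$ does not prevent massive repetition among the $S_j$. Take $S_1=\cdots=S_{R-750}=S$ and let $S_{R-749},\ldots,S_R$ be $750$ pairwise disjoint $k$-sets, also disjoint from $S$; the union has size $751k$. Then $T \succeq (R-750)\,N^\top N$ with $N=\sum_{l\in S} M_l$, and since $\rank(N)\leq kw/2$ while $\E\|N\|_F^2=d$, the largest eigenvalue of $N^\top N$ is at least $2d/(kw)\geq 2$. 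Hence $\lambda_1(T)\gtrsim 2R$, not $(1+10^{-3})R$. Your trace-versus-top-eigenvalue accounting then gives $\tr(T)-(1+10^{-4})kw\cdot\lambda_1(T)\lesssim Rd - 2\cdot 0.99\,Rd<0$, which is vacuous. (In fact even your own displayed inequality $\tfrac{Rd}{2}-0.993\,Rkw$ is already negative at $kw=0.99d$, so the numbers never closed.) The point is that heavy repetition of a single $S$ dumps a rank-$kw/2$ block of size $\Theta(R)$ onto $T$, and no amount of Marchenko--Pastur-style sharpening of the \emph{fresh} randomness can undo that.

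The paper sidesteps this by using a completely different construction: each $M_i$ is a random \emph{diagonal} $0/1$ projector, $M_i=\sum_{j\leq w/2} e_{p_{i,j}}e_{p_{i,j}}^\top$ with $p_{i,j}\sim\mathrm{Unif}[d]$. Then $T$ is diagonal, so its eigenvalues are literally its diagonal entries, and the number of eigenvalues $\geq 1$ equals the number of coordinates hit by $\{p_{l,j}:l\in\bigcup_j S_j\}$. This turns the tail-eigenvalue question into a pure balls-in-bins count (Lemma~\ref{lem:helper-18}): with $\geq 750k\cdot w/2$ uniform samples into $[d]$ and $kw\leq 0.99d$, the number of hit coordinates exceeds $(1+2\cdot 10^{-4})kw$ with failure probability $\exp(-\Omega(\min(d,kw)))$, which beats the $\binom{m}{k}^R$ union bound under $w\geq C\log m$, $d\geq Ck\log m$. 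The upper bound \eqref{eq:cost-upper-bound-relu} follows by the same Markov argument you described. The key conceptual difference is that diagonality makes repetition of the $S_j$'s harmless: it can only inflate already-nonzero diagonal entries, never create phantom top eigenvalues, so one controls the \emph{count} of nonzero eigenvalues directly rather than going through $\lambda_1(T)$.
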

\begin{proof}
Define $w' = \floor{w/2}$. For each $i \in [m]$, and $j \in [w']$, let $p_{i,j} \sim \mathrm{Unif}[d]$, and let 
\begin{align*}
M_i = \sum_{j=1}^{w'} e_{p_{i,j}} e_{p_{i,j}}^{\top}\,.
\end{align*}
By analogous reasoning to the proof of \eqref{ineq:cost-upper-bound}, we have
\begin{align*}
\E_{M_1,\ldots,M_m}&[\E_{x \sim \mu}[\sum_{S \in \binom{[m]}{k}} 1(x \in U_S) \|\sum_{i \in S} M_ix\|^2]] \\
&= \E_{M_1,\ldots,M_m}[\E_{x \sim \mu}[\sum_{S \in \binom{[m]}{k}} 1(x \in U_S) \|\sum_{i=1}^k M_ix\|^2]] \\
&\leq \E_{M_1,\ldots,M_m}[\E_{x \sim \mu}[\|\sum_{i=1}^k M_ix\|^2]] \\
&= \tr(\E_{M_1,\ldots,M_m}[(\sum_{i=1}^k M_i)^{\top} (\sum_{i=1}^k M_i)]\E_{x \sim \mu}[xx^{\top}]) \\
&= \tr(\E_{\{p_{i,j}\}_{i,j}}[(\sum_{i=1}^k \sum_{j=1}^{w'} e_{p_{i,j}} e_{p_{i,j}}^{\top})^{\top} (\sum_{i=1}^k \sum_{j=1}^{w'} e_{p_{i,j}} e_{p_{i,j}}^{\top})]\E_{x \sim \mu}[xx^{\top}]) \\
&= \tr((kw' I_d / d + kw'(kw'-1) I_d / d^2)\E_{x \sim \mu}[xx^{\top}]) \\
&= (kw'/d)\tr((I_d + (kw'-1) I_d/d) \E_{x \sim \mu}[xx^{\top}]) \\
&= (kw'/d)(1+(kw'-1)/d) \E_{x \sim \mu}\|x\|^2 \\
&\leq (2kw'/d)\E_{x \sim \mu}\|x\|^2 \\
&\leq (kw/d)\E_{x \sim \mu}\|x\|^2\,.
\end{align*}

Consider now $S_1,\ldots,S_{R} \in \binom{[m]}{k}$. Let $\tilde{S} = S_1 \cup \dots \cup S_{R}$ and suppose that  $|\tilde{S}| \geq 750k$. For any projection $\Pi \in \R^{d \times d}$ to a ($\leq kw$)-dimensional subspace we have by (a) Lemma~\ref{lem:helper-17}, 
\begin{align}
\sum_{j \in [R]} \sum_{i \geq kw/10000}\sigma_i^2(\sum_{l \in S_j} M_l \Pi^{\top}) \label{ineq:complicated-sum-1} &\stackrel{(a)}{\geq}  \sum_{i \geq kw(1+1/10000)}\lambda_i(\sum_{j \in [R]} (\sum_{l \in S_j} M_l)^{\top} (\sum_{l \in S_j} M_l)) \\
&\geq \sum_{i \geq kw(1+1/10000)}\lambda_i(\diag(1(1 \in \tilde{S}), 1(2 \in \tilde{S}),\dots 1(d \in \tilde{S})) \\
&\geq |\{p_{i,j} : i \in \tilde{S}, j \in [w]\}| - kw(1+1/10000)\,. \label{ineq:complicated-sum-2}
\end{align}
By Lemma~\ref{lem:helper-18}, we know that 
\begin{align*}
\P[|\{p_{i,j} : i \in \tilde{S}, j \in [w]\}| - kw(1+1/10000) \leq kw/10000] \leq \exp(-\min(750kw,d)/18)\,,
\end{align*}
which when combined with equations \eqref{ineq:complicated-sum-1} through \eqref{ineq:complicated-sum-2} and taking a union bound over all $\leq \binom{[m]}{k}^{R}$ choices of sets $S_1,\ldots,S_{R}$, and taking a large enough constant $C$, implies the second part of the lemma. The result as reported in the lemma follows by scaling the matrices by $\sqrt{d/kw}$.

\end{proof}

\subsection{Construction of MoE model for ReLU separation}\label{app:sep-relu-3}

The construction of the $(m,k,w,d)$-MoE model with $\sigma(t) = \max(0,t)$ that we will use to show the separation is the following. We suppose that $d \geq Ck(\log m)^2$ and $w \geq C \log m$ for a large enough constant $C$, and that $kw \leq 0.99d$ so that we can invoke Lemma~\ref{lem:helper-14} and Lemma~\ref{lem:helper-19} to construct the routing vectors and functions on the different pieces respectively. We consider either the Gaussian measure $\mu = \cN(0,I_d/d)$ or the uniform measure on the unit ball $\mu = \mathrm{Unif}[\BB]$.

\paragraph{Construction of routing vectors} First, pick routing vectors $r_1,\ldots,r_m \in \R^d$ as guaranteed by Lemma~\ref{lem:helper-14}, yielding 
disjoint measurable regions $\{U_S \subseteq \R^d\}_{S \in \binom{[m]}{k}}$ on which the top-$k$ experts are active, satisfying
\begin{align}\label{eq:moe-construction-1-2nd}
|\{S : \mu(U_S) > \frac{1}{2\binom{m}{k}}\}| \geq \frac{1}{9} \binom{m}{k}.  
\end{align}

\paragraph{Construction of ReLU functions} Next, even though we have access to the ReLU activation, we will construct a piecewise linear function that is linear  on each section of the network.\footnote{The main difficulty in proving this separation, over proving the separation with $\sigma(t) = t$, is to show that even ReLU MoE models cannot approximate this MoE model.} Let $M_1,\ldots,M_m \in \R^{d \times d}$ be matrices of rank $\leq \floor{w/2}$ satisfying the conditions \eqref{eq:cost-upper-bound-relu} and \eqref{eq:rank-incoherence-relu}, which are guaranteed by Lemma~\ref{lem:helper-19}. We define the mixture-of-experts model
\begin{align}\label{eq:relu-moe-construction}
    f(x) = \sum_{i \in S} M_i x \mbox{ for any } \binom{[m]}{k} \mbox{ and } x \in U_S\,.
\end{align}
Notice that this can be expressed  as a $(m,k,w,d)$-MoE model with ReLU activation $\sigma(t) = \max(0,t)$, since writing $M_i = A_i B_i^{\top}$ for $A_i,B_i \in \R^{d \times \floor{w/2}}$, we have
\begin{align*}
M_i x = A_i B_i^{\top} x = A_i \sigma(B_i^{\top}x) - A_i \sigma(-B_i^{\top} x) = [A_i, -A_i] \sigma([B_i, -B_i]^{\top} x)\,.
\end{align*}

First, we prove that the constructed $(m,k,w,d)$-MoE $f$ has upper-bounded $L^2$ norm with respect to the distribution $\mu$.
\begin{lemma}[Upper-bound on $L^2$ norm of MoE model] The MoE $f$ in \eqref{eq:relu-moe-construction} satisfies
\begin{align*}
\E_{x \sim \mu}\|f(x)\|^2 \leq 1\,.
\end{align*}
\end{lemma}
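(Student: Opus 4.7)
The plan is to mimic the proof of Lemma~\ref{lem:helper-20} essentially verbatim, since the construction of $f$ in the ReLU case is defined to be a linear function on each region $U_S$ (even though it is implemented with ReLU neurons via the $\pm$ trick), and the matrices $M_1,\ldots,M_m$ have been built in Lemma~\ref{lem:helper-19} precisely so that property \eqref{eq:cost-upper-bound-relu} holds. So the ReLU activation plays no role here: only the linear representation of $f$ on each $U_S$ matters.

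Concretely, first I would unfold the expectation by partitioning $\R^d$ according to the regions $U_S$, which are disjoint by \eqref{eq:routing-regions}. Restricted to $U_S$, we have $f(x) = \sum_{i \in S} M_i x$ by definition \eqref{eq:relu-moe-construction}. Thus
\begin{align*}
\E_{x \sim \mu}\|f(x)\|^2 = \sum_{S \in \binom{[m]}{k}} \E_{x \sim \mu}\bigl[1(x \in U_S)\, \|\tsum_{i \in S} M_i x\|^2\bigr] = \E_{x \sim \mu}\bigl[\tsum_{S \in \binom{[m]}{k}} 1(x \in U_S)\, \|\tsum_{i \in S} M_i x\|^2\bigr],
\end{align*}
where the last equality uses disjointness of the $U_S$'s to exchange sum and expectation inside the indicator decomposition.

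Next I would invoke the boundedness property \eqref{eq:cost-upper-bound-relu} guaranteed by Lemma~\ref{lem:helper-19}, which gives
\begin{align*}
\E_{x \sim \mu}\bigl[\tsum_{S \in \binom{[m]}{k}} 1(x \in U_S)\, \|\tsum_{i \in S} M_i x\|^2\bigr] \leq \E_{x \sim \mu}\|x\|^2.
\end{align*}
Finally, I would bound $\E_{x \sim \mu}\|x\|^2$ separately in the two allowed cases for $\mu$: if $\mu = \cN(0, I_d/d)$ then $\E\|x\|^2 = 1$ exactly, while if $\mu = \mathrm{Unif}[\BB]$ then $\|x\|^2 \leq 1$ pointwise so $\E\|x\|^2 \leq 1$. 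Combining the two displays yields the claim. There is no real obstacle here; the work has been absorbed into Lemma~\ref{lem:helper-19}, and all that remains is to verify the input-norm bound, which holds by the choice of normalization of $\mu$.
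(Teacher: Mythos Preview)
The proposal is correct and follows essentially the same argument as the paper: partition into the regions $U_S$, apply the boundedness guarantee \eqref{eq:cost-upper-bound-relu} from Lemma~\ref{lem:helper-19}, and then use $\E_{x\sim\mu}\|x\|^2\leq 1$ for $\mu\in\{\cN(0,I_d/d),\mathrm{Unif}[\BB]\}$. The paper's own proof simply says it is identical to that of Lemma~\ref{lem:helper-20} with \eqref{eq:cost-upper-bound-relu} in place of \eqref{ineq:cost-upper-bound}, which is exactly what you do.
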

\begin{proof}
The proof is identical to the proof of Lemma~\ref{lem:helper-20}, but the guarantee from \eqref{eq:cost-upper-bound-relu} rather than \eqref{ineq:cost-upper-bound}.
\end{proof}

Next, we  show that $f$ is inapproximable by MoEs with too few regions. For convenience, we define a general-routing MoE with  width bounded below.

\begin{definition}\label{def:general-width-w-moe}
A function $g$ is a general-routing width-$s$ MoE with $p$ regions if there are functions $h_1,\ldots,h_p : \R^s \to \R^d$, projection matrices $\Pi_1,\ldots,\Pi_d : \R^{d \times s}$ and measurable sets $V_1,\ldots,V_p$ partitioning $\R^d$ such that
\begin{align*}
g(x) = h_i(\Pi_i x) \mbox{ if } x \in V_i\,.
\end{align*}
\end{definition}
The above definition is for convenience, since it abstracts away some of the structure of MoEs that we will not use to prove our separation (in particular, the linearity of the routing scheme will not be used and the ReLU activation will not be used). Indeed, note that under our notation any $(m',k',w',d)$-MoE (with linear routing functions) and ReLU activation function $\sigma(t) = \max(0,t)$ is a $\binom{m'}{k'}$-region general-routing width-$(k'w')$ MoE.

\begin{lemma}\label{lem:helper-25}There are universal constants $C,c' > 0$ such that for the $(m,k,w,d)$-MoE $f$ defined in \eqref{eq:relu-moe-construction} with $kw \leq 0.99d$, and any general-routing width-$kw$ MoE $g$ with $$p \leq c' \binom{m}{k} / (\binom{750k}{\ceil{k}} \binom{m}{\floor{0.0001k}})$$ regions, we have
\begin{align*}
\E_{x \sim \mu}\|f(x)-g(x)\|^2 \geq c'\,.
\end{align*}
\end{lemma}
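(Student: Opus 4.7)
The argument closely parallels the proof of Lemma~\ref{lem:helper-24} in the linear case, with two substantive changes: (i) the graph-matching argument over pairs $(S,S')$ is replaced by a fractional matching on an $R$-uniform hypergraph with $R = 10^6$ as in Lemma~\ref{lem:helper-19}, and (ii) the per-region lower bound comes from Lemma~\ref{lem:helper-21} (for general nonlinear functions depending on a $(kw)$-dimensional subspace) rather than the linear inequality Lemma~\ref{lem:helper-10}.

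I begin by setting up the ``good'' pairs $\cH = \{(S,i) \in \binom{[m]}{k} \times [p] : \mu(U_S \cap V_i) \in [1/(4\binom{m}{k}p),\ 20/\binom{m}{k}]\}$ and, for each $i \in [p]$, $\cH_i = \{S : (S,i) \in \cH\}$; a counting identical to \eqref{ineq:mu-sum-lower} (using only the routing-vector guarantee \eqref{eq:moe-construction-1-2nd}) yields $\sum_{i \in [p]} \sum_{S \in \cH_i}\mu(U_S \cap V_i) \geq 3/100$. Since $g$ is a general-routing width-$(kw)$ MoE, on each region $V_i$ I may write $g(x) = h_i(\Pi_i x)$ and identify $\Pi_i$ (up to a change of basis) with a projection of rank at most $kw$ in $\R^{d \times d}$. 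For each $(S,i) \in \cH$ we have $\mu(U_S \cap V_i) \geq \exp(-Ck \log m)$ by the hypothesized upper bound on $p$, so applying Lemma~\ref{lem:helper-21} with $\kappa = \ceil{C' k \log m}$, which can be arranged to satisfy $\kappa \leq kw/10000$ by taking $w \geq C'' \log m$ with $C''$ large, yields
\[
\mathrm{err}(S,i) \ :=\ \E_{x \sim \mu|_{U_S \cap V_i}} \|f(x) - g(x)\|^2 \ \geq\ \frac{c}{d}\sum_{l \geq kw/10000}\sigma_l^2\!\left(A_S\, \Pi_i^\perp\right),
\]
where $A_S := \sum_{l \in S} M_l$ and $\Pi_i^\perp$ is the orthogonal complement projection of $\Pi_i$.

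Next, for each $i \in [p]$, I form the $R$-uniform hypergraph on vertex set $\cH_i$ whose hyperedges are the $R$-tuples $e = \{S_1,\ldots,S_R\}$ satisfying $|S_1 \cup \cdots \cup S_R| \geq 750k$, and take a maximal fractional matching $(\xi_e^i)_e$ subject to $\xi_e^i \geq 0$ and $\sum_{e \ni S} \xi_e^i \leq \mu(U_S \cap V_i)$ for every $S \in \cH_i$. Maximality forces the unsaturated vertices to form an independent set in this hypergraph, and in analogy with \eqref{ineq:maximal-fractional-matching-guarantee} this bounds the unsaturated mass per region by $20 \binom{750k}{k}\binom{m}{\floor{0.0001k}} / \binom{m}{k}$. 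For each hyperedge $e$, summing the previous inequality over $j \in [R]$ and invoking Lemma~\ref{lem:helper-19} applied to $\Pi_i$ gives $\sum_{j \in [R]} \mathrm{err}(S_j,i) \geq c$ (no Young's inequality is needed, since Lemma~\ref{lem:helper-19} bounds a direct sum over $j$ rather than a pairwise difference). Weighting by $\xi_e^i$, summing over $e$ and $i$, and using the mass bound $3/100$ together with the hypothesized $p \leq c'\binom{m}{k}/(\binom{750k}{k}\binom{m}{\floor{0.0001k}})$ to absorb the unsaturated defect, yields $\E_{x \sim \mu}\|f(x)-g(x)\|^2 \geq c'$, which together with the bound $\E\|f(x)\|^2 \le 1$ completes the proof.

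The main obstacle will be the combinatorial analysis of the hypergraph matching. Unlike the pairwise case, independence in an $R$-uniform hypergraph does not immediately confine a vertex to a small Hamming ball: I expect to have to show that any family of $k$-sets in $\binom{[m]}{k}$ whose every $R$-fold union has size $< 750k$ must be essentially contained in the $k$-subsets of a fixed $750k$-sized ground set, up to Hamming perturbations of order $\sim 0.0001k$ per set. This structural observation is where the factors $\binom{750k}{k}$ and $\binom{m}{\floor{0.0001k}}$ enter the denominator in the hypothesized bound on $p$, and it crucially uses the hypothesis $m \geq C'k$ from Theorem~\ref{thm:sep-relu} to ensure that generic $R$-tuples of $k$-subsets have unions vastly exceeding $750k$.
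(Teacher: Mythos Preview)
Your proposal is correct and follows essentially the same route as the paper: the same good-pair set $\cH$, the same mass lower bound \eqref{ineq:mu-sum-lower}, the per-region bound from Lemma~\ref{lem:helper-21}, the $R$-uniform hypergraph fractional matching, and Lemma~\ref{lem:helper-19} on each hyperedge. The only substantive difference is in how you plan to bound the number of unsaturated vertices. You frame it as a structural statement (any family with all $R$-fold unions $<750k$ is ``essentially contained'' in the $k$-subsets of a fixed $750k$-ground set up to $0.0001k$-perturbations). The paper instead proves this via a short greedy argument (Claim~\ref{claim:helper-22}): starting from any unsaturated $S_1$, at each step either $|T_s|\ge 750k$ already, or the number of $S$ with $|S\cap T_s|\ge 0.9999k$ is at most $\binom{750k}{\lceil 0.9999k\rceil}\binom{m}{\lfloor 0.0001k\rfloor}$, so by pigeonhole one can pick $S_{j_{s+1}}$ adding at least $0.0001k$ new elements; after $R$ steps the union exceeds $750k$. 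This is slightly more direct than the structural characterization you anticipate, and it shows that the hypothesis $m\ge C'k$ is only needed so that $m\ge 750k$ (making the hyperedge set nonempty), not for any density argument about ``generic'' $R$-tuples.
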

\begin{proof}
Define $\cH = \{(S,i) \in \binom{[m]}{k} \times [p] : \mu(U_S \cap V_i) \in  [1 / (4\binom{m}{k} p), 20/\binom{m}{k}\}$. Next, for any $i \in [p]$, define $\cH_i = \{S : (S,i) \in \cH\}$. By the same reasoning as in \eqref{ineq:mu-sum-lower}, this satisfies the following property, which will be useful later:
\begin{align}
\sum_{i \in [p]} \sum_{S \in \cH_i} \mu(U_S \cap V_i)
&\geq 3/100 \label{ineq:mu-sum-lower-2}\,.
\end{align}

Let $g$ be a $p$-region, width-$w$ MoE, given by functions $h_1,\ldots,h_p : \R^{d} \to \R^d$, projection matrices $\Pi_1,\ldots,\Pi_d : \R^{d \times d}$ of rank at most $w$, and measurable sets $V_1,\ldots,V_p$ partitioning $\R^d$, as in Definition~\ref{def:general-width-w-moe}.

The error in approximating $f$ by $g$ can be lower-bounded by (a) Lemma~\ref{lem:helper-21}, for universal constants $C',c'' > 0$
\begin{align}
\E_{x \sim \mu}\|f(x)-g(x)\|^2 \nonumber
&= \sum_{i \in [p]} \sum_{S \in \binom{[m]}{k}} \mu(V_i \cap U_S)\E_{x \sim \mu|_{V_i \cap U_S}}[\|(\sum_{j \in S} M_j x) - h_i(\Pi_i x)\|^2] \\
&\geq \sum_{(S,i) \in \cH} \mu(V_i \cap U_S)\E_{x \sim \mu|_{V_i \cap U_S}}[\|(\sum_{j \in S} M_j x) - h_i(\Pi_i x)\|^2] \nonumber\\
&\stackrel{(a)}{\geq} \frac{c''}{d} \sum_{(S,i) \in \cH} \mu(V_i \cap U_S) 
\sum_{l \geq C'(1 + \log(1/\mu(V_i \cap U_S)))} \sigma_l^2((\sum_{j \in S} M_j) \Pi_i^{\perp}) \nonumber\\
&\geq \frac{c''}{d} \sum_{(S,i) \in \cH} \mu(V_i \cap U_S) 
\sum_{l \geq C'(1 + \log(1/\mu(V_i \cap U_S)))} \sigma_l^2((\sum_{j \in S} M_j) \Pi_i^{\perp}) \nonumber\\
&\geq \frac{c''}{d} \sum_{(S,i) \in \cH} \mu(V_i \cap U_S) 
\sum_{l \geq C''k \log m} \sigma_l^2((\sum_{j \in S} M_j) \Pi_i^{\perp}) \label{ineq:error-approx-relu-continue}\,,
\end{align}
where we use that $p \leq m^{O(k)}$ in the last line.

Next, let $R = 10^6$ as in Lemma~\ref{lem:helper-19} and define the set of hyperedges $\cE_i = \{(S_1,\ldots,S_{R}) \in \cH_i^{R} : |S_1 \cup S_2 \cup \dots \cup S_{R}| \geq 750k\}$, and for any $i \in [p]$ consider a maximal fractional matching of the graph with vertex set $\cH_i$ and hyper-edge set $\cE_i$, which is a set of weights $w_e^i \geq 0$ for all $e \in \cE_i$, satisfying that for each $S \in \cH_i$, we have
\begin{align*}
\sum_{e \in \cE_i \mbox{ s.t. } S \in e} w_e^i \leq \mu(U_S \cap V_i)\,.
\end{align*}
We have the following claim
\begin{claim}\label{claim:helper-22}
Let $S_1,\ldots,S_{l} \in \cH_i$ be distinct. If $l > \binom{750k}{k} \binom{m}{\floor{0.0001k}}$, then there are $j_1,\ldots,j_{R}$ such that $(S_{j_1},\ldots,S_{j_{R}}) \in \cE_i$. 
\end{claim}
\begin{proof}
We we will construct the hyperedge greedily. Denoting $T_s = \cup_{a \leq s} S_{i_a}$. For convenience, let $T_0 = \emptyset$. Note that for any $s < R$, if $|T_s| \geq 750k$ we are done, because $(S_{j_1},\ldots,S_{j_s},S_1,\ldots,S_1) \in \cH_i$. Otherwise, note that $|\{S \in \cH_i : |S \cap T_s| \geq 0.9999k\}| \leq \binom{750k}{\ceil{0.9999k}} \binom{m}{\floor{0.0001k}} \leq \binom{750k}{k} \binom{m}{\floor{0.0001k}}$, so by the pigeonhole principle there is $j_{s+1}$ such that $|T_{s+1}| \geq 0.0001k + |T_s|$. So in the end we have $T_{R} \geq 0.0001Rk = 1000k \geq 750k$.
\end{proof}
By the above claim, any fractional matching that maximizes $\sum_{e} w_e^i$ must satisfy
\begin{align*}
|\{S \in \cH_i : \sum_{e \in \cE_i \mbox{ s.t. } S \in e} w_e^i < \mu(U_S \cap V_i)\}| \leq \binom{750k}{k} \binom{m}{\floor{0.0001k}}\,,
\end{align*} since otherwise there is a non-saturated hyperedge by Claim~\ref{claim:helper-22}, and the matching can be greedily improved. It follows that by (a) using that $kw / 10000 \geq C'' k \log m$ for large enough constant $C > 0$, and (b) using Lemma~\ref{lem:helper-19},
\allowdisplaybreaks
\begin{align*}
\eqref{ineq:error-approx-relu-continue} &\geq \frac{c''}{d} \sum_{i \in [p]} \sum_{e \in \cE_i} w_e^i \sum_{S \in e}
\sum_{l \geq C''k \log m} \sigma_l^2((\sum_{j \in S} M_j) \Pi_i^{\perp}) \\
&\stackrel{(a)}{\geq} \frac{c''}{d} \sum_{i \in [p]} \sum_{e \in \cE_i} w_e^i \sum_{S \in e}
\sum_{l \geq kw/10000} \sigma_l^2((\sum_{j \in S} M_j) \Pi_i^{\perp}) \\
&\stackrel{(b)}{\geq} \frac{c''}{d} \sum_{i \in [p]} \sum_{e \in \cE_i} w_e^i d \\
&\geq \frac{c'' d}{10000d} \sum_{i \in [p]} \sum_{e \in \cE_i} \sum_{S \in e} w_e^i \\
&\geq \frac{c''}{10000} \sum_{i \in [p]} ((\sum_{S \in \cH_i} \mu(U_S \cap V_i)) - \binom{750k}{k} \binom{m}{\floor{0.0001k}} \cdot \frac{20}{\binom{m}{k}}) \\
&\geq \frac{c''}{10000} (3/100 - p \cdot \binom{750k}{k} \binom{m}{\floor{0.0001k}} \cdot \frac{20}{\binom{m}{k}})\,.
\end{align*}
The lemma follows.
\end{proof}

We also need the following technical bounds showing that the conditions for Lemma~\ref{lem:helper-25} occur under the premise of Theorem~\ref{thm:sep-relu}.

\begin{claim}\label{claim:helper-26}
There is a universal constant $C_0 > 0$ such that for any $m \geq C_0 k$ we have
 $$C_0\binom{m}{k}^{0.0005} \geq \binom{m}{\floor{0.0001k}}.$$
\end{claim}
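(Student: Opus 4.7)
The plan is to prove the claim by standard estimates on binomial coefficients followed by a short algebraic comparison, handling a trivial boundary case separately.

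First I would split into two cases based on the value of $\lfloor 0.0001 k\rfloor$. If $k < 10000$, then $\lfloor 0.0001 k\rfloor = 0$, so $\binom{m}{\lfloor 0.0001 k\rfloor} = 1$ and the desired inequality holds with any $C_0 \geq 1$. So the remaining (interesting) case is $k \geq 10000$, in which $j := \lfloor 0.0001 k\rfloor$ is a positive integer satisfying $j \leq 0.0001k$ and $j \geq 0.0001 k - 1 \geq (0.0001/2) k$, where the last inequality uses $k \geq 20000$ (and one can easily handle $k \in [10000,20000)$ directly since $j \in \{1,2\}$ and the right-hand side is then bounded by a function of $C_0$).

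Second, I would apply the standard bounds $\binom{m}{k} \geq (m/k)^k$ and $\binom{m}{j} \leq (em/j)^j$. Taking logarithms, the desired bound $\binom{m}{j} \leq C_0 \binom{m}{k}^{0.0005}$ reduces to verifying
\begin{align*}
j \log(em/j) \;\leq\; 0.0005\, k \log(m/k) + \log C_0.
\end{align*}
Using $j \leq 0.0001 k$ and $em/j \leq 20000 e\,(m/k)$ (valid in the regime $j \geq 0.00005 k$), the left-hand side is at most $0.0001 k \log(m/k) + 0.0001 k \log(20000 e)$. Subtracting $0.0001 k \log(m/k)$ from both sides, the inequality to prove becomes $0.0001 k \log(20000 e) \leq 0.0004 k \log(m/k) + \log C_0$.

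Finally, I would take $C_0$ large enough that $m \geq C_0 k$ implies $\log(m/k) \geq \log C_0 \geq \tfrac{1}{4} \log(20000 e) \cdot (1 + o(1))$; this makes $0.0004 k \log(m/k) \geq 0.0001 k \log(20000 e)$, so the inequality above holds with room to spare (and the slack absorbs the $k < 20000$ boundary case as well). There is no real obstacle here; the argument is a routine exponent-comparison, and the only care required is tracking that the ratio $0.0005 / 0.0001 = 5$ of exponents on the two binomial coefficients, combined with $\log(m/k)$ being arbitrarily large once $C_0$ is chosen large, dominates the constant loss incurred by the $(em/j)$ vs $(m/k)$ estimate.
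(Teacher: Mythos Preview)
Your proposal is correct and follows essentially the same route as the paper: both arguments reduce to comparing $\log\binom{m}{\lfloor 0.0001k\rfloor}$ against a small multiple of $\log\binom{m}{k}$, then use that $\log(m/k)$ can be made large by choosing $C_0$ large. The only cosmetic difference is that the paper phrases the standard binomial estimates via the binary entropy function $H(\cdot)$, whereas you use the equivalent elementary bounds $\binom{m}{k}\geq (m/k)^k$ and $\binom{m}{j}\leq (em/j)^j$; the arithmetic that follows is the same exponent comparison in both cases.
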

\begin{proof}
Let $H(p)$ denote the binary entropy. We use standard inequalities between the binomial coefficients and the entropy:
\begin{align*}
\log_2 \binom{m}{\floor{0.0001k}} &\leq mH(0.0001k/m) \\
&\leq 10^{-4} k (\log_2(10^4 m/k) + 1/\ln(2) - \log_2(10^{-4})) \\
&\leq 10^{-4} k (\log_2(m/k) + 30) \\
&\leq 2 \cdot 10^{-4} k \log_2(m/k) \\
&\leq 2 \cdot 10^{-4} (k \log_2(m/k) - \log_2(m+1)) \\
&\leq 4 \cdot 10^{-4} (m H(k/m) - \log_2(m+1)) \\
&\leq \binom{m}{k}^{0.0004}\,.
\end{align*}
\end{proof}

\begin{claim}\label{claim:helper-27}
There is a universal constant $C_0$ such that if $m \geq C_0k$, then $$C_0\binom{m}{k}^{0.0005} \geq \binom{750k}{k}.$$
\end{claim}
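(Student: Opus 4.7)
The plan is to follow the template of Claim~\ref{claim:helper-26} essentially verbatim, bounding both sides using standard entropy estimates for binomial coefficients. Let $H(p) = -p\log_2 p - (1-p)\log_2(1-p)$ denote the binary entropy, as in the proof of Claim~\ref{claim:helper-23}. Applying the standard bound $\binom{n}{\alpha n} \leq 2^{n H(\alpha)}$ with $n = 750k$ and $\alpha = 1/750$ yields
\[
\binom{750k}{k} \leq 2^{750k \cdot H(1/750)} = 2^{C_1 k},
\]
where $C_1 := 750 \cdot H(1/750)$ is an absolute numerical constant (a direct computation shows $C_1 \leq 11$).

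For the lower bound on $\binom{m}{k}^{0.0005}$, I would use the elementary inequality $\binom{m}{k} \geq (m/(ek))^k$. This gives $\binom{m}{k}^{0.0005} \geq (m/(ek))^{0.0005 k}$, so under the hypothesis $m \geq C_0 k$ (with $C_0 \geq e$) we obtain
\[
C_0 \binom{m}{k}^{0.0005} \geq C_0 \cdot (C_0/e)^{0.0005 k} = 2^{\log_2 C_0 + 0.0005 k \log_2 (C_0/e)}.
\]
It remains to arrange that this quantity dominates $2^{C_1 k}$ for every $k \geq 1$. Since the left-hand exponent is at least $0.0005 k \log_2(C_0/e)$, it suffices to choose $C_0$ so that $0.0005 \log_2(C_0/e) \geq C_1$, i.e. $C_0 \geq e \cdot 2^{2000 C_1}$. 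Any such choice of $C_0$ is a (large but finite) universal constant, and the claim follows.

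There is no genuine mathematical obstacle in this argument; the only step worth checking carefully is the bookkeeping of constants, in particular that $750 \cdot H(1/750)$ is an absolute numerical quantity, which is immediate from the definition of $H$.
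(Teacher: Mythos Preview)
Your proof is correct and follows essentially the same approach as the paper: both arguments upper bound $\binom{750k}{k}$ by $2^{O(k)}$ via the entropy inequality and lower bound $\binom{m}{k}$ so that the exponent $0.0005\,k\log_2(m/k)$ dominates once $m/k\ge C_0$. Your version is in fact a bit cleaner: you use the elementary bound $\binom{m}{k}\ge (m/k)^k$ directly, which lets you avoid the paper's case split on $k\le 1000$ and the $-\log_2(m+1)$ correction term coming from the entropy lower bound.
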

\begin{proof}
First, it is clear that in the case $k \leq 1000$ we are done, so we may assume $k \geq 1000$. Let $H(p)$ denote the binary entropy. We use standard inequalities between the binomial coefficients and the binary entropy. Letting $C_0 > 0$ be large enough that $0.0002\log_2(m/k) \geq 1$, and that $k \log_2(m/k) \geq 2\log(m+1)$, we have 
\begin{align*}
\log_2 \binom{k}{750k} &\leq k \\
&\leq 0.0002 k \log_2(m/k) \\
&\leq 0.0004 (k \log_2(m/k) - \log_2(m+1))\\
&\leq 0.0004(mH(k/m) - \log_2(m)) \\
&\leq \binom{m}{k}^{0.0004}\,.
\end{align*}
\end{proof}

Finally, Theorem~\ref{thm:sep-relu} follows as a corollary of Lemma~\ref{lem:helper-25} and the above claims.
\begin{proof}[Proof of Theorem~\ref{thm:sep-linear}]
By Claims~\ref{claim:helper-26} and \ref{claim:helper-27}, there are universal $c_0 > 0$ and $\epsilon_0 > 0$ such that if $0 < \epsilon < \epsilon_0$ and $0 < c < c_0$ then $c\binom{m}{k}^{0.99} \leq \binom{m}{k} / (\binom{750k}{k} \binom{m}{\floor{0.0001k}})$. Thus, the theorem follows from Lemma~\ref{lem:helper-25} by letting $c > 0$ in the construction be  small enough.
\end{proof}

\section{Experimental details}\label{app:experimental-details}

Experiments were run on an A40 48GB GPU. The experiments in Figure~\ref{fig:fixed-sparsity} ran in about 10 GPU hours. The experiments in Figure~\ref{fig:varying-sparsity} ran in about 8 GPU hours.

\paragraph{Experimental details for Figures~\ref{fig:fixed-sparsity} and \ref{fig:varying-sparsity}} In both of these figures, the input and output dimension is $d = 256$. The teacher model has $kw = 256$ active neurons, and the student model has $k'w' = 288$ active neurons, and the student model has $k'w' = 320$ active neurons. This extra overparametrization of 25\% active parameters, respectively, helps with optimization when the parameters in the student models are matched they sometimes cannot match the teacher models with the same architecture. For each student-teacher setup we try two learning rates in \{0.01,0.001\} with cosine learning rate decay, batch size 2048, and about 26M data points. For each data point, we report the results with the learning rate in \{0.01,0.001\} that leads to the best fit. In Figure~\ref{fig:varying-sparsity}, we add a trainable bias parameter to the student model's routing vectors, which is not present in Figure~\ref{fig:fixed-sparsity}.

Code is available at \url{https://github.com/eboix/moe_granularity_separation}.

\paragraph{Extra compute and unreported experiments} Manually tuning the hyperparameters and debugging the code took under 3 GPU hours. We also ran an experiment analogous to Figure~\ref{fig:varying-sparsity}, which also took about 8 GPU hours, but without a trainable bias term in the routing vectors. There, we had trouble optimizing the student model even at granularity 8, which motivated adding the bias term to allow models such as 256e8a to ignore all but 16 of their experts, and therefore fit a 16e8a teacher model.
Finally, we also ran an experiment analogous to Figure~\ref{fig:fixed-sparsity} with 12.5\% student overparametrization. This took 3 hours to run and gives the same results as with 25\% student overparametrization and is available in the Github but we do not report it here because it is redundant.

\bibliography{bibliography}
\bibliographystyle{alpha}

\end{document}